\def\eqref#1{equation~\ref{#1}}
\def\1{\bm{1}}
\def\rl{{\textnormal{l}}}
\DeclareMathAlphabet{\mathsfit}{\encodingdefault}{\sfdefault}{m}{sl}
\SetMathAlphabet{\mathsfit}{bold}{\encodingdefault}{\sfdefault}{bx}{n}
\newcommand{\Var}{\mathrm{Var}}
\newcommand{\Cov}{\mathrm{Cov}}
\DeclareMathOperator\erf{erf}
\DeclareMathOperator\erfc{erfc}
\newcommand{\rstretch}[1]{\renewcommand{\arraystretch}{#1}}
\newcommand\astered[1]{\overset{\ast}{#1}}
\definecolor{mygreen}{rgb}{0.2, 0.7, 0.2}
\definecolor{myorange}{rgb}{0.9, 0.5, 0.0}
\definecolor{mypurple}{rgb}{0.5, 0, 0.5}
\newcommand{\nameCustom}[1]{{\textsc{#1}}\xspace}
\newcommand{\bo}{\nameCustom{bo}}
\newcommand{\rlCustom}{\nameCustom{rl}}
\newcommand{\hdbo}{\nameCustom{hdbo}}
\newcommand{\hagpucb}{\nameCustom{dss-gp-ucb}}
\newcommand{\marl}{\nameCustom{marl}}
\newcommand{\maps}{\nameCustom{maps}}
\newcommand{\pomdp}{\nameCustom{pomdp}}
\newcommand{\roma}{\nameCustom{roma}}
\newcommand{\homodel}{\nameCustom{hom}}
\newcommand{\gen}{\nameCustom{gen}}
\theoremstyle{plain}
\newtheorem{assumption}{Assumption}
\newtheorem{lemma}{Lemma}
\newtheorem{corollary}{Corollary}
\title{Dependency Structure Search Bayesian Optimization\\ for Decision Making Models}
\author{\name Mohit Rajpal \email mohitr@comp.nus.edu.sg \\
      \addr Department of Computer Science\\
      National University of Singapore
      \AND
      \name Lac Gia Tran \email tranlac@comp.nus.edu.sg \\
      \addr Department of Computer Science\\
      National University of Singapore
      \AND
      \name Yehong Zhang \email zhangyh02@pcl.ac.cn \\
      \addr Peng Cheng Lab\\
      \AND
      \name Bryan Kian Hsiang Low \email lowkh@comp.nus.edu.sg \\
      \addr Department of Computer Science\\
      National University of Singapore
      }
\begin{document}
\maketitle

\begin{abstract}
Many approaches for optimizing decision making models rely on gradient based methods requiring informative feedback from the environment. However, in the case where such feedback is sparse or uninformative, such approaches may result in poor performance.
Derivative-free approaches such as Bayesian Optimization mitigate the dependency on the quality of gradient feedback, but are known to scale poorly in the high-dimension setting of complex decision making models. This problem is exacerbated if the model requires interactions between several agents cooperating to accomplish a shared goal.
To address the dimensionality challenge, we propose a compact multi-layered architecture modeling the dynamics of agent interactions through the concept of role.
We introduce Dependency Structure Search Bayesian Optimization to efficiently optimize the multi-layered architecture parameterized by a large number of parameters, and show an improved regret bound. Our approach shows strong empirical results under malformed or sparse reward.
\end{abstract}
\section{Introduction}
Decision Making Models choose sequences of actions to accomplish a goal. Multi-Agent Decision Making Models choose actions for multiple agents working together towards a shared goal. Multi-Agent Reinforcement Learning (\marl) has emerged as a competitive approach for optimizing Decision Making Models in the multi-agent setting.\footnote{We include an overview of approaches in Decision Making Models in Section \ref{sec:rw}.} \marl optimizes a \emph{policy} under the partially observable Markov Decision Process (\pomdp) framework, where decision making happens in an \emph{environment} determined by a set of possible states and actions, and the \emph{reward} for an action is conditioned upon the partially observable state of the environment. A policy forms a set of decision-making rules capturing the most rewarding actions in a given state.
\marl utilizes gradient-based methods requiring \textcolor{black}{informative gradients to make progress. This approach benefits from dense reward, which allows reinforcement learning methods to infer a causal relationship between individual actions and their corresponding reward. This feedback may not be present in the scenario of sparse reward\citep{pathak2017curiosity, qian2021derivative}.} In addition, gradient-based methods are susceptible to falling into local maxima.

\textcolor{black}{In contrast to optimization by \marl, Bayesian Optimization (\bo) offers an alternative approach to policy optimization. Since \bo is a gradient-free optimizer capable of searching globally, applying \bo to multi-agent policy search (\maps) both ensures global searching of the policy, and overcomes poor gradient behavior in the reward function \citep{qian2021derivative}. The chief challenge in \bo for \maps is the high dimensionality of complex multi-agent interactions.}

\textcolor{black}{A significant degree of high-dimensional multi-agent interactions exist in \maps.
For example, considering an autonomous drone delivery system, several agents (i.e., drones) must work together to maximize the throughput of deliveries.
In doing so, these agents may separate themselves into different roles, for example, long-distance or short-distance deliveries.
The optimal policy for each role may be significantly different due to distances to recharging base stations (e.g., drones must conserve battery). 
In forming the optimal policy, the \emph{interaction} between agents must be considered to both optimally divide the task between the drones, as well as coordinate actions between drones (e.g., collision avoidance). These interactions may change over time. For example, a drone must avoid collision with nearby drones, which changes as it moves through the environment. With many agents, these interactions become more complex.
}

\textcolor{black}{However, we propose the usage of \bo for \maps on memory-constrained devices which necessitates very compact policies which enables the possibility of overcoming the above limitation. In the context of memory-constrained devices such as Internet of Things (IoT) devices~\citep{iotpaper}, small policies must be used. Secondly, in environments with sparse reward feedback, training these networks with \rlCustom presents significant challenges due to unhelpful policy gradients. Finally, the possibility of \emph{globally optimizing} a compact policy for memory-constrained systems is appealing due to its strong performance guarantees.}

\textcolor{black}{To allow for the construction of compact policies, we utilize specific multi-agent abstractions of \emph{role} and \emph{role interaction}.
In role-based multi-agent interactions, an agent's policy depends on its current role and sparse interactions with other agents.
By simplifying the policy space with these abstractions, we increase its tractability for global optimization by \bo and inherit the strong empirical performance demonstrated by these approaches.
We realize this simplification of the policy space by expressing the role abstraction and role interaction abstractions as immutable portions of the policy space, which are not searched over during policy optimization.
To achieve this, we use a higher-order model (\homodel) which \emph{generates} a policy model. The \homodel is divided into immutable instructions (i.e., algorithms) corresponding to the abstractions of the role and role interaction and mutable parameters that are used to generate (\gen) a policy model during evaluation.}

\textcolor{black}{To optimize our proposed \homodel, we specialize \bo by exploiting task-specific structures. 
A promising avenue of High-dimensional Bayesian Optimization (\hdbo) is through additive decomposition.
Additive decomposition separates a high-dimensional optimization problem into several independent low-dimensional sub-problems~\citep{duvenaud11additive,kandasamy15additive}.
These sub-problems are independently solved thus reducing the complexity of high dimensional optimization. However, a significant challenge in additive decomposition is \emph{learning the independence structure} which is unknown a-priori.
Learning the additive decomposition is accomplished using stochastic sampling such as Gibbs sampling~\citep{kandasamy15additive, rolland18overlapadd, han2020high} which is known to have poor performance in high dimensions~\citep{johnsonhogwild, hogwild2}.}

In our work, we overcome this shortcoming by observing the \gen process of the \homodel.
In particular, we can measure a surrogate Hessian during the \gen process which significantly simplifies the task of learning the additive structure. \textcolor{black}{This surrogate Hessian informs the dependency structure of the optimization problem due to the equivalence between a zero Hessian value, and independence between dimensions due to the linearity of addition.}
We term this approach Dependency Structure Search GP-UCB (\hagpucb) and visualize our approach in Fig. \ref{fig:additiveexample}. Our proposed \textcolor{black}{\bo} approach is also applicable to policy-search in the single-agent setting, showing its general-purpose applicability in Decision Making Models. In this work, we make the following contributions:
\begin{itemize}[leftmargin=20pt]
    \item We propose a parameter-efficient \homodel for \maps which is both expressive and compact. 
    Our approach is made feasible by using specific abstractions of \emph{roles} and \emph{role interactions}.
    \item We propose \hagpucb, a variant of \bo that simplifies the learning of dependency structure and provides strong regret guarantees which scale with $\mathcal{O}(\log (D))$ under reasonable assumptions.
    \item We validate our approach on several multi-agent benchmarks and show our approach outperforms related works for compact models fit for memory-constrained scenarios.
    Our \hagpucb also overcomes \textcolor{black}{sparse reward} behavior in the reward function in multiple settings showing its effectiveness in Decision Making Models both in the single-agent and multi-agent settings.
\end{itemize}

\section{Background}
\paragraph{Bayesian Optimization:} Bayesian optimization (\bo) involves sequentially maximizing an unknown objective function $v: {\Theta} \rightarrow \mathbb{R}$. 
In each iteration $t=1,\ldots,T$, an input query $\mathbf{\theta}_t$ is evaluated to yield a noisy observation $y_t \triangleq v(\theta_t) + \epsilon$ with i.~i.~d.~Gaussian noise $\epsilon\sim\mathcal{N}(0, \sigma^2)$. 
\bo selects input queries to approach the global maximizer $\theta^{\ast} \triangleq \arg\max_{\theta \in {\Theta}} v(\theta)$ as rapidly as possible.
This is achieved by minimizing \emph{cumulative} regret $R_T \triangleq \sum_{t=1}^{T} r(\theta_t)$, where $r(\theta_t) \triangleq v(\theta^{\ast}) - v(\theta_t)$. \textcolor{black}{Cumulative regret is a key performance metric of \bo methods.}

The probability distribution of $v$ is modeled by a \emph{Gaussian process }(GP), denoted $\text{GP}\left(\mu(\theta), k(\theta, \theta')\right)$, that is, every finite subset of $\{v(\mathbf{\theta})\}_{\mathbf{\theta}\in {{\Theta}}}$ follows a multivariate Gaussian distribution~\citep{rasmussen2004gaussian}.
A GP is fully specified by its \emph{prior} mean $\mu(\mathbf{\theta})$ and covariance $k(\mathbf{\theta},\mathbf{\theta}')$ for all $\mathbf{\theta},\mathbf{\theta}'\in {\Theta}$, which are, respectively, assumed w.l.o.g.~to be $\mu(\mathbf{\theta})=0$ and $k(\mathbf{\theta},\mathbf{\theta}')\leq 1$. 
Given a vector $\mathbf{y}_{T}\triangleq [y_t]^{\top}_{t=1,\ldots,T}$ of noisy observations from evaluating $v$ at input queries $\mathbf{\theta}_1,\ldots,\mathbf{\theta}_T\in{\Theta}$ after $T$ iterations, 
the GP posterior probability distribution of $v$ at some input $\mathbf{\theta}\in{\Theta}$ is a Gaussian with the following \emph{posterior}
mean $\mu_T^k(\mathbf{\theta})$ and variance $[\sigma_T^k]^2(\mathbf{\theta})$:
\begin{equation}
\begin{gathered}
\mu_T^k(\mathbf{\theta})\triangleq\displaystyle\mathbf{k}_T^k(\mathbf{\theta})^\top(\mathbf{K}_T^k+\sigma^2\mathbf{I})^{-1}\mathbf{y}_{T}, \,\,\,\,\,\,\,\,
    \left[\sigma_T^k\right]^2(\mathbf{\theta})\triangleq\displaystyle k(\mathbf{\theta},\mathbf{\theta})-\mathbf{k}_T^k(\mathbf{\theta})^\top(\mathbf{K}_T^k+\sigma^2\mathbf{I})^{-1}\mathbf{k}_T^k(\mathbf{\theta})
\label{gp_posterior}
\end{gathered}
\end{equation}
where $\mathbf{K}_T^k\triangleq [k(\mathbf{\theta}_t,\mathbf{\theta}_{t'})]_{t,t'=1,\ldots,T}$ and $\mathbf{k}_T^k(\mathbf{\theta})\triangleq [k(\mathbf{\theta}_t,\mathbf{\theta})]^\top_{t=1,\ldots,T}$.
In each iteration $t$ of \bo, an input query $\mathbf{\theta}_t\in{\Theta}$ is selected to maximize the GP-UCB acquisition function, $\mathbf{\theta}_{t}\triangleq\arg\max_{\mathbf{\theta} \in {\Theta}} \mu_{t-1}(\mathbf{\theta})+\sqrt{\beta_t}\sigma_{t-1}(\mathbf{\theta})$ \citep{srinivas2009gaussian} where $\beta_t$ follows a well defined pattern.

\vspace{-5pt}
\section{Related work}
\label{sec:rw}
\vspace{-5pt}

\paragraph{Decision Making Models:} Decision Making Models~\citep{rizk2018decision, roijers2013survey} determine actions taken by an agent or agents in order to achieve a goal. We focus on the \pomdp setting and optimizing a policy to accumulate maximum reward while interacting with a partially observable environment~\citep{shani2013survey}. Many approaches exist which can be broadly categorized into direct policy search and reinforcement learning methods. Direct policy search~\citep{heidrich2008evolution, globalbo1, globalbo4,papavasileiou2021systematic, wierstra2008fitness} searches the policy space in some efficient manner. Reinforcement learning~\citep{rlsurvey, fujimoto2018addressing, haarnoja2018soft, lillicrap2015continuous, lowe2017multi, mnih2015human, schulman2017proximal} starts with a randomly initialized policy and \emph{reinforces} rewarding behavior patterns to improve the policy.
\vspace{-10pt}
\paragraph{Bayesian Optimization for Decision Making Models:} \bo has been utilized for direct policy search in the low dimensional setting~\citep{globalbo1, globalbo2, globalbo3, globalbo4, globalbo5}.
However, these approaches have not scaled to the high dimensional setting. 
In more recent works, \bo has been utilized to aid in local search methods similar to reinforcement learning~\citep{localbo1, localbo2, localbo3, localbo4, gibo}.
However, these approaches require evaluation of an inordinate number of policies typical of local search methods and do not provide regret guarantees.
Recently, combinations of local and global search methods have been proposed~\citep{localglobalbo1, localglobalbo2}.
However, these approaches rely on informative and useful gradient information and have not been shown to scale to the high dimensional setting. 

\vspace{-10pt}
\paragraph{MARL for multi-agent decision making:}

A well-known approach for cooperative \marl is a combination of centralized training and decentralized execution (CTDE) \citep{oliehoek2008optimal}.
The multi-agent interactions of CTDE methods can be implicitly captured by learning approximate models of other agents \citep{lowe2017multi, foerster2018counterfactual} or decomposing global rewards \citep{sunehag2017value, rashid2018qmix, son2019qtran}.
However, these methods do not focus on how interactions are performed between agents. In \marl, the concept of \emph{role} is often leveraged to enhance the flexibility of behavioral representation while controlling the complexity of the design of agents \citep{lhaksmana2018role, wang2020roma, rodepaper, cdspaper}. 
Our approach is related to the study of \citep{lassignref} where the interactions are also captured by role assignment.
However, the approach operates on an imitation learning scenario, and the role assignment depends on the heuristic from domain knowledge. Another related field is Comm-\marl~\citep{zhu2022survey,
shao2022self,
liu2020multi,
peng2017multiagent,
DBLP:conf/icml/DasGRBPRP19,
DBLP:conf/iclr/SinghJS19}, where agents are allowed to communicate during policy execution to jointly decide on an action. In contrast, our approach utilizes both abstractions of role and role interaction in a \homodel for a decision making model.

\vspace{-5pt}
\section{Design}
\vspace{-5pt}

We consider the problem of learning the joint policy of a set of $n$ agents working cooperatively to solve a common task. During each interaction with the environment, each agent $i$ is associated with a state $\mathbf{s}^i \in {\cal{S}}^i$ with the global state represented as $\mathbf{s} \triangleq [\mathbf{s}^i]_{i = 1,\ldots,n}$. Each agent $i$ cooperatively chooses an action $\mathbf{a}^i \in {\cal{A}}^i$ with the global action represented by $\mathbf{a} \triangleq [\mathbf{a}^i]_{i = 1,\ldots,n}$. Each state, action pair is associated with a \emph{reward} function: $\rho(\mathbf{s}, \mathbf{a})$. In order to achieve the common task, a policy parameterized by ${\theta}$: $\pi^{\theta} \triangleq {\cal{S}} \rightarrow {\cal{A}}$ governs the action taken by the agents, after observing state $\mathbf{s} \in {\cal{S}}$. The goal of \rlCustom is to learn the optimal policy parameters that \textcolor{black}{maximizes the accumulation of rewards during a predefined number of interactions with the environment,\footnote{Further \rlCustom overview can be found in~\citet{rlsurvey}.}} $v(\theta)$. \textcolor{black}{In contrast to \rlCustom, which receives feedback on the reward of an action with every interaction, we treat $v(\theta)$ as an opaque function measuring the \emph{value} of a policy. We utilize \bo to optimize $\theta$ using solely the accumulated reward, $v(\theta)$, as feedback from the environment.}
\subsection{Architectural design}
To achieve a compact and tractable policy space, we consider policies under the useful abstractions of \emph{role} and \emph{role interaction}. These abstractions have consistently shown strong performance in multi-agent tasks. Therefore, we can simplify the policy space by limiting it to only policies using these abstractions, \textcolor{black}{but still have powerful and expressive policies suitable for multi-agent systems.}

As role and role interaction are immutable abstractions within our policy space, we express them as \emph{static algorithms} which are not searched over during policy optimization. These algorithms take as input parameters which are mutable and searched over during policy optimization. This combination of immutable instructions, and mutable parameters reduces the size of the search space,\footnote{\textcolor{black}{This approach to efficiency is similar in spirit to the work of \citet{lee1986machine}.}} yet is still able to express policies which conform to the role and role interaction abstractions.

We term this approach a \emph{higher-order model} (\homodel) which generates (\gen) the model using instructions and parameters into a policy model during evaluation. This \homodel is separated into role assignment, and role interaction stages. We visualize an overview of this approach in Fig. \ref{fig:metamodelarch}, left. \textcolor{black}{The \homodel parameters} are interpreted in context of the current state by the instructions (Alg.~1, Alg.~2, \textcolor{black}{Alg.~3}) of the \homodel to form the policy model which dictates the resultant action. In our work, each \homodel component of role assignment and role interaction is implemented as a neural network.

\subsection{Role assignment}
Following the success of role based collaboration in multi-agent systems, we assume the interaction and decision making of each agent is governed by its assigned role. \textcolor{black}{For example, in drone delivery, roles could be short-distance deliveries, and long-distance deliveries. In filling these roles, the state of each of the agents are considered. E.g., a drone with low battery may be limited to only performing short-distance deliveries. A straightforward approach to implement role based interaction is to permute agents into an equivalent number of roles.\footnote{This is a common assumption in multi-agent systems, see, e.g., \citet{lassignref}.} We assume that an optimal policy can be decomposed as follows:} 

\vspace{-5pt}
\begin{equation}
\begin{gathered}
\pi(\mathbf{a}^1, \ldots, \mathbf{a}^n \mid \mathbf{s}^1, \ldots, \mathbf{s}^n) \triangleq 
\pi_{r}(\mathbf{a}^{\alpha(1)}, \ldots, \mathbf{a}^{\alpha(n)} \mid \mathbf{s}^{\alpha(1)}, \ldots \mathbf{s}^{\alpha(n)})
\end{gathered}
\end{equation}
\vspace{-5pt}

where $\alpha$ is a permutation function dependent on the state, $\mathbf{s}^1, \ldots, \mathbf{s}^n$. \textcolor{black}{Our approach to role assignment is simple and general purpose, which is also well studied and theoretically principled.}

To capture this behavior, we \textcolor{black}{utilize} a per role affinity function: $\Lambda^{\theta_{r,i}}(\cdot)$ which is the affinity to take on role $i$ and is parameterized by $\theta_{r,i}$. This function evaluates the affinity of agent $\ell$ taking on role $i$ using the state of agent: $\mathbf{s}^{\ell}$. The optimal permutation maximizes the total affinity of an assignment: $\sum_{i=1}^{n} \Lambda^{\theta_{r,i}}(\mathbf{s}^{\alpha(i)})$ where $\alpha$ represents a permutation. This problem can be efficiently solved using the Hungarian \textcolor{black}{algorithm~\citep{kuhn1955hungarian}.} We integrate the Hungarian algorithm in our \homodel approach during the \gen process. We formalize this in Algorithm \ref{alg:roleassign} which forms the instructions in the role assignment \homodel.

Given Algorithm \ref{alg:roleassign}, during \gen process, the agents' state, $\mathbf{s}^1, \ldots, \mathbf{s}^n$ is contextually interpreted to yield a permutation model: $\alpha$. Going forward, we consider the problem of determining the joint policy $\pi_{r}(\mathbf{a}^{\alpha(1)}, \ldots, \mathbf{a}^{\alpha(n)} \mid \mathbf{s}^{\alpha(1)}, \ldots \mathbf{s}^{\alpha(n)})$ which enables collaborative interactions.

\subsection{Role interaction}
Capturing multiple roles working together is an important part of an effective multi-agent policy. For example in drone delivery, drones must both divide the available task among themselves, as well as use collision avoidance while executing deliveries. Modeling role interactions must accomplish two goals. Firstly, agent interactions may change over time. For example collision avoidance strategies involve the closest drones which change as the drone moves within the environment. Secondly, efficient parameterization is needed as the number of interactions \textcolor{black}{can scale exponentially due to considering interactions between many agents.}

\begin{wrapfigure}{r}{0.5\textwidth}
    \vspace{-5pt}
    \begin{minipage}{.30\textwidth}
        \centering
        \includegraphics[scale=0.85]{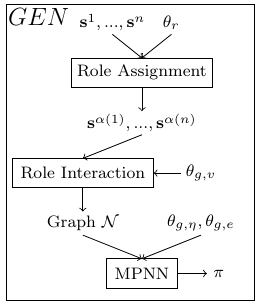}
    \end{minipage}\begin{minipage}{.20\textwidth}
        \begin{center}
        \hspace{-25pt}
        \begin{tikzpicture}
        \node[obs] (r) {$\mathbf{s}^\alpha$} ;
        \node[latent,below=20mm of r] (z)  {$\mathbf{a}^\alpha$} ;
        \plate {rplt} {(r)} {$n$} ;
        \plate {zplt} {(z)} {{\tiny MRF}  $n$} ;
        \edge{r}{z} ;
        \end{tikzpicture}
        \end{center}
    \end{minipage}
    \captionof{figure}{Left: \homodel architecture. \gen uses $\theta_r$ and $\theta_g$ during evaluation to yield a model which represents the policy. $\theta_r$ and $\theta_g$ are optimized by \bo. Right: Inferring $\mathbf{a}^{\alpha}$ given $\mathbf{s}^{\alpha}$.}
    \label{fig:metamodelarch}
    \vspace{-10pt}
\end{wrapfigure}

To overcome these challenges, we propose a \homodel which generates (\gen) a graphical model. \textcolor{black}{The usage of a graphical model decomposes the exponentially scaling interaction problem into a pairwise interaction model, along with a message passing approach to facilitate complex interactions between many agents.\footnote{We refer readers to~\citet{wang13graphical} for additional overview on graphical models.}} The \gen process is conditioned on the agents' state, \textcolor{black}{thus enabling dynamic role interactions}; in addition the \gen process allows for a more compact policy space with far fewer parameters. The resultant generated graphical model captures the state-dependent interaction between roles and yields the resultant actions for each role. After \gen, the interaction between roles are captured by the resultant conditional random field. This is presented in Fig. \ref{fig:metamodelarch}, right. The MRF (Markov Random Field) represents arbitrary undirected connectivity between nodes $\mathbf{a}^{\alpha(1)}, \ldots, \mathbf{a}^{\alpha(n)}$, which is denoted by ${\cal{G}}$. This connectivity allows different roles to collaborate together to determine the joint action.
To generate graphical models of the above form, our \homodel uses edge affinity functions\textcolor{black}{, $\Lambda^{\theta_{g,v}}(\cdot)$, which enables dynamic arbitrary connectivity between roles. For all pairs of roles with state, $\mathbf{s}^{\alpha(i)}, \mathbf{s}^{\alpha(\ell)}$ an edge is generated if the affinity between these two states is sufficiently high (i.e., $>0$).} This dynamic edge generation approach overcomes the quadratic parameter scaling if all pairs of agents were separately modelled. The graphical model \gen process is presented in Algorithm \ref{alg:roleinter} \textcolor{black}{which yields a graphical model.}

\textcolor{black}{To cooperatively determine a set of actions for roles given the graphical model,} we perform inference over the graphical model presented in Fig.~\ref{fig:metamodelarch} using Message Passing Neural Networks~\citep{gilmpnn17} (MPNN). We present iterative message passing rules to map from $\mathbf{s^{\alpha}}$ to $a^{\alpha}$:

\vspace{-5pt}
\begin{equation}
    \label{eq:compkxx}
    \begin{gathered}
        m_{t+1}^{\alpha(i)} \triangleq \!\!\!\!\!\!\!\! \sum_{\alpha(\ell) \in N^{\alpha(i)}} \!\!\!\!\!\!M^{\theta_{g, \eta}}\left(h_{t}^{\alpha(i)}, h_{t}^{\alpha(\ell)}, i, \ell \right);\,\,\,\,
        h_{t+1}^{\alpha(i)} \triangleq U^{\theta_{g, e}}\left(\mathbf{s^{\alpha(i)}}, h_{t}^{\alpha(i)}, m_{t+1}^{\alpha(i)}  \right);\,\,\,\,
        \mathbf{a}^{\alpha} \triangleq \left[h_{\tau}^{\alpha(i)}\right]_{i=1,\ldots,n}
    \end{gathered}
\end{equation}
\vspace{-5pt}

\textcolor{black}{where $M$ is the message function parameterized by ${\theta_{g, \eta}}$ which enables interaction between connected nodes, $U$ is the action update function parameterized by ${\theta_{g, e}}$ which updates the node's internal hidden state conditioned on the messages received, and $N^{\alpha(i)}$ denotes the neighbors of $\alpha(i)$.} \textcolor{black}{The message passing procedure allows for cooperative determination of all roles' actions using pairwise message passing. Roles which are not immediate neighbors of each other influence each other's behavior through intermediary connecting nodes.} The message passing procedure concludes after $\tau$ iterations of message passing with the policy actions indicated by the hidden states, $\big[h_{\tau}^{\alpha(i)}\big]_{i=1,\ldots,n}$.

\textcolor{black}{Finally, Algorithm \ref{alg:policyeval} drives the \gen process. The \gen process consists of permuting agents into roles, creating the graphical model to enable interactions between agents taking on their respective roles, and finally performing inference over the graphical model using a MPNN.}

\subsection{Additive decomposition}
\label{sec:adddecomp}
Although our \homodel policy representation is compact, it is still of significant dimensionality which makes optimization with \bo difficult.~\hdbo is challenging due to the curse of dimensionality with common kernels such as Matern or RBF.\footnote{A parallel area in \hdbo is of computational efficiency of acquisition which is outside the scope of this work. We refer readers to the works of \citet{mutny18efficient}, \citet{DBLP:conf/icml/WilsonBTMD20}, and \citet{DBLP:conf/icml/AmentG22}.} \textcolor{black}{This curse of dimensionality stems directly from the difficulty of finding the global optima of a high-dimensional function (e.g., a value function $v(\theta)$ determining the value of a policy in some unknown environment).} A common technique to overcome this is through assuming additive structural decomposition on $v$: $v(\mathbf{\theta}) \triangleq \sum_{i=1}^{M} v^{(i)}(\mathbf{\theta}^{(i)})$ where $v^{(i)}$ are independent functions, and $\mathbf{\theta}^{(i)} \in {{\Theta}}^{(i)}$~\citep{duvenaud11additive}. \textcolor{black}{The additive decomposition simplifies a high-dimensional optimization problem since the optima of a function constructed through addition of subfunctions can be found by independently optimizing each subfunction as visualized in Fig. \ref{fig:additiveexample}. In the context of \bo, additive decomposition significantly simplifies the optimization problem due to the properties of Multivariate Gaussian variables.}

\begin{wrapfigure}{r}{0.5\textwidth}
    \centering
    \vspace{10pt}
    \includegraphics[scale=0.15]{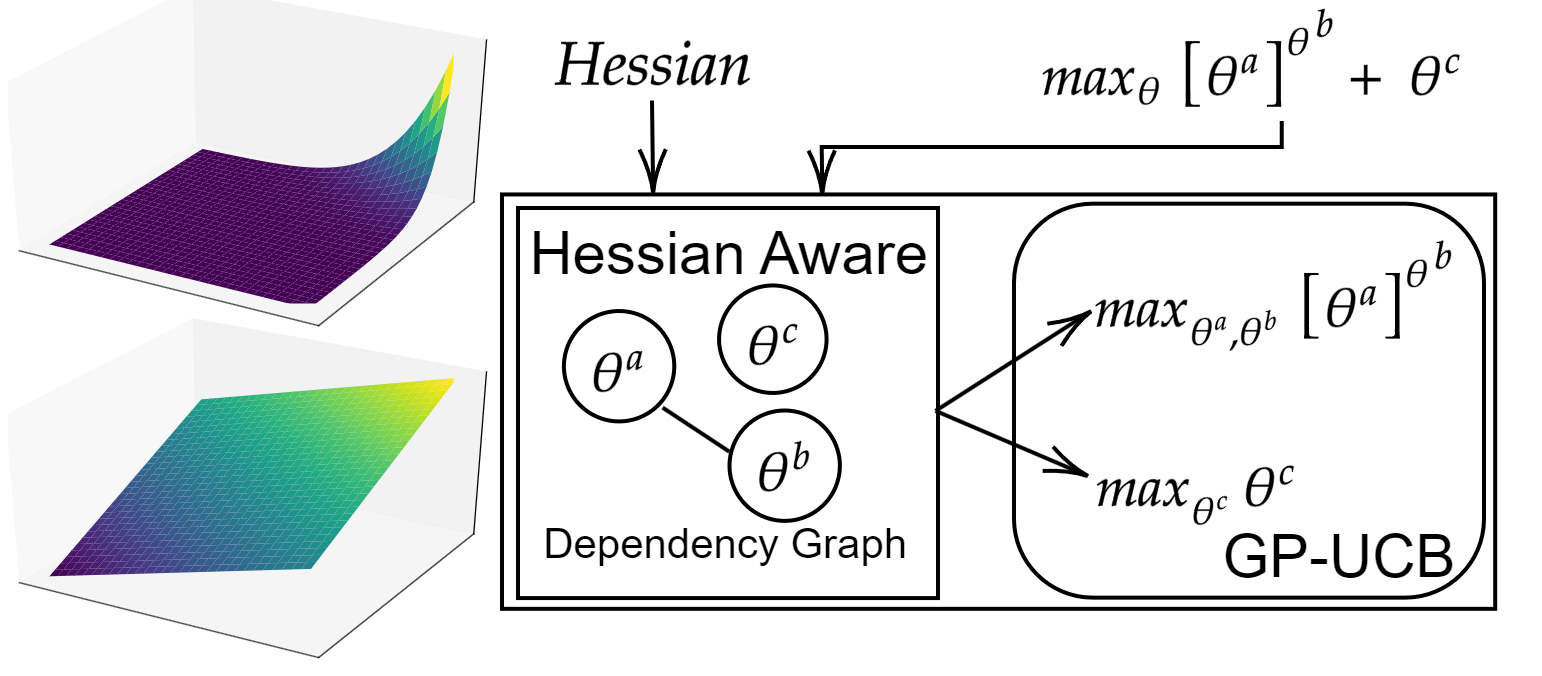}
    \captionof{figure}{Left, above, plot of $f(x, y) = {x}^{y}$; below, plot of $f(x,y) = x+y$. The curvature of \emph{additively constructed functions} is zero; \emph{non-zero curvature} indicates dependency among input variables. Right, examining the Hessian learns the dependency structure which decomposes complex problems into simpler problems solved by GP-UCB.}
    \label{fig:additiveexample}
    \vspace{-12pt}
\end{wrapfigure}

\textcolor{black}{In additive decomposition we denote the domain of the optimization problem, ${{\Theta}} \triangleq {{\Theta}}^{1} \times \ldots \times {{\Theta}}^{D}$ for some dimensionality $D$, that is the domain is constructed through the Cartesian product of each of its dimensions. Each subfunction to optimize, $v^{(i)}$, corresponds to a subdomain restricted to some subset of these dimensions, ${{\Theta}}^{(i)} \subseteq \{{{\Theta}}^1, \ldots, {{\Theta}}^D\}$. Typically, it is assumed that each ${{\Theta}}^{(i)}$ is of low dimensionality (i.e., $v^{(i)}$ is defined on only a few dimensions for each $i$).} This structural assumption is combined with the assumption that each $v^{(i)}$ is sampled from a GP. \textcolor{black}{Due to the properties of Multivariate Gaussians,} if $v^{(i)}\sim\text{GP}\big(0, k^{{{{\Theta}}^{(i)}}}(\theta^{(i)}, {\theta^{{(i)}^{'}}})\big)$ then $v\sim\text{GP}\big(0, \sum_{i} k^{{{{\Theta}}^{(i)}}}(\theta^{(i)}, {\theta^{{(i)}^{'}}})\big)$~\citep{rasmussen2004gaussian}, \textcolor{black}{which follows from the addition of two Gaussian random variables is also a Gaussian random variable}. This assumption decomposes a high dimensional GP surrogate model of $v$ into a set of many low dimensional GPs, which is easier to jointly learn and optimize.

\textcolor{black}{To contextualize an additive decomposition, we represent the decomposition} by a dependency graph between the dimensions: ${\cal{G}}_{d} \triangleq (V_d, E_d)$ where $V_d \triangleq \{ {{\Theta}}^{1} , \ldots, {{\Theta}}^{D} \}$ and $E_d \triangleq \{({{\Theta}}^{a}, {{\Theta}}^{b}) \mid a,b \in {{\Theta}}^{(i)}\;\text{for some} \;i\}$. \textcolor{black}{A simple decomposition of an additive function and its associated dependency graph is visualized in Fig. \ref{fig:additiveexample}.} \emph{We \textbf{highlight} that this graph is between the dimensions of the policy parameters, ${\Theta}$, and is unrelated to the graphical model of role interactions presented in earlier sections.} It is possible to accurately model $v$ by a kernel $k \triangleq \sum_{i} k^{{{\Theta}}^{(i)}}$ where each ${{\Theta}}^{(i)}$ corresponds to a \emph{maximal clique} of the dependency graph~\citep{rolland18overlapadd}. Knowing the dependency graph greatly simplifies the complexity of optimizing $v$.

However, learning the dependency graph in additive decomposition remains challenging as there are $\mathcal{O}(D^2)$ possible edges each of which may be present or absent yielding $2^{\mathcal{O}(D^2)}$ possible dependency structures. This difficult problem is often approached using inefficient stochastic sampling methods such as Gibbs sampling.

\begin{figure*}[t]
\begin{minipage}{0.65\textwidth}
\setcounter{algorithm}{0}
\begin{algorithm}[H]
\footnotesize
\caption{$Role Assignment$}\label{alg:roleassign}
\begin{algorithmic}[1]
  \Require{$\mathbf{s}^1, \ldots, \mathbf{s}^n$}
  \State \textbf{return} ${\arg\max}_{\alpha} \sum_{i=1}^{n} \Lambda^{\theta_{r,i}}(\mathbf{s}^{\alpha(i)})$
\end{algorithmic}
\end{algorithm}
\vspace{-20pt}
\setcounter{algorithm}{3}
\begin{algorithm}[H]
\footnotesize
\caption{\hagpucb}\label{alg:addhagpucb}
\begin{algorithmic}[1]
\Require{$v, H, k$}
\For{$t \gets 1,\ldots, T_0$} \Comment{\textcolor{black}{Sample Hessian $T_0 \times C_1$ times for dependencies.}}
\State $\theta_{t,h} \sim {\cal{U}}({{\Theta}})$ \Comment{\textcolor{black}{Randomly sample over the domain.}}
    \State \textbf{for} $\ell \gets 1,\ldots, C_1$ \textbf{do} $h_{t,\ell} \gets H(\theta_{t,h})$
\EndFor
\State $\widetilde{E}_d \gets \left\lvert \sum{h} \right\rvert  > c_h; \widetilde{{\cal{G}}}_d \gets (\{ {{\Theta}}^{1}, \ldots ,{{\Theta}}^{D} \}, \widetilde{E}_d)$ \Comment{\textcolor{black}{Discriminate dependencies}}
\State $[{{\Theta}}^{(i)}]_{i=1,\ldots,M} \gets \text{Max-Cliques}(\widetilde{{\cal{G}}}_d); k \gets \sum_{i=1}^{M} k^{{{\Theta}}^{(i)}}$ \Comment{\textcolor{black}{Compute Max-Cliques}}
\For{$t \gets T_0,\ldots, T$} \Comment{\textcolor{black}{Run GP-UCB with dependency structure}}
\State $\theta_t \gets \arg\max_{\theta} \mu_{t-1}^k(\theta)+\sqrt{\beta_t}\sigma_{t-1}^{k}(\theta)$ \Comment{\textcolor{black}{Max-Cliques additive kernel}}
\State Query $\theta_t$ to observe $y_t = v(\theta_t) + {\cal{N}}(0, \epsilon^2)$
\State Update posterior, $\mu$, $\sigma$, with $\theta_t, y_t$
\EndFor
\end{algorithmic}
\end{algorithm}
\end{minipage}
\hspace{-3.4pt}
\begin{minipage}{0.35\textwidth}
\setcounter{algorithm}{1}
\begin{algorithm}[H]
\footnotesize
\caption{$Role Interaction$}\label{alg:roleinter}
\begin{algorithmic}[1]
\Require{$\mathbf{s}^{\alpha(1)}, \ldots, \mathbf{s}^{\alpha(n)}$}
\For{$i \gets 1,\ldots, n$}
\For {$\ell \gets 1, \ldots, n$}\Comment{Edge affinities.}
\If{$\Lambda^{\theta_{g,v}}(\mathbf{s}^{\alpha(i)}, \mathbf{s}^{\alpha(\ell)}) > 0$}
    \State $N^{\alpha(i)}.append(\alpha(\ell))$
\EndIf
\EndFor
\EndFor
\State \textbf{return} $N^{\alpha(1)}, \ldots, N^{\alpha(n)}$
\end{algorithmic}
\end{algorithm}
\vspace{-4.5pt}
\setcounter{algorithm}{2}
\begin{algorithm}[H]
\footnotesize
\caption{\gen-$Policy$}\label{alg:policyeval}
\begin{algorithmic}[1]
\Require{$\mathbf{s}^1, \ldots, \mathbf{s}^n$}
\State $\alpha \gets RoleAssignment(\mathbf{s}^1, \ldots, \mathbf{s}^n)$
\State $N \gets RoleInteraction(\mathbf{s}^{\alpha(1)}, \ldots, \mathbf{s}^{\alpha(n)})$
\State $\mathbf{a} \gets \text{MPNN}(\mathbf{s}^{\alpha}, N)$\Comment{See Eq. \ref{eq:compkxx}}
\State \textbf{return} $\mathbf[{a}^{\alpha^{-1}(i)}]_{i=1,\ldots,n}$
\end{algorithmic}
\end{algorithm}
\end{minipage}
\vspace{-10pt}
\end{figure*}

\subsection{Dependency Structure Search Bayesian Optimization}
\label{sec:habo}

We propose learning the dependency structure during the \gen process. \textcolor{black}{Our proposed approach is based on the following observation, which is illustrated in Fig. \ref{fig:additiveexample}.}

\begin{restatable}{proposition}{realpropone}
\label{prop:realpropone}
Let ${\cal{G}}_d = (V_d, E_d)$ represent an additive dependency structure with respect to $v(\theta)$, then the following holds true: $\forall a,b \; \frac{\partial^2 v}{\partial {\theta}^a \partial {\theta}^b} \neq 0 \implies ({{\Theta}}^a, {{\Theta}}^b) \in E_d$ which is a consequence of $v$ formed through addition of independent sub-functions $v^{(i)}$, at least one of which must contain ${\theta}^a, {\theta}^b$ as parameters for $\frac{\partial^2 v}{\partial {\theta}^a \partial {\theta}^b} \neq 0$ which implies their connectivity within $E_d$.
\end{restatable}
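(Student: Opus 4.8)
The plan is to argue directly from the additive decomposition $v(\theta) = \sum_{i=1}^{M} v^{(i)}(\theta^{(i)})$ that underlies ${\cal{G}}_d$, combining linearity of differentiation with the fact that each $v^{(i)}$ depends only on the coordinates indexed by ${{\Theta}}^{(i)}$. First I would fix the decomposition $\{(v^{(i)}, {{\Theta}}^{(i)})\}_{i=1,\ldots,M}$ that defines $E_d$, and record the regularity hypothesis inherited from \Cref{sec:adddecomp}: each $v^{(i)}$ is a GP sample with a twice-differentiable kernel (e.g.\ the RBF or a sufficiently smooth Matern kernel), so $v \in C^2$, the mixed partial $\partial^2 v / \partial \theta^a \partial \theta^b$ exists, and differentiation commutes with the finite sum, giving $\frac{\partial^2 v}{\partial \theta^a \partial \theta^b} = \sum_{i=1}^{M} \frac{\partial^2 v^{(i)}}{\partial \theta^a \partial \theta^b}$.

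The key step is the observation that $\frac{\partial^2 v^{(i)}}{\partial \theta^a \partial \theta^b} \equiv 0$ whenever $a \notin {{\Theta}}^{(i)}$ or $b \notin {{\Theta}}^{(i)}$: if coordinate $a$ is not among the arguments of $v^{(i)}$ then $\partial v^{(i)} / \partial \theta^a = 0$ identically, hence so is its further derivative in $\theta^b$; the case $b \notin {{\Theta}}^{(i)}$ is symmetric. Therefore the only summands that can contribute to $\frac{\partial^2 v}{\partial \theta^a \partial \theta^b}$ are those with $\{a,b\} \subseteq {{\Theta}}^{(i)}$. Contrapositively, if no such index $i$ exists then every summand vanishes and $\frac{\partial^2 v}{\partial \theta^a \partial \theta^b} \equiv 0$; hence $\frac{\partial^2 v}{\partial \theta^a \partial \theta^b} \neq 0$ forces the existence of some $i$ with $a,b \in {{\Theta}}^{(i)}$. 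By the definition $E_d \triangleq \{({{\Theta}}^a, {{\Theta}}^b) \mid a,b \in {{\Theta}}^{(i)}\ \text{for some}\ i\}$, this immediately gives $({{\Theta}}^a, {{\Theta}}^b) \in E_d$, which is the claim; the degenerate case $a=b$ is either excluded by convention or is a self-loop that lies trivially in $E_d$.

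I do not expect a serious obstacle: the statement is essentially linearity of differentiation applied to the additive model. The only points needing mild care are (i) making the $C^2$ assumption explicit so the Hessian is well defined, (ii) being precise that $E_d$ is taken relative to the \emph{fixed} decomposition that also defines the $v^{(i)}$, so the implication is internally consistent, and (iii) flagging that the converse is false in general — a vanishing mixed partial does not preclude an edge — which is precisely why the surrogate-Hessian test in \Cref{alg:addhagpucb} is used only to \emph{certify} edges that are definitely present rather than to rule edges out. I would close by noting that this one-directional implication is exactly what licenses reading off (a subset of) $E_d$ from sampled entries of the surrogate Hessian $H$ in \hagpucb.
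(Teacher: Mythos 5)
Your proposal is correct and follows essentially the same route as the paper's own proof: the paper also argues by contrapositive, using linearity of the partial derivative over the additive decomposition and the fact that a mixed partial of $v^{(i)}$ vanishes unless both $\theta^a$ and $\theta^b$ are among its arguments. Your added remarks on $C^2$ regularity and the failure of the converse are sensible refinements but do not change the argument.
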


\textcolor{black}{In practice, observing the Hessian of the value function, $\mathbf{H}_v$, is not possible due to $v$ being an opaque function. However, during the \gen process we can observe the Hessian of the policy, $\mathbf{H}_{\pi}$. This surrogate Hessian is closely related to the $\mathbf{H}_v$ as $v(\theta)$ is determined through interaction of the policy with an unknown environment. Because the \emph{value} of a policy is a function of the policy; it follows by the chain rule that $\mathbf{H}_{\pi}$ is an important sub-component of $\mathbf{H}_v$. We utilize the surrogate Hessian in our work and demonstrate its strong empirical performance in validation.} Following this reasoning, we consider algorithms with noisy query access to the Hessian, $\mathbf{H}_v$. \textcolor{black}{Note that we assume that the surrogate Hessian, $\mathbf{H}_\pi$, can well serve as a noisy surrogate for the true Hessian, $\mathbf{H}_v$.\footnote{We revisit the validity of this assumption in Appendix \ref{app:hessremark}.}}

\begin{assumption}
\label{assump:erenyi}
Let ${\cal{G}}_d = (V_d, E_d)$ be sampled from an Erd\H{o}s-R\'enyi model with probability $p_g < 1$: ${\cal{G}}_d \sim G(D,p_g)$. That is, each edge $({{\Theta}}^a, {{\Theta}}^b)$ is i.i.d.~sampled from a binomial distribution with probability, $p_g$. With $[{{\Theta}}^{(i)}]_{i=1,\ldots,M}$ representing the maximal cliques of ${\cal{G}}_d$, we assume that $v\sim\text{GP}\left(0, \sum_{i} k^{{{{\Theta}}^{(i)}}}(\theta^{(i)}, \theta^{{(i)}^{'}})\right)$ for some kernel $k$ taking an arbitrary number of arguments (e.g., RBF). Noisy queries can be made to the Hessian of $v$, $\mathbf{H}_v$. We define $H(\theta) \triangleq [\frac{\partial^2 v}{\partial\theta^a\partial\theta^b} + \epsilon_{h}^{(a,b)}]_{a,b=1,\ldots,D}$ where $\epsilon_{h}^{(a,b)} \sim {\cal{N}}(0, \sigma_n^2)$ i.i.d. Each query to $H$ has corresponding regret of $r(\theta)$.
\end{assumption}

\textcolor{black}{Under this assumption, we show that it's possible to learn the underlying dependency structure of ${\cal{G}}_d = (V_d, E_d)$ with a polynomial number of queries to the noisy Hessian. We present \hagpucb in Algorithm \ref{alg:addhagpucb} and prove theoretical results regarding its performance. In the first stage of \hagpucb, we perform $C_1$ queries to the Hessian if $t \leq T_0$. These Hessian queries are then averaged and compared to a cutoff constant $c_h$ to determine the dependency structure $\widetilde{E}_d$. We show that after $C_1T_0$ queries to the Hessian, with high probability we have $\widetilde{E}_d = E_d$, where $E_d$ is the unknown ground truth dependency structure for $v$. This argument is formalized in the following theorem.}

\begin{restatable}{theorem}{theoremone}
\label{thm:thm1}
Suppose\footnote{RBF kernel satisfies these assumptions when ${{\Theta}}=[0,1]^D$.} there exists  $\sigma^2_h, p_h$  s.t. $\forall i,j$  $\mathbb{P}_{\theta \sim {\mathcal{U}}({{\Theta}})}\left[k^{\partial i\partial j}(\theta, \theta) \geq \sigma^2_h \right] \geq p_h$ and $\forall i,j, \theta,\theta'$ $k^{\partial i\partial j}(\theta, \theta') \geq 0$. Then for any $\delta_1,\delta_2 \in (0,1)$ after $t \geq T_0$ steps of \text{\hagpucb} we have: $\bigcap_{i,j} P(\widetilde{E}_d^{i,j} = E_d^{i,j}) \geq 1 - \delta_1 - \delta_2$ when $T_0 = C_1 > \frac{16D^2}{p_h\delta_1^2}\log{\frac{2D^2}{\delta_1}} \frac{\sigma_n^2}{\sigma_h^2} + \frac{D^2}{2\delta_2}$, $c_h \triangleq T_0 \sigma_n \sqrt{2\log{\frac{2 D^2}{\delta_1}}}$.
\end{restatable}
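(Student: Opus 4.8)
The plan is to show $\widetilde E_d = E_d$ entrywise with high probability by splitting the bad event into \emph{false positives} ($(i,j)\notin E_d$ but $|\sum h^{(i,j)}| > c_h$) and \emph{false negatives} ($(i,j)\in E_d$ but $|\sum h^{(i,j)}| \le c_h$) and controlling each over all $\le D^2$ candidate pairs by a union bound. The starting observation is that the summed estimate $S_{ij}\triangleq\sum_{t=1}^{T_0}\sum_{\ell=1}^{C_1}h_{t,\ell}^{(i,j)}$ decomposes as $C_1\sum_{t=1}^{T_0}\frac{\partial^2 v}{\partial\theta^i\partial\theta^j}(\theta_{t,h}) + \mathcal{N}(0, T_0 C_1\sigma_n^2)$, i.e.\ a ``signal'' term plus independent Gaussian noise, since the $C_1$ repeated queries at a fixed $\theta_{t,h}$ share the same Hessian value but carry fresh i.i.d.\ noise (Assumption~\ref{assump:erenyi}).

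For false positives I would invoke the contrapositive of Proposition~\ref{prop:realpropone}: if $(i,j)\notin E_d$ then $\frac{\partial^2 v}{\partial\theta^i\partial\theta^j}\equiv 0$, so $S_{ij}\sim\mathcal{N}(0,T_0 C_1\sigma_n^2)=\mathcal{N}(0,T_0^2\sigma_n^2)$ is pure noise (using $C_1=T_0$). The threshold $c_h = T_0\sigma_n\sqrt{2\log(2D^2/\delta_1)}$ is calibrated so that the subgaussian tail bound $\mathbb{P}(|\mathcal{N}(0,\sigma^2)| > u)\le 2e^{-u^2/(2\sigma^2)}$ gives $\mathbb{P}(|S_{ij}| > c_h)\le \delta_1/D^2$; a union bound over the $\le D^2$ non-edges yields false-positive probability $\le\delta_1$. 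This half is essentially routine.

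The substance — and the step I expect to be the obstacle — is the false-negative bound. If $(i,j)\in E_d$ then $g\triangleq\frac{\partial^2 v}{\partial\theta^i\partial\theta^j}$ is a \emph{nonzero} but \emph{zero-mean} Gaussian process with covariance $k^{\partial i\partial j}$, so one cannot argue that ``the signal pushes $S_{ij}$ away from $0$''; rather, conditioned on the sample points, $S_{ij}$ is itself centered Gaussian, $S_{ij}\sim\mathcal{N}(0,\Sigma_{ij})$ with $\Sigma_{ij} = C_1^2\sum_{t,t'}k^{\partial i\partial j}(\theta_{t,h},\theta_{t',h}) + T_0 C_1\sigma_n^2$, and the argument must run through Gaussian \emph{anti-concentration}, $\mathbb{P}(|S_{ij}|\le c_h)\le 2c_h/\sqrt{2\pi\,\Sigma_{ij}}$. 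Making this small needs a provable lower bound on $\Sigma_{ij}$, and here the two non-trivial hypotheses enter. First, the nonnegativity $k^{\partial i\partial j}(\cdot,\cdot)\ge 0$ lets me discard the off-diagonal cross-terms without sign cancellation, so $\Sigma_{ij}\ge C_1^2\sum_{t} k^{\partial i\partial j}(\theta_{t,h},\theta_{t,h})$. Second, since $\mathbb{P}_{\theta\sim\mathcal{U}(\Theta)}[k^{\partial i\partial j}(\theta,\theta)\ge\sigma_h^2]\ge p_h$, the count $N_{ij}\triangleq|\{t:k^{\partial i\partial j}(\theta_{t,h},\theta_{t,h})\ge\sigma_h^2\}|$ stochastically dominates $\mathrm{Binomial}(T_0,p_h)$; a Chebyshev (or Chernoff) estimate, together with the $D^2/(2\delta_2)$ term in the lower bound on $T_0$, forces $N_{ij}\ge\tfrac12 p_h T_0$ simultaneously for all true edges outside an event of probability $\le\delta_2$ (up to harmless splitting of $\delta_2$). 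On that event $\Sigma_{ij}\ge\tfrac12 p_h C_1^2 T_0\sigma_h^2=\tfrac12 p_h T_0^3\sigma_h^2$, so $\sqrt{\Sigma_{ij}}$ dominates $c_h\asymp T_0\sigma_n\sqrt{\log(D^2/\delta_1)}$ precisely when $T_0$ exceeds the first displayed term $\Theta\!\big(\tfrac{D^2}{p_h\delta_1^2}\log\tfrac{D^2}{\delta_1}\tfrac{\sigma_n^2}{\sigma_h^2}\big)$; the anti-concentration bound then makes $\mathbb{P}(|S_{ij}|\le c_h)$ at most the per-edge budget, and a union bound over true edges closes the argument.

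Combining the two halves gives $\mathbb{P}(\widetilde E_d\ne E_d)\le\delta_1+\delta_2$, i.e.\ $\mathbb{P}\big(\bigcap_{i,j}\{\widetilde E_d^{i,j}=E_d^{i,j}\}\big)\ge 1-\delta_1-\delta_2$; and since all Hessian sampling is confined to the first $T_0$ iterations, $\widetilde E_d$ is already fixed for every $t\ge T_0$, as required. The only genuinely delicate points beyond constant-chasing are the exchange of differentiation with GP sampling (legitimate for mean-square-differentiable kernels such as the RBF of the footnote) and that the hypothesis on $k^{\partial i\partial j}$ must be read as a property of the base kernel, whose restriction to the clique containing $(i,j)$ governs $\Var(g)$ — automatic for product kernels; the remainder is bookkeeping to recover the exact forms of $T_0$ and $c_h$.
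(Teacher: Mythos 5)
Your proposal is correct and follows essentially the same route as the paper's proof: model the summed Hessian queries as a centered Gaussian whose variance is pure noise ($T_0^2\sigma_n^2$) when no edge exists and is inflated by at least $N_{ij}C_1^2\sigma_h^2$ when an edge exists (Bienaym\'e's identity plus the nonnegativity of $k^{\partial i\partial j}$, with the count $N_{ij}$ of high-variance sample points controlled by a Chebyshev-type binomial tail bound that absorbs the $D^2/(2\delta_2)$ slack), then discriminate with the threshold $c_h$ via a subgaussian tail on the low-variance side and Gaussian anti-concentration on the high-variance side, finishing with a union bound over the $D^2$ pairs. Your density-based small-ball estimate and Chebyshev count bound are exactly the paper's Lemma \ref{lem:halfnormall} (erf/erfc bounds) and its Feller binomial bound in different clothing, so the two arguments coincide up to constant bookkeeping.
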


\textcolor{black}{Our Theorem \ref{thm:thm1} relies on repeatedly sampling the Hessian to determine whether an edge exists between ${\Theta}^a$, and ${\Theta}^b$ in the sampled additive decomposition. The key challenge is determining this connectivity under a very noisy setting, and for extremely low values of $\sigma_h^2 \ll \sigma_n^2$ where the Hessian is zero with high probability. We are able to overcome this challenge using a Bienaym\'e's identity, a key tool in our analysis. We defer all proofs to the Appendix.}

\textcolor{black}{In the second stage of \hagpucb, we extract the maximal cliques depending on $\widetilde{E}_d$ and construct the GP kernel, $k = \sum_{i} k^{{{\Theta}}^{(i)}}$, the sum of the aforementioned kernels and inference and acquisition proceeds same as GP-UCB (lines 6-9).}

To bound the cumulative regret, $R_t \triangleq \sum_{t=1}^{T_0} C_1 r(\theta_{t,h}) + \sum_{t=T_0}^{T} r(\theta_t)$, \textcolor{black}{we follow the following process. First, we bound the number and size of cliques of graphs sampled from the Erd\H{o}s-R\'enyi model with high probability. Second, we bound the \emph{mutual information} of an additive decomposition given the mutual information of its constituent kernels using Weyl's inequality. Third, we use similar analysis as \citet{srinivas2009gaussian} to complete the regret bound.}

\begin{restatable}{theorem}{theoremtwo}
\label{thm:thm2}

Let $k$ be the kernel as in Assumption \ref{assump:erenyi}, and Theorem \ref{thm:thm1}. Let $\gamma_T^k(d): \mathbb{N} \rightarrow \mathbb{R}$ be a monotonically increasing upper bound function on the \emph{mutual information} of kernel $k$ taking $d$ arguments. The cumulative regret of \hagpucb is bounded with high probability as follows:

\begin{equation}
    R_T \! =\widetilde{\cal{O}}\Big(\!\sqrt{T\beta_T D^{\log D+5} \gamma_T^k(4 \log D + c_\gamma)}\Big)
\end{equation}

\textcolor{black}{where $c_\gamma$ is an appropriately picked constant and the base of the logarithm is $\frac{1}{p_g}$.}
\end{restatable}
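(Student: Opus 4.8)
The plan is to decompose the cumulative regret into the two phases of \hagpucb and bound each separately, with the structural analysis of Erd\H{o}s-R\'enyi graphs feeding the kernel analysis.

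First I would analyze the combinatorial structure. Conditioning on the high-probability event from Theorem \ref{thm:thm1} that $\widetilde{\cal{G}}_d = {\cal{G}}_d$, I need two facts about $G(D, p_g)$ with $p_g < 1$: (i) the size of the largest clique is $\widetilde{\cal{O}}(\log D)$ with high probability — the classical result is that the clique number concentrates around $2\log_{1/p_g} D$, so taking a clique size of $4\log_{1/p_g} D + c_\gamma$ as the argument bound captures all maximal cliques with high probability; (ii) the number of maximal cliques $M$ is at most polynomial in $D$ — more precisely I would bound it by $\binom{D}{\le c\log D} = D^{\mathcal{O}(\log D)}$, which gives the $D^{\log D + 5}$-type factor. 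These two bounds should follow from a union bound over vertex subsets combined with the i.i.d.\ edge structure.

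Next I would handle the information-theoretic step. The kernel is $k = \sum_{i=1}^M k^{{{\Theta}}^{(i)}}$, and I want to bound its maximum information gain $\gamma_T$ in terms of $\gamma_T^k(d)$ evaluated at the maximal clique size $d \le 4\log D + c_\gamma$. The tool is Weyl's inequality on the eigenvalues of the sum of the Gram matrices: since $\mathbf{K}_T = \sum_i \mathbf{K}_T^{(i)}$, the eigenvalues interlace additively, and hence $\tfrac{1}{2}\log\det(\mathbf{I} + \sigma^{-2}\mathbf{K}_T) \le \sum_i \tfrac{1}{2}\log\det(\mathbf{I} + \sigma^{-2}\mathbf{K}_T^{(i)})$ fails in general, so instead I would bound $\gamma_T(k) \le M \cdot \gamma_T^k(d_{\max})$ by a subadditivity argument on mutual information (each component kernel acts on a subspace, and mutual information is subadditive over the sum). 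Combining with the clique-count bound gives $\gamma_T(k) = \widetilde{\cal{O}}\big(D^{\log D + \text{const}} \gamma_T^k(4\log D + c_\gamma)\big)$.

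Finally I would assemble the regret. For phase two ($t = T_0,\ldots,T$), I invoke the standard GP-UCB analysis of \citet{srinivas2009gaussian}: with the valid kernel $k$ and appropriate $\beta_t$, $\sum_{t=T_0}^T r(\theta_t) = \widetilde{\cal{O}}(\sqrt{T \beta_T \gamma_T(k)})$, into which I substitute the bound on $\gamma_T(k)$. For phase one, the $C_1 T_0$ Hessian queries each incur regret $r(\theta_{t,h})$; since $T_0 = C_1 = \widetilde{\cal{O}}(D^2)$ from Theorem \ref{thm:thm1} and each regret term is $\mathcal{O}(1)$ (bounded objective), this contributes $\widetilde{\cal{O}}(D^4)$, which is dominated by the $\sqrt{T \beta_T D^{\log D + 5} \gamma_T^k(\cdot)}$ term for $T$ large enough and folded into the $\widetilde{\cal{O}}$. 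A union bound over the Theorem \ref{thm:thm1} event, the clique-number event, and the GP-UCB confidence event gives the "with high probability" conclusion. The main obstacle I anticipate is the second step: getting a clean, correct bound on $\gamma_T$ of the additive (overlapping-clique) kernel in terms of $\gamma_T^k$ of the low-dimensional components — the cliques overlap, so the decomposition is not of independent blocks, and I expect to need the Weyl/eigenvalue-interlacing route together with careful bookkeeping of how the $M = D^{\mathcal{O}(\log D)}$ cliques contribute, rather than a naive per-coordinate argument.
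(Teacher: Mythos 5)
Your proposal follows essentially the same route as the paper: bound the clique number and the number of maximal cliques of the Erd\H{o}s--R\'enyi dependency graph, convert those into a bound on the information gain of the summed clique kernel, run the standard GP-UCB analysis of \citet{srinivas2009gaussian} for the second phase, and absorb the $T$-independent first-phase regret and all failure events into a union bound. Your clique-count argument (all maximal cliques live among subsets of size $\mathcal{O}(\log D)$, of which there are $D^{\mathcal{O}(\log D)}$) differs cosmetically from the paper's Markov-inequality bound on the expected number of cliques of each size, but both yield the $D^{\log D + \mathrm{const}}$ factor.

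Two points need repair. First, your parenthetical that the log-determinant subadditivity $\tfrac12\log\det(\mathbf{I}+\sigma^{-2}\mathbf{K}_T)\le\sum_i\tfrac12\log\det(\mathbf{I}+\sigma^{-2}\mathbf{K}_T^{(i)})$ ``fails in general'' is incorrect: for positive semidefinite Gram matrices it does hold, and it is exactly the ``subadditivity of mutual information'' you then invoke, so your $M\,\gamma_T^k(d_{\max})$ bound is fine (the paper instead goes through Weyl's inequality on the eigenvalues of the summed Gram matrix and settles for the weaker $M^2\max_i\gamma_T^{k_i}$, which still gives the stated $D^{\log D+5}$ factor); you should commit to one of these arguments rather than asserting the inequality fails and then using it. Second, your claim that each first-phase query incurs regret $\mathcal{O}(1)$ ``(bounded objective)'' is not justified under the model: $v$ is a GP sample path on a continuous domain and has no deterministic sup bound. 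The paper handles this with a separate lemma (following Lemma 5.8 of \citet{srinivas2009gaussian}, using the derivative tail bounds and the mean value theorem) showing $\sup_\Theta v - \inf_\Theta v = \mathcal{O}\big(\sqrt{D\log\delta^{-1}}\big)$ with high probability, so the phase-one contribution is $\mathcal{O}\big(C_1^2\sqrt{D\log\delta^{-1}}\big)$, which is $T$-independent and absorbed into $\widetilde{\mathcal{O}}$ as you intend; with that high-probability sup-norm bound added (and its failure probability included in your union bound), your argument goes through.
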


Whereas for typical kernels such as Matern and RBF, cumulative regret of GP-UCB scales exponentially with $D$, our regret bounds scale with exponent ${\cal{O}}(\log{D})$. This improved regret bound shows our approach is a theoretically grounded approach to \hdbo.


\begin{figure}[t]
    \centering
    \includegraphics[width=1.0\linewidth]{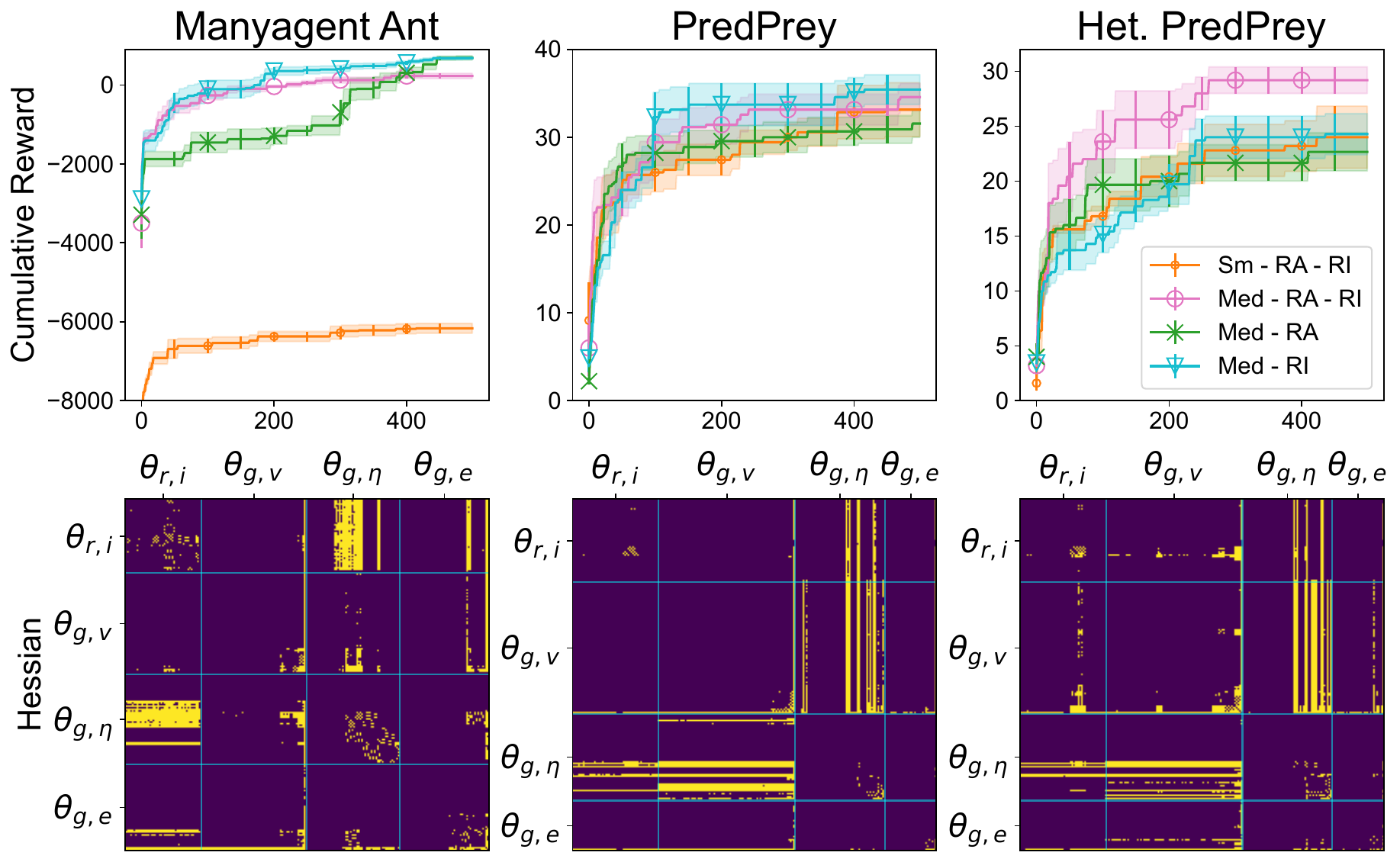}
    
    \caption{Ablation study. Training curves of our \homodel and its ablated variants on different multi-agent environments.}
    \label{fig:ablation}
    \vspace{-7pt}
\end{figure}
\begin{figure}[t]
    \centering
         \includegraphics[width=1.0\linewidth]{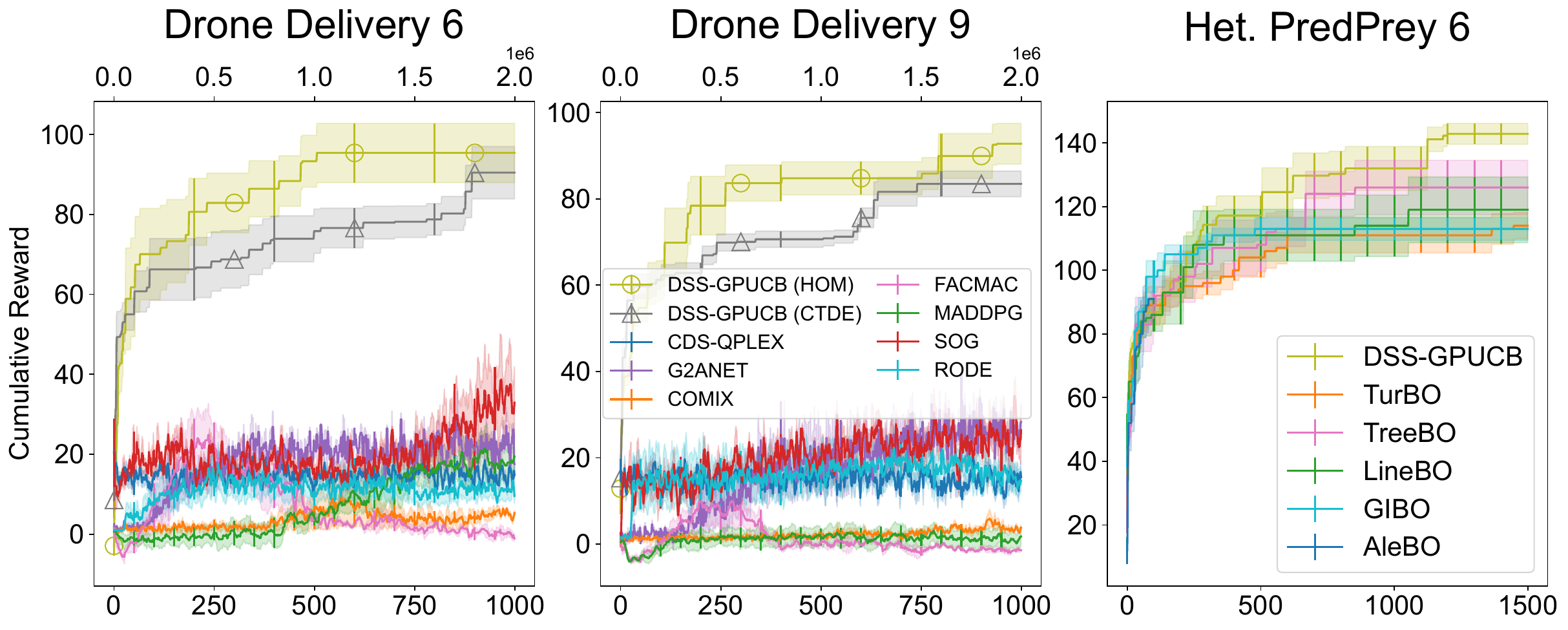}
    
    \caption{Left two plots: Sparse reward drone delivery task. Rightmost: Comparison with \hdbo approaches. The left two plots validate the same approaches on different environments.}
    \label{fig:drone}
    \vspace{-7pt}
\end{figure}

\vspace{-5pt}
\section{Validation}
\vspace{-5pt}

We compare our work against recent algorithms in \marl on several multi-agent coordination tasks and \rlCustom algorithms for policy search in novel settings.
We also perform ablation and investigation of our proposed \homodel at learning roles and multi-agent interactions.
We defer \textcolor{black}{experimental details to Appendix \ref{sec:expdetails}}.

\emph{All presented figures are average of $5$ runs with shading representing $\pm$ Standard Error, the y-axis represents cumulative reward, the x-axis displayed above represents interactions with the environment in \rlCustom, x-axis displayed below represents iterations of \bo. 
Commensurate with our focus on memory-constrained devices, all policy models consist of $<500$ parameters}.

\subsection{Ablation} \label{ssec: Ablation}

We investigate the impact of Role Assignment (RA) and Role Interaction (RI) as well as model capacity on training progress.
\textcolor{black}{We conduct ablation experiments on Multiagent Ant with 6 agents, PredPrey with 3 agents, and Heterogenous PredPrey with 3 agents~\citep{peng2021facmac}. 
Multiagent Ant is a MuJoCo~\citep{todorov2012mujoco} locomotion} task where each agent controls an individual appendage.
PredPrey is a task where predators must work together to catch faster, more agile prey. 
Het. PredPrey is similar, except the predators have different capabilities of speed and acceleration.
In ablation experiments, our default configuration is \textit{Med - RA - RI} which employs components of RA and RI parameterized by neural networks with three layers and four neurons on each layer \textcolor{black}{(medium sized neural network)}. \textcolor{black}{The \textit{Sm}, small, model is instead parameterized with neural networks of 1 layer with 2 neurons each. When RA is ablated, the agents interact directly without taking on any role based specialization. When RI is ablated, the agents' action is determined without any coordination between agents.}
We present our ablation in Fig. \ref{fig:ablation}.

For a simpler coordination task such as Multiagent Ant, we observe limited improvement through RA or RI.
In contrast, RI shows strong improvement in PredPrey and Het. PredPrey.
It is because, in PredPrey, predators must work together to catch the faster prey. 
Since the agents in PredPrey are \emph{homogeneous}, \emph{ablating RA} makes the optimization simpler and more compact without losing expressiveness.
Thus, ablating RA leads to a performance increase.
In Het. PredPrey, the predator agents have heterogeneous capabilities in speed and acceleration.
Thus, RA plays a critical role in delivering strong performance.
We also show that overly shrinking the model size (\textit{Sm - RA - RI})
can hurt performance as the policy model is no longer sufficiently expressive. 
This is evidenced in the Multiagent Ant task.
We observed that using neural networks of three layers with four neurons each to be sufficiently balanced across a wide variety of tasks.

In Fig. \ref{fig:ablation}, we present the detected Hessian structure by \hagpucb in the respective tasks.
The detected Hessian structures generally show strong block-diagonal associativity in the \homodel parameters, i.e., $[\theta_{r,i}, \theta_{g,v}, \theta_{g, \eta}, \theta_{g, e}]$. This shows that our approach can detect the interdependence \emph{within} the sub-parameters, but relative independence between the sub-parameters.
We observe more off-diagonal connectivity in the complex coordination tasks of PredPrey and Het. PredPrey.
The visualization of Hessian structure on PredPrey shows that our approach can detect the importance of \emph{jointly optimizing} role assignment and interaction to deliver a strong policy in this complex coordination task. We investigate the learning behavior of the \homodel \textcolor{black}{further in Appendix \ref{app:addexp}}.

    



\subsection{Comparison with MARL} \label{ssec: Comparison MARL}

We compare our method with competing \marl algorithms on several multi-agent tasks where the number of agents is increased. 
We validate both the \homodel with \hagpucb (\hagpucb (MM)) and neural network policies trained in the CTDE paradigm (\hagpucb (CTDE)). \textcolor{black}{In the CTDE paradigm, both RI and RA are ablated reducing the policy model to a neural network which is identical across all agents.} We observe that on complex coordination tasks such as PredPrey and Het. PredPrey our approach delivers more performant policies when coordination is required between \emph{a large number of agents}. This is presented\footnote{\textcolor{black}{We plot with respect to total environment interactions for \rl, and total policy evaluations for \bo. See Appendix \ref{sec:replots}, Appendix \ref{sec:replotsbestfoundpolicy}, and Appendix \ref{sec:retablebestfoundpolicy} for alternate presentations of data more favorable to \rlCustom and \marl under which our conclusions still hold.}} in Fig. \ref{fig:MARL-scaling}. Although SOG~\citep{shao2022self}, a Comm-\marl approach shows compelling performance with a small number of agents, with 15 agents, both \hagpucb (CTDE) and \hagpucb (MM) outperform this strategy. We highlight that \hagpucb (CTDE) outperforms Comm-\marl approaches without communication during execution.
\begin{figure}[t]
    \centering
     \includegraphics[width=1.0\linewidth]{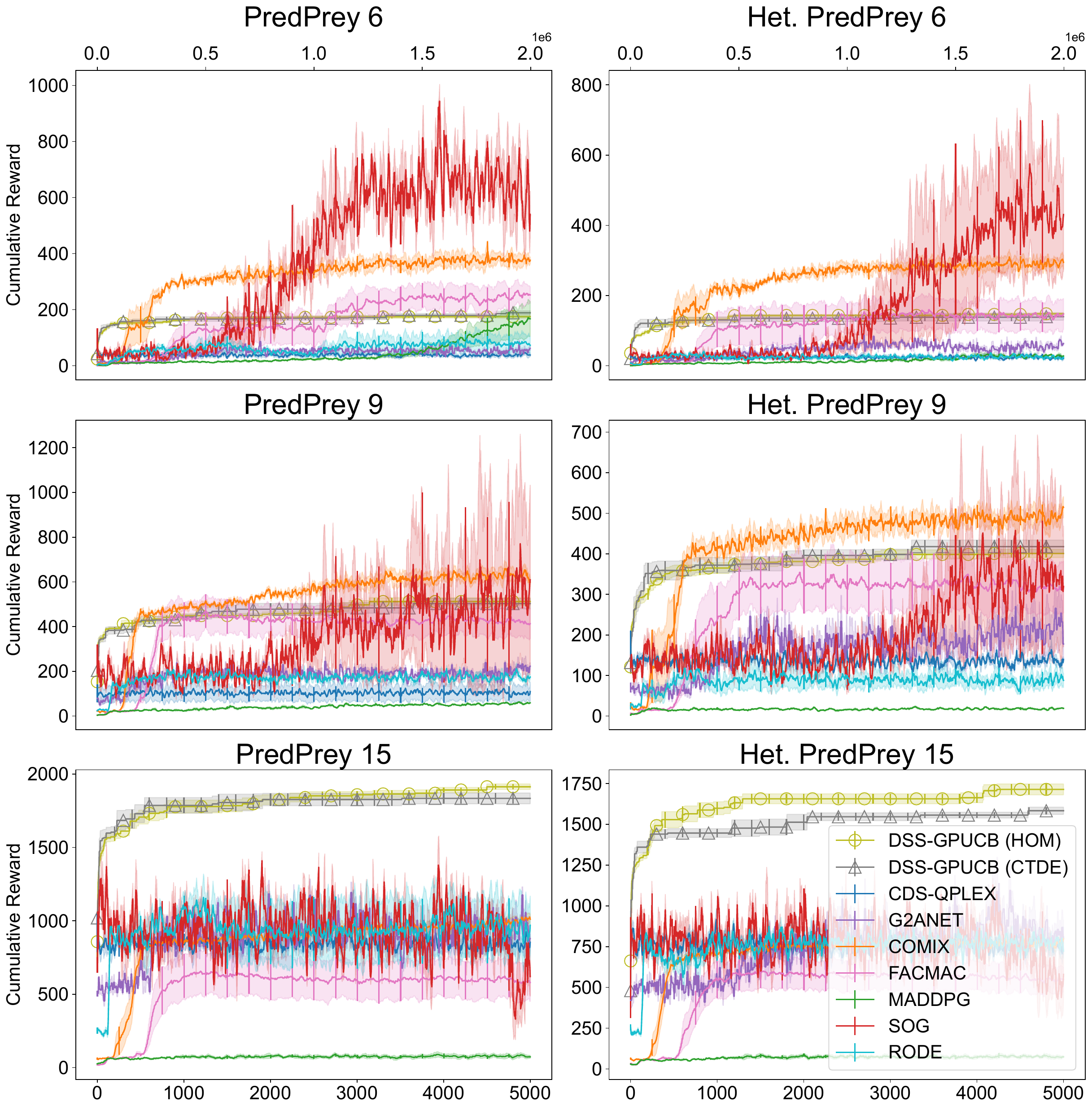}
    
    \caption{Scaling analysis. Training curves of \hagpucb and competitors with increasing number of agents. The left column shows PredPrey with 6, 9, and 15 agents. The right column shows Het, PredPrey with 6, 9, and 15 agents.}
    \label{fig:MARL-scaling}
    \vspace{-7pt}
\end{figure}
We also note that \hagpucb (MM) outperforms \hagpucb (CTDE) showing the value of our \homodel approach in complex coordination tasks. We defer further experimental results in \textcolor{black}{this setting to Appendix \ref{app:addexp}}.

\setlength{\tabcolsep}{3pt}
\begin{table*}[!tb]
\caption{\hagpucb typically outperforms \rlCustom with higher sparsity (e.g., Sparse-100, or Sparse-200).}\vspace{-1mm}
\vskip 0.05in
\label{bepimagenet}
\centering
\rstretch{0.9}
\resizebox{\textwidth}{!}{%
\begin{tabular}{@{}rccccccccccccccccccccccc@{}}\toprule
& \multicolumn{5}{c}{Ant-v3}  & & \multicolumn{5}{c}{Hopper-v3}  & & \multicolumn{5}{c}{Swimmer-v3} & & \multicolumn{5}{c}{Walker2d-v3} \\
\cmidrule{2-6} \cmidrule{8-12} \cmidrule{14-18} \cmidrule{20-24}
& DDPG & PPO & SAC & TD3 & Intrinsic & & DDPG & PPO & SAC & TD3 & Intrinsic & & DDPG & PPO & SAC & TD3 & Intrinsic & & DDPG & PPO & SAC & TD3 & Intrinsic \\
\cmidrule{2-24}
Baseline & $-90.77$ & $1105.69$ & $2045.24$ & $2606.17$ & $2144.00$ & & $604.20$ & $1760.65$ & $2775.66$ & $1895.76$ & $1734.00$ & & $44.45$ & $121.38$ & $58.73$ & $48.78$ & $1950.00$ & & $2203.80$ & $892.81$ & $4297.03$ & $1664.46$ & $2210.00$\\
Sparse $2$ & $-32.88$ & $1007.80$ & $2563.97$ & $1407.40$ & $1964.00$ & & $877.93$ & $1567.14$ & $3380.60$ & $1570.84$ & $2074.00$ & & $35.59$ & $99.50$ & $46.75$ & $47.23$ & $1758.80$ & & $1470.62$ & $1471.33$ & $1673.46$ & $2297.43$ & $1952.00$\\
Sparse $5$ & $-2687.97$ & $961.31$ & $711.56$ & $762.61$ & $1916.00$ & & $814.59$ & $1616.79$ & $3239.20$ & $2290.67$ & $1972.00$ & & $26.66$ & $68.69$ & $43.84$ & $40.12$ & $1856.00$ & & $961.30$ & $697.93$ & $1697.25$ & $2932.27$ & $1924.00$\\
Sparse $20$ & $-2809.89$ & $624.07$ & $694.30$ & $379.12$ & $1838.00$ & & $783.95$ & $1629.28$ & $2535.17$ & $1436.33$ & $1537.20$ & & $19.12$ & $54.63$ & $37.78$ & $37.03$ & $2108.00$ & & $663.04$ & $365.39$ & $1010.63$ & $276.56$ & $1810.00$\\
Sparse $50$ & $-3067.37$ & $-67.43$ & $663.28$ & $253.66$ & $1091.20$ & & $816.25$ & $1010.73$ & $1238.03$ & $551.43$ & $642.00$ & & $23.73$ & $51.52$ & $38.78$ & $30.01$ & $812.00$ & & $572.12$ & $428.29$ & $349.47$ & $298.28$ & $834.75$\\
Sparse $100$ & $-3323.43$ & $-4021.56$ & $679.30$ & $-115.43$ & $450.40$ & & $988.36$ & $324.51$ & $260.52$ & $342.48$ & $406.80$ & & $9.64$ & $21.09$ & $27.98$ & $30.10$ & $376.60$ & & $523.89$ & $205.93$ & $200.16$ & $147.22$ & $480.60$\\
Sparse $200$ & $-3098.37$ & $-8167.98$ & $-107.14$ & $-147.86$ & $258.60$ & & $765.05$ & $222.76$ & $300.36$ & $281.68$ & $350.80$ & & $-9.97$ & $21.69$ & $33.35$ & $30.48$ & $342.80$ & & $182.84$ & $193.43$ & $187.16$ & $148.06$ & $353.20$\\
\cmidrule{2-24}
\hagpucb & \multicolumn{5}{c}{$1147.21$}  & & \multicolumn{5}{c}{$1009.3$}  & & \multicolumn{5}{c}{$175.73$} & & \multicolumn{5}{c}{$1008.90$}\\
\bottomrule
\label{tab:rlmalformedmain}
\end{tabular}}
\vspace{-5pt}
\end{table*}
\subsection{Policy optimization under malformed reward}
\label{ssec: Comparison RL}

We compare against several competing \rlCustom and \marl algorithms under malformed reward scenarios. 
We train neural network policies with \hagpucb and competing algorithms.
We consider a sparse reward scenario where reward feedback is given every $S$ environment interactions for varying $S$. 
Table \ref{tab:rlmalformedmain} shows that the performance of competing algorithms is severely degraded with sparse reward and \hagpucb outperforms competing approaches on most tasks with moderate or higher sparsity.
Although intrinsic motivation~\citep{intrinsic1, intrinsic2} has shown evidence in overcoming this limitation, we find that our approach outperforms competing approaches supported by intrinsic motivations at higher sparsity. 
This improvement is important as sparse and malformed reward structure scenarios can occur in real-world tasks~\citep{intrinsicsurvey}. We repeat this \textcolor{black}{validation in Appendix \ref{app:addexp}} with \marl algorithms in multi-agent settings and consider a delayed feedback setting with similar results.

\subsection{Higher-order model Investigation}
\begin{figure*}
    \includegraphics[width=1.0\linewidth]{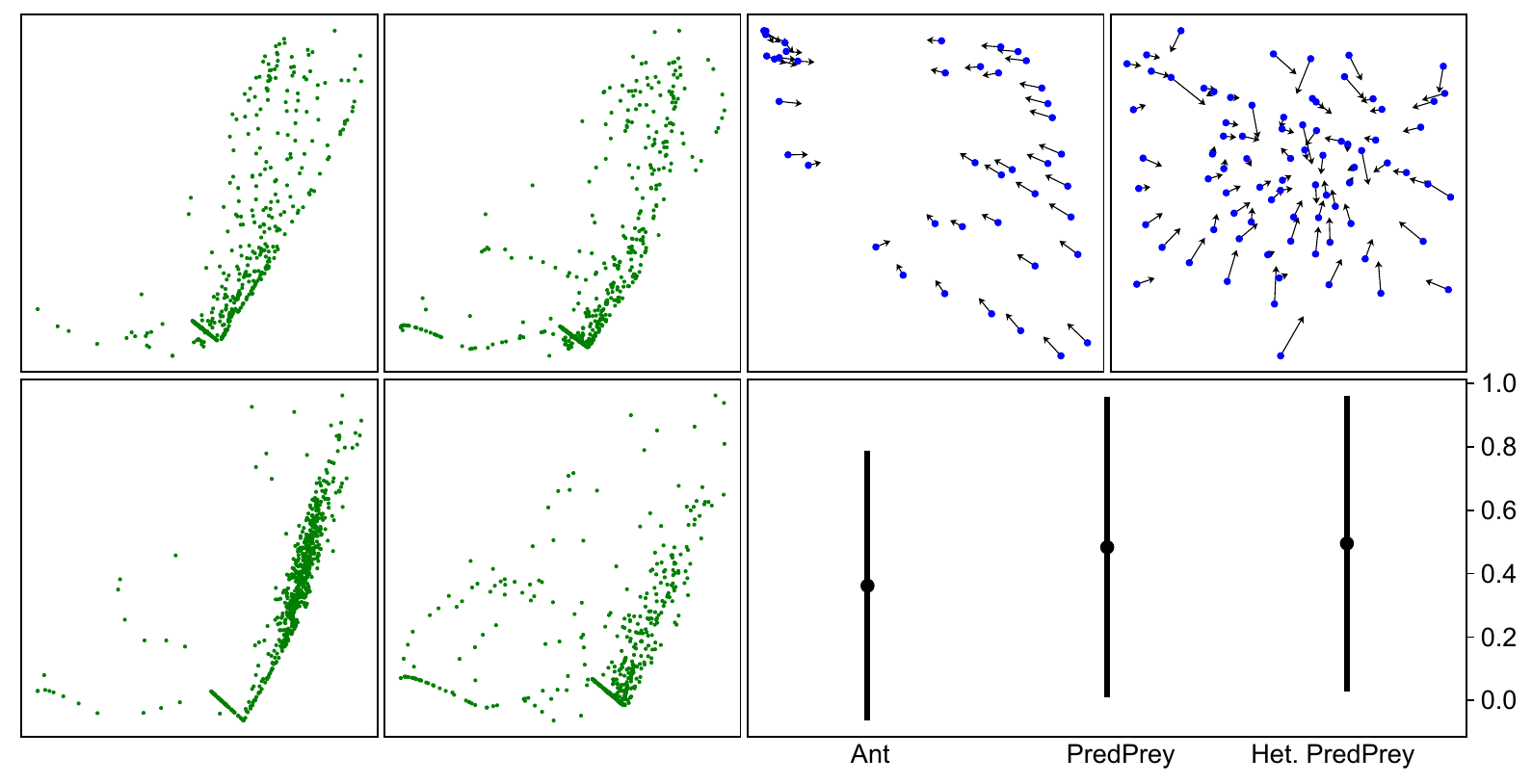}
    
    \caption{Left: Action distributions of different roles showing diversity in the Multiagent Ant environment with 6 agents. Right above: Policy modulation with role interaction in PredPrey and Het. PredPrey environment with 3 agents. Arrows represent change after message passing. \textcolor{black}{These plots are visualizations of the two principal components after Principal Component Analysis.} Right below: Mean connectivity \textcolor{black}{ratio} between agents and standard deviation in role interaction in Multiagent Ant with 6 agents, PredPrey with 3 agents, and Het. PredPrey with 3 agents.}
    \label{fig:policyinves}
    \vspace{-7pt}
\end{figure*}

We examined policy for Multiagent Ant with 6 agents for the role based policy specialization. The policy modulation plots were generated by examining the PredPrey and Het. PredPrey environments respectively.

In Fig. \ref{fig:policyinves} we investigate the learned \homodel policies. Our investigation shows that \emph{role} is used to specialize agent policies while maintaining a common theme.
\emph{Role interaction} modulates the policy through graphical model inferences.
Finally, role interactions are sparse, however noticeably higher for complex coordination tasks such as PredPrey.

\subsection{Comparison with HDBO algorithms}
We compare with several related work in \hdbo. This is presented in Fig. \ref{fig:drone}, \textcolor{black}{rightmost plots.} We compare against these algorithms at optimizing our \homodel policy. For more complex tasks that require role based interaction and coordination, our approach outperforms related work. TreeBO~\citep{treebo} is also an additive decomposition approach to \hdbo, but uses Gibbs sampling to learn the dependency structure. However, our approach of learning the structure through \emph{Hessian-Awareness} outperforms this approach. Additional experimental \textcolor{black}{results are deferred to Appendix \ref{app:addexp}}.
\subsection{Drone delivery task}

We design a drone delivery task that is well aligned with our motivation of considering policy search in \emph{memory-constrained devices} on tasks with \emph{unhelpful or noisy gradient information}.
In this task, drones must maximize the throughput of deliveries while avoiding collisions and conserving fuel.
This task is challenging as a positive reward through completing deliveries is rarely encountered (i.e., sparse rewards).
However, agents often receive negative rewards due to collisions or running out of fuel.
Thus, gradient-based approaches can easily fall into local minima and fail to find policies that complete deliveries.\footnote{Further details on this task can be found in Appendix \ref{sec:dddetails}.} 
We compare \hagpucb against competing approaches in Fig. \ref{fig:drone}, \textcolor{black}{leftmost two plots.}
We observe that \marl based approaches fail to find a meaningfully rewarding policy in this setting, whereas our approach shows strong and compelling performance. Furthermore, \hagpucb (MM) outperforms \hagpucb (CTDE) through leveraging roles and role interactions.

\vspace{-5pt}
\section{Conclusion}
\vspace{-5pt}

We have proposed a \homodel policy along with an effective optimization algorithm, \hagpucb. 
Our \homodel and \hagpucb are designed to offer strong performance in high coordination multi-agent tasks under sparse or malformed reward on memory-constrained devices.
\hagpucb is a theoretically grounded approach to \bo offering good regret bounds under reasonable assumptions.
Our validation shows \hagpucb outperforms \rlCustom and \marl at optimizing neural network policies in malformed reward scenarios.
Our \homodel optimized with \hagpucb outperforms \marl approaches in high coordination multi-agent scenarios by leveraging the concepts of \emph{role} and \emph{role interaction}.
Furthermore, we show through our drone delivery task, our approach outperforms \marl approaches in multi-agent coordination tasks with sparse reward. We make significant progress on high coordination multi-agent policy search by overcoming challenges posed by malformed reward and memory-constrained settings.

\section*{Acknowledgements}
We thank Jonathan Scarlett for pointing out a small mistake in our proof.

\bibliography{main}
\bibliographystyle{tmlr}

\newpage
\appendix
\onecolumn

\section{Experimental Details}
\label{sec:expdetails}
 We used Trieste \citep{Berkeley_Trieste_2022}, Tensorflow \citep{tensorflow2015-whitepaper}, and GPFLow \citep{GPflow2017} to build our work and perform comparisons using MushroomRL \citep{d2021mushroomrl}, MultiagentMuJoCo \citep{de2020deep}, OpenAI Gym \citep{1606.01540}, and Multi-agent Particle environment \citep{lowe2017multi}. When comparing with related work, we used neural network policies of equivalent size. All of our tested policies are $<500$ parameters, however the XL models are constructed using 3 layers of 400 neurons each.

To estimate the Hessian, we used the Hessian-Vector product approximation. We relaxed the discrete portions of our \homodel policy into differentiable continuous approximation for this phase using the Sinkhorn-Knopp algorithm for the Role Assignment phase. For role interaction network connectivity, we used a sigmoid to create differentiable ``soft'' edges between each role. We pragmatically kept all detected edges in the Hessian while maintaining computational feasibility. We observed that our approach could support up to $1500$ edges in the dependency graph prior to experiencing computational intractability. We used the Matern-$\frac{5}{2}$ as the base kernel in all our models.

\subsection{Ablation and Investigation}

In the ablation, we perform experiments on MultiagentMuJoCo with environments Multiagent Ant with 6 segments, Multiagent Swimmer with 6 segments, Predator Prey with 3 predators, and Heterogeneous Predator Prey with 3 predators. In the Predator Prey environment, multiple predators must work together to capture faster and more agile prey. In Heterogeneous Predator Prey, each Predator has differing capabilities of speed and acceleration. This modification is challenging as a policy must not only coordinate between the Predators, but roles based specialization must be considered given the heterogeneous nature of each predator's capabilities.

To generate Fig. \ref{fig:policyinves}, we examined policy for Multiagent Ant with 6 agents for the role based policy specialization. The policy modulation plots were generated by examining the PredPrey and Het. PredPrey environments respectively.
\subsection{Comparison with MARL}
For the \marl setting, we compare against MADDPG \citep{lowe2017multi}, FACMAC \citep{peng2021facmac}, COMIX \citep{peng2021facmac}, RODE~\citep{rodepaper} and CDS~\citep{cdspaper} using QPLEX~\citep{qplexpaper} as a base algorithm. We also compare against Comm-\marl approaches SOG~\citep{shao2022self}, and G2ANet~\citep{liu2020multi}. RODE and QPLEX are limited to discrete environments, thus we are unable to provide comparisons on continuous action space tasks such as Multiagent Ant or Multiagent Swimmer. All \marl environments were trained for $2,000,000$ timesteps. The neural network policies were $3$-layers each with $15$ neurons per layer, and were greater than or equal to the size of the compared \homodel policy. For Actor-Critic approaches, we did not reduce the size or expressivity of the critic. All used hyperparameters and Algorithmic configurations were as advised by the authors of the work.

In the \marl setting we use Multiagent Ant, Multiagent Swimmer, Predator-Prey, Heterogeneous Predator-Prey. Multiagent Ant, and Multiagent Swimmer are MuJoCo locomotion tasks where each agent controls a segment of an Ant or Swimmer. Predator-Prey (PredPrey N) environment is a cooperative environment where N of agents work together to chase and capture prey agents. In Heterogeneous Predator Prey, each Predator has differing capabilities of speed and acceleration. This modification is challenging as a policy must not only coordinate between the Predators, but roles based specialization must be considered given the heterogeneous nature of each predator's capabilities. We also validated related work on the drone delivery task under which a drone swarm of N agents (Drone Delivery-N) must complete deliveries of varying distances while avoiding collisions and conserving fuel. The code of which is available in supplementary materials and will be open sourced.

We used batching~\citep{bobatch} in our comparisons with \marl to allow for a large number of iterations of \bo. We used a batch size of $15$ in our comparison experiments. In this setting, all MuJoCo environments use the default epoch (total number of interactions with the environment for computing reward) length of $1000$, for Predator-Prey environments, epoch length was $25$, for Drone Delivery environment, epoch length was $150$.

\subsection{RL and MARL under Malformed Reward}
For single agent \rlCustom we compared against SAC \citep{haarnoja2018soft}, PPO \citep{schulman2017proximal}, TD3 \citep{fujimoto2018addressing}, and DDPG \citep{lillicrap2015continuous} as well as an algorithm using intrinsic motivation~\citep{intrinsic2}. \textcolor{black}{In single agent setting, we trained related work for $200,000$ timesteps.} In the \marl setting, we trained for $2,000,000$ timesteps. In both single-agent setting and multi-agent setting all policy networks for both \hagpucb and related work was $3$ layers of $10$ neurons each. The tested environments were standard OpenAI Gym benchmarks of Ant, Hopper, Swimmer, and Walker2D.

In the \marl setting we compared against COVDN~\citep{peng2021facmac}, COMIX, FACMAC, and MADDPG. Comparisons were not possible against other approaches as these do not support continuous action environments and are restricted to discrete action spaces.

For all environments and algorithms, we used the recommended hyperparameter settings as defined by the authors.

\subsection{Comparison with HDBO Algorithms}
For this comparison, we compared with several related works in \hdbo. We compared with TurBO~\citep{turbo}, Alebo~\citep{alebo}, TreeBO~\citep{treebo}, LineBO~\citep{linebo}, and a recent variant of \bo for policy search, GIBO~\citep{gibo}.

For computational efficiency, the epoch length for MuJoCo environments was reduced to $500$.

\subsection{Drone Delivery Task}
The experimental details follow that of comparisons with \marl. 

\subsection{Compute}
All experiments were performed on commodity CPU and GPUs. Each experimental setting took no more than 2 days to complete on a single GPU.

\begin{landscape}
\setlength{\tabcolsep}{3pt}
\begin{table*}
\caption{Policy model sizes. Unfilled entries mean this environment was not considered during validation.}
\vskip 0.05in
\label{tab:policysizes}
\rstretch{0.9}
\resizebox{\linewidth}{!}{%
\begin{tabular}{@{}rcccccccc@{}}\toprule
& Ant-v3 & Hopper-v3 & Swimmer-v3 & Walker2d-v3 & Ant-v3 (\marl) & Hopper-v3 (\marl) & Swimmer-v3 (\marl) & Walker2d-v3 (\marl) \\

\midrule

\rlCustom (Single Agent) & 478 & 263 & 222 & 356 & &  & & \\
\marl (CTDE) &  &  &  & & 310 & 267 & 267& 353\\
\hagpucb (Single Agent) & 478 & 263 & 222 & 356 &  &  &  & \\
\hagpucb (CTDE) & &  &  & & 310 & 267 & 267 & 353 \\

\bottomrule
\end{tabular}}
\end{table*}
\setlength{\tabcolsep}{3pt}
\begin{table*}
\caption{Policy model sizes. Unfilled entries mean this environment was not considered during validation.}
\vskip 0.05in
\label{tab:policysizes2}
\rstretch{0.9}
\resizebox{\linewidth}{!}{%
\begin{tabular}{@{}rcccccccccccc@{}}\toprule
& Multiagent-Swimmer 4 & Multiagent-Swimmer 8 & Multiagent-Swimmer 12 & Multiagent-Ant 8 & Multiagent-Ant 12 & Multiagent-Ant 16 & PredPrey 6 & PredPrey 9 & PredPrey 15 & Het. PredPrey 6 & Het. PredPrey 9 & Het. PredPrey 15\\

\midrule

\marl (CTDE) & 267 & 267 & 267 & 396  & 396 & 396 & 478 & 478 & 478 & 478 & 478 & 478\\
\hagpucb (CTDE) & 267 & 267 & 267 & 396  & 396 & 396 & 478 & 478 & 478 & 478 & 478 & 478\\
\hagpucb (MM) & 216 & 244 & 244 & 406 & 434 & 434 & 373 & 373 & 393 & 373 & 393 & 393\\
\bottomrule
\end{tabular}}
\end{table*}
\end{landscape}
\newpage

\subsection{Policy Sizes}

We list the policy sizes of our models in Table \ref{tab:policysizes} and \ref{tab:policysizes2}.

Of note is in each environment, the compared against policy of \rlCustom or \marl is greater than or equal to in size vs.~the policy optimized by \hagpucb.

\subsection{Hyperparameter for Higher-Order Model}

\textcolor{black}{For our \homodel we utilized simple grid search in order to pick the hyperparameter settings. Overly large neural networks suffered from difficulty of optimization by \bo, whereas, overly small neural networks suffered from performance difficulty on several environments. We found that neural networks of 3 layers, and 4 neurons each performed well across a wide number of tested environments.}

\newpage

\section{Additional Experiments}
\label{app:addexp}

\begin{figure*}
    \centering
    \includegraphics[width=1.0\linewidth]{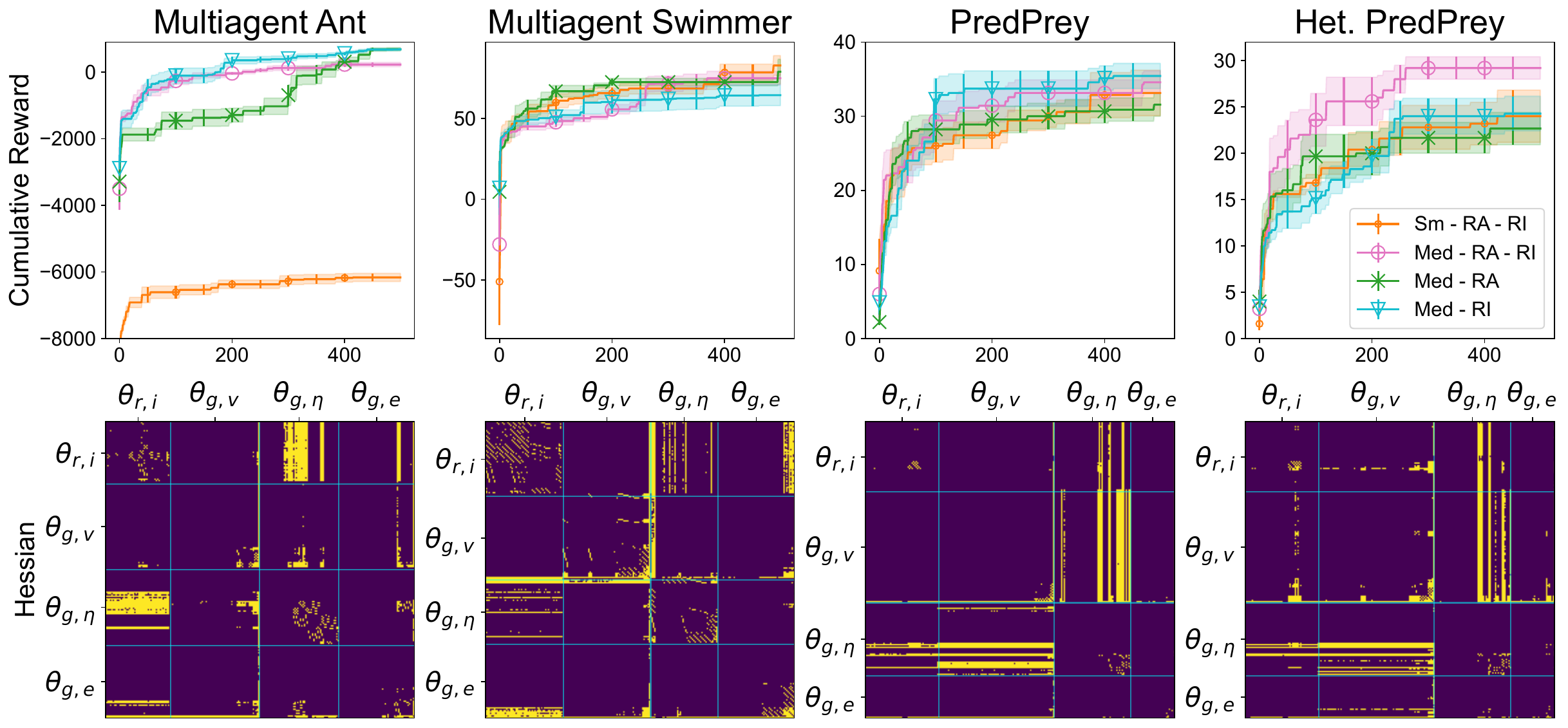}
    \caption{Ablation study. Training curves of \hagpucb and its ablated variants on different multi-agent environments.}
    \label{fig:ablationapp}
\end{figure*}

\subsection{Ablation}

We present an expanded version of Fig. \ref{fig:ablation} in Fig. \ref{fig:ablationapp} including the ablation for Multiagent Swimmer. Multiagent Swimmer shows similar behavior as the simpler task Multiagent Ant, with stronger block-diagonal Hessian structure. 

\subsection{Comparison with MARL}

\begin{figure*}
    \centering
    \includegraphics[width=1.0\linewidth]{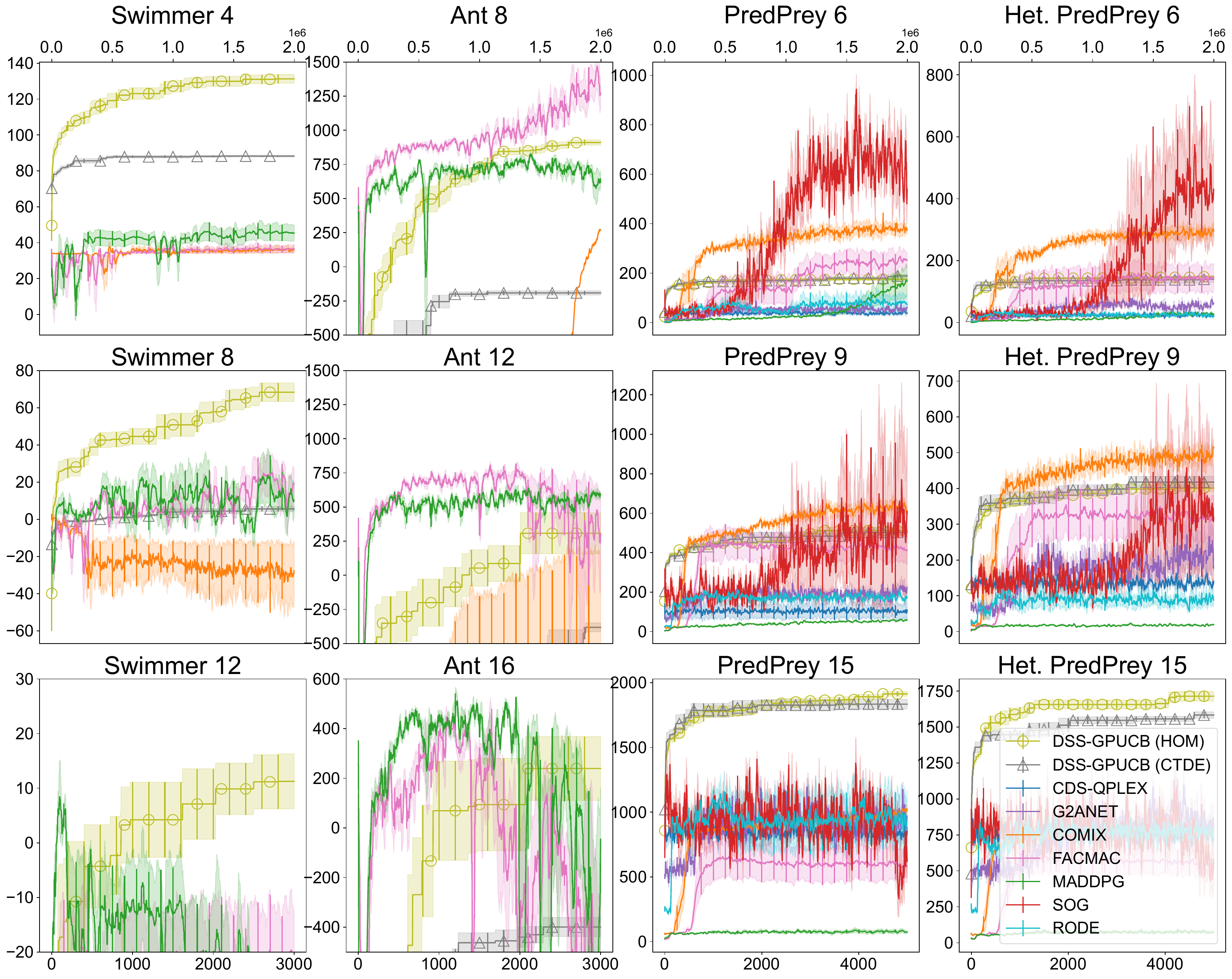}
    \caption{Comparison with \marl approaches with varying number of agents.}
    \label{fig:marlcompareappnd}
\end{figure*}

We present an expanded version of Fig. \ref{fig:MARL-scaling} in Fig. \ref{fig:marlcompareappnd} including the results for Multiagent-Ant and Multiagent-Swimmer. We observe that in this relatively uncomplicated task not well-suited for our approach with dense reward, our \homodel approach shows comparable performance to \marl approaches and far outperforms \hagpucb (CTDE). This shows the overall value of our \homodel approach.

\subsection{RL and MARL under Malformed Reward}
We present additional experiments under malformed reward for both \rlCustom and \marl. We formally define the Sparse reward scenario. Let $v(\theta) \triangleq \sum_{\Gamma = 1}^{\hat{\Gamma}} r_{\Gamma}$ where the value of the policy is determined through $\hat{\Gamma}$ interactions with some unknown environment and each interaction is associated with the reward, $r_{\Gamma}$. 
Typically, \rlCustom algorithms observe the reward, $r_{\Gamma}$ after every interaction with the environment. 
We consider a sparse reward scenario where reward feedback is given every $S$ steps: $\tilde{r_\Gamma^S} \triangleq \sum_{\Gamma-S}^{\Gamma} r_{\Gamma}$ if $\Gamma \equiv 0 \mod S$ and $0$ o.w. In addition to the sparse reward setting described earlier, we also consider the setting of delayed reward. The delayed reward scenario is defined: $\tilde{r_\Gamma^D} \triangleq r_{\Gamma-D}$ if $\Gamma > D$ and $0$ o.w. Thus in the delayed reward scenario, feedback on an action taken is \emph{delayed}. This scenario is important as it arises in long term planning tasks where the value of an action is not immediately clear, but rather is ascertained after significant delays. We present the complete table comparing related works in \rlCustom with \hagpucb in Table \ref{rltablefull}. As can be seen, similar to the Sparse reward scenarios, significant degradation can be observed across all tested \rlCustom algorithms with \hagpucb outperforming \rlCustom algorithms with moderate to severe amount of sparsity or delay. This degradation cannot be overcome by increasing the size of the policy, as we verify with the ``XL'' models which are orders of magnitude larger with $3$ layers of $400$ neurons.

We repeat these experimental scenarios in the \marl setting with similar results in Table \ref{marltablefull} where \marl approaches are compared against \hagpucb in the CTDE setting. Thus our validation shows that in both \rlCustom and \marl strong performance requires dense, informative feedback which may not be present outside of simulator settings. In these settings, our approach of optimizing small compact policies using \hagpucb outperforms related work in both \rlCustom and \marl.

\begin{landscape}
\setlength{\tabcolsep}{3pt}
\begin{table*}
\caption{\rlCustom under sparse reward. Sparse $n$ refers to sparse reward. Delay $n$ refers to delayed reward. Averaged over 5 runs. Parenthesis indicate standard error.}
\vskip 0.05in
\label{rltablefull}
\rstretch{0.9}
\resizebox{\linewidth}{!}{%
\begin{tabular}{@{}rccccccccccccccccccccccc@{}}\toprule
& \multicolumn{5}{c}{Ant-v3}  & & \multicolumn{5}{c}{Hopper-v3}  & & \multicolumn{5}{c}{Swimmer-v3} & & \multicolumn{5}{c}{Walker2d-v3} \\
\cmidrule{2-6} \cmidrule{8-12} \cmidrule{14-18} \cmidrule{20-24}
& DDPG & PPO & SAC & TD3 & Intrinsic & & DDPG & PPO & SAC & TD3 & Intrinsic & & DDPG & PPO & SAC & TD3 & Intrinsic & & DDPG & PPO & SAC & TD3 & Intrinsic \\
\cmidrule{2-24}
Baseline & $-90.77 (318.36)$ & $1105.69 (72.49)$ & $2045.24 (477.20)$ & $2606.17 (96.63)$ & $2144.00 (56.19)$ & & $604.20 (153.01)$ & $1760.65 (84.29)$ & $2775.66 (390.61)$ & $1895.76 (495.13)$ & $1734.00 (152.46)$ & & $44.45 (7.03)$ & $121.38 (5.25)$ & $58.73 (1.86)$ & $48.78 (0.48)$ & $1950.00 (136.79)$ & & $2203.80 (177.40)$ & $892.81 (208.68)$ & $4297.03 (163.71)$ & $1664.46 (490.97)$ & $2210.00 (221.94)$\\
Sparse $2$ & $-32.88 (548.48)$ & $1007.80 (93.20)$ & $2563.97 (440.27)$ & $1407.40 (239.70)$ & $1964.00 (97.47)$ & & $877.93 (141.24)$ & $1567.14 (113.01)$ & $3380.60 (76.51)$ & $1570.84 (627.28)$ & $2074.00 (219.94)$ & & $35.59 (5.65)$ & $99.50 (11.44)$ & $46.75 (0.52)$ & $47.23 (1.98)$ & $1758.80 (254.49)$ & & $1470.62 (173.60)$ & $1471.33 (440.18)$ & $1673.46 (704.37)$ & $2297.43 (584.68)$ & $1952.00 (175.53)$\\
Sparse $5$ & $-2687.97 (302.46)$ & $961.31 (100.83)$ & $711.56 (87.40)$ & $762.61 (38.90)$ & $1916.00 (96.98)$ & & $814.59 (110.96)$ & $1616.79 (330.77)$ & $3239.20 (93.46)$ & $2290.67 (560.51)$ & $1972.00 (85.15)$ & & $26.66 (5.52)$ & $68.69 (14.60)$ & $43.84 (0.99)$ & $40.12 (0.76)$ & $1856.00 (237.63)$ & & $961.30 (231.98)$ & $697.93 (274.48)$ & $1697.25 (682.00)$ & $2932.27 (361.78)$ & $1924.00 (213.72)$\\
Sparse $20$ & $-2809.89 (292.76)$ & $624.07 (41.20)$ & $694.30 (16.55)$ & $379.12 (145.78)$ & $1838.00 (189.72)$ & & $783.95 (175.83)$ & $1629.28 (112.73)$ & $2535.17 (368.16)$ & $1436.33 (535.61)$ & $1537.20 (336.19)$ & & $19.12 (12.07)$ & $54.63 (9.12)$ & $37.78 (2.06)$ & $37.03 (2.63)$ & $2108.00 (130.24)$ & & $663.04 (114.66)$ & $365.39 (75.80)$ & $1010.63 (490.13)$ & $276.56 (69.79)$ & $1810.00 (253.57)$\\
Sparse $50$ & $-3067.37 (116.57)$ & $-67.43 (33.33)$ & $663.28 (21.26)$ & $253.66 (149.78)$ & $1091.20 (102.39)$ & & $816.25 (145.64)$ & $1010.73 (165.81)$ & $1238.03 (499.72)$ & $551.43 (107.79)$ & $642.00 (155.66)$ & & $23.73 (4.84)$ & $51.52 (7.55)$ & $38.78 (3.69)$ & $30.01 (7.84)$ & $812.00 (124.17)$ & & $572.12 (115.80)$ & $428.29 (67.93)$ & $349.47 (29.70)$ & $298.28 (52.01)$ & $834.75 (89.95)$\\
Sparse $100$ & $-3323.43 (43.74)$ & $-4021.56 (1893.85)$ & $679.30 (23.68)$ & $-115.43 (387.63)$ & $450.40 (51.04)$ & & $988.36 (15.38)$ & $324.51 (53.61)$ & $260.52 (31.38)$ & $342.48 (36.19)$ & $406.80 (51.77)$ & & $9.64 (9.63)$ & $21.09 (4.97)$ & $27.98 (6.35)$ & $30.10 (1.95)$ & $376.60 (80.28)$ & & $523.89 (114.07)$ & $205.93 (18.11)$ & $200.16 (26.90)$ & $147.22 (24.45)$ & $480.60 (75.14)$\\
Sparse $200$ & $-3098.37 (73.21)$ & $-8167.98 (2801.42)$ & $-107.14 (614.25)$ & $-147.86 (339.75)$ & $258.60 (46.74)$ & & $765.05 (146.14)$ & $222.76 (30.59)$ & $300.36 (23.10)$ & $281.68 (24.31)$ & $350.80 (58.42)$ & & $-9.97 (6.70)$ & $21.69 (1.97)$ & $33.35 (3.40)$ & $30.48 (2.96)$ & $342.80 (10.65)$ & & $182.84 (56.03)$ & $193.43 (21.18)$ & $187.16 (33.73)$ & $148.06 (23.58)$ & $353.20 (39.59)$\\
Sparse XL $100$ & $-2887.23 (182.60)$ & $-5101.30 (1979.27)$ & $448.67 (118.92)$ & $-730.72 (720.12)$ & $429.28 (141.31)$ & & $608.79 (116.39)$ & $481.72 (68.89)$ & $338.95 (10.91)$ & $188.06 (67.16)$ & $570.60 (59.04)$ & & $9.56 (8.54)$ & $31.47 (1.56)$ & $36.70 (2.86)$ & $50.99 (11.73)$ & $473.60 (67.95)$ & & $867.98 (65.29)$ & $222.13 (23.32)$ & $250.27 (12.96)$ & $95.01 (43.31)$ & $421.80 (31.56)$\\
Sparse XL $200$ & $-2681.07 (338.79)$ & $-11084.05 (2144.89)$ & $378.36 (134.59)$ & $-1331.39 (593.91)$ & $373.40 (28.56)$ & & $1025.16 (27.15)$ & $361.34 (80.88)$ & $331.69 (17.64)$ & $217.04 (78.00)$ & $323.00 (27.99)$ & & $9.87 (3.57)$ & $26.53 (2.31)$ & $36.61 (12.84)$ & $35.21 (1.31)$ & $414.40 (20.72)$ & & $843.38 (70.49)$ & $206.66 (31.57)$ & $228.23 (32.14)$ & $270.81 (40.87)$ & $339.20 (39.73)$\\
Lag $1$ & $-645.56 (227.58)$ & $1028.21 (103.66)$ & $1981.58 (547.20)$ & $2570.06 (222.18)$ & $2378.00 (111.98)$ & & $872.14 (200.36)$ & $1907.03 (177.63)$ & $2957.36 (394.10)$ & $2817.97 (434.89)$ & $1870.00 (147.92)$ & & $22.56 (8.02)$ & $111.21 (11.94)$ & $70.53 (17.88)$ & $47.56 (0.70)$ & $1980.00 (106.70)$ & & $2197.41 (266.42)$ & $1180.54 (362.97)$ & $3696.70 (712.13)$ & $3806.52 (125.83)$ & $1788.00 (165.73)$\\
Lag $5$ & $-2849.09 (349.92)$ & $848.20 (43.75)$ & $866.79 (5.83)$ & $860.53 (14.72)$ & $1761.00 (265.85)$ & & $803.61 (165.08)$ & $1812.64 (219.72)$ & $3280.25 (113.50)$ & $3377.79 (48.14)$ & $1844.00 (124.02)$ & & $16.78 (6.09)$ & $89.70 (14.44)$ & $55.33 (4.99)$ & $47.22 (4.34)$ & $2018.00 (245.36)$ & & $1328.87 (246.34)$ & $939.46 (325.01)$ & $3786.54 (650.86)$ & $3192.36 (411.20)$ & $1856.00 (136.84)$\\
Lag $20$ & $-2578.96 (232.02)$ & $239.10 (69.51)$ & $824.88 (16.22)$ & $54.63 (619.95)$ & $1718.00 (131.74)$ & & $692.46 (145.45)$ & $1944.12 (263.78)$ & $3385.03 (99.86)$ & $2172.20 (584.96)$ & $1526.00 (141.38)$ & & $21.16 (9.25)$ & $57.67 (4.33)$ & $30.10 (6.27)$ & $42.81 (3.63)$ & $1910.00 (165.43)$ & & $424.51 (86.68)$ & $740.85 (108.44)$ & $2117.99 (652.73)$ & $2098.28 (608.93)$ & $1726.00 (66.79)$\\
Lag $50$ & $-3093.43 (184.86)$ & $-176.74 (92.67)$ & $796.54 (16.98)$ & $791.36 (13.30)$ & $1155.60 (240.31)$ & & $907.45 (108.70)$ & $815.34 (169.38)$ & $596.78 (111.18)$ & $632.85 (47.74)$ & $1086.80 (109.28)$ & & $22.80 (7.69)$ & $35.28 (1.43)$ & $29.66 (2.61)$ & $26.24 (2.97)$ & $1092.20 (154.49)$ & & $816.82 (282.02)$ & $368.41 (82.30)$ & $388.86 (89.26)$ & $463.18 (69.68)$ & $942.80 (34.55)$\\
Lag $100$ & $-2784.59 (92.56)$ & $-5569.67 (2110.05)$ & $691.43 (22.64)$ & $640.74 (150.22)$ & $257.60 (39.28)$ & & $1028.14 (4.61)$ & $216.30 (36.22)$ & $264.75 (39.10)$ & $280.71 (28.86)$ & $292.60 (37.57)$ & & $11.72 (7.89)$ & $23.17 (4.52)$ & $37.77 (2.77)$ & $29.70 (3.84)$ & $377.20 (54.35)$ & & $623.61 (158.22)$ & $197.16 (17.30)$ & $209.38 (33.69)$ & $164.23 (46.02)$ & $382.40 (47.09)$\\
Lag $200$ & $-2655.00 (279.69)$ & $-9419.68 (2630.52)$ & $605.73 (38.85)$ & $163.37 (233.15)$ & $392.00 (35.41)$ & & $786.96 (167.20)$ & $271.60 (46.30)$ & $278.98 (25.34)$ & $261.85 (28.64)$ & $299.60 (28.97)$ & & $4.79 (7.07)$ & $19.18 (2.56)$ & $30.63 (5.42)$ & $22.95 (15.78)$ & $340.40 (42.30)$ & & $488.44 (128.77)$ & $182.44 (4.52)$ & $147.78 (20.99)$ & $128.41 (9.08)$ & $304.40 (29.79)$\\
Lag XL $100$ & $-2722.44 (340.81)$ & $-3140.81 (1713.55)$ & $503.88 (61.88)$ & $-404.24 (557.86)$ & $406.80 (41.65)$ & & $280.53 (151.98)$ & $577.16 (15.41)$ & $306.99 (9.95)$ & $196.16 (71.21)$ & $431.40 (34.74)$ & & $23.54 (3.69)$ & $25.59 (3.22)$ & $26.03 (5.69)$ & $24.72 (9.90)$ & $442.60 (26.17)$ & & $656.25 (107.48)$ & $227.56 (33.60)$ & $275.18 (34.80)$ & $153.97 (46.48)$ & $478.00 (33.40)$\\
Lag XL $200$ & $-2507.38 (142.96)$ & $-8436.44 (461.63)$ & $300.83 (115.24)$ & $-959.92 (648.64)$ & $377.60 (26.88)$ & & $439.52 (175.28)$ & $390.18 (80.50)$ & $331.84 (15.65)$ & $191.93 (69.82)$ & $342.60 (33.82)$ & & $9.84 (5.67)$ & $23.67 (3.79)$ & $17.53 (1.37)$ & $9.07 (7.23)$ & $317.60 (54.21)$ & & $844.83 (40.32)$ & $219.77 (23.85)$ & $237.50 (19.80)$ & $168.55 (49.49)$ & $336.40 (44.66)$\\
\cmidrule{2-24}
\hagpucb & \multicolumn{5}{c}{$1147.21(36.88)$}  & & \multicolumn{5}{c}{$1009.3(17.95)$}  & & \multicolumn{5}{c}{$175.73(15.54)$} & & \multicolumn{5}{c}{$1008.90(11.95)$}\\
\bottomrule
\end{tabular}}
\end{table*}
\setlength{\tabcolsep}{3pt}
\begin{table*}
\caption{\marl under sparse reward. Sparse $n$ refers to sparse reward. Delay $n$ refers to delayed reward. Averaged over 5 runs. Parenthesis indicate standard error.}
\vskip 0.05in
\label{marltablefull}
\centering
\rstretch{0.9}
\resizebox{\linewidth}{!}{%
\begin{tabular}{@{}rccccccccccccccccccc@{}}\toprule
& \multicolumn{4}{c}{Ant-v3}  & & \multicolumn{4}{c}{Hopper-v3}  & & \multicolumn{4}{c}{Swimmer-v3} & & \multicolumn{4}{c}{Walker2d-v3} \\
\cmidrule{2-5} \cmidrule{7-10} \cmidrule{12-15} \cmidrule{17-20}

& COVDN & COMIX & MADDPG & FACMAC & & COVDN & COMIX & MADDPG & FACMAC & & COVDN & COMIX & MADDPG & FACMAC & & COVDN & COMIX & MADDPG & FACMAC \\
\cmidrule{2-20}

Baseline & $970.50 (5.96)$ & $959.20 (2.83)$ & $982.58 (54.19)$ & $909.37 (40.08)$ & & $38.92 (0.06)$ & $38.88 (0.06)$ & $305.58 (107.48)$ & $638.80 (245.03)$ & & $-0.05 (7.75)$ & $13.43 (0.56)$ & $11.21 (0.52)$ & $14.06 (0.11)$ & & $249.21 (9.65)$ & $275.85 (10.04)$ & $280.30 (22.16)$ & $543.71 (180.78)$\\
Sparse $5$ & $877.09 (20.95)$ & $906.38 (7.48)$ & $912.61 (12.48)$ & $882.94 (46.35)$ & & $138.07 (72.33)$ & $39.84 (0.21)$ & $320.74 (123.44)$ & $38.76 (0.13)$ & & $10.13 (2.16)$ & $7.10 (4.77)$ & $14.19 (0.81)$ & $13.05 (0.36)$ & & $341.89 (51.92)$ & $260.58 (29.98)$ & $236.37 (28.88)$ & $267.70 (118.68)$\\
Sparse $50$ & $242.69 (252.04)$ & $-212.74 (75.21)$ & $772.29 (41.13)$ & $799.71 (11.76)$ & & $909.88 (71.73)$ & $38.88 (0.16)$ & $362.66 (264.39)$ & $172.37 (109.04)$ & & $8.49 (2.52)$ & $10.50 (2.19)$ & $12.06 (0.34)$ & $12.10 (1.01)$ & & $190.92 (8.87)$ & $121.51 (67.31)$ & $227.60 (31.76)$ & $287.36 (67.07)$\\
Sparse $100$ & $403.99 (208.18)$ & $-756.61 (201.61)$ & $490.19 (62.22)$ & $564.73 (75.80)$ & & $443.95 (170.88)$ & $27.74 (9.52)$ & $47.47 (7.08)$ & $38.78 (0.06)$ & & $-3.44 (3.28)$ & $1.12 (3.02)$ & $10.11 (3.29)$ & $14.14 (1.19)$ & & $106.24 (66.51)$ & $194.91 (17.35)$ & $159.24 (14.53)$ & $444.41 (215.77)$\\
Sparse $200$ & $65.04 (89.82)$ & $-197.26 (64.03)$ & $-632.05 (967.04)$ & $584.23 (28.34)$ & & $687.82 (249.11)$ & $38.78 (0.02)$ & $50.74 (9.86)$ & $37.10 (1.36)$ & & $-0.15 (3.39)$ & $-1.65 (1.20)$ & $14.43 (0.73)$ & $13.84 (0.13)$ & & $165.50 (67.12)$ & $277.89 (4.89)$ & $123.91 (10.61)$ & $229.11 (129.53)$\\
Sparse XL $100$ & $766.66 (208.18)$ & $209.58 (201.61)$ & $632.15 (62.22)$ & $552.28 (75.80)$ & & $813.86 (170.88)$ & $236.74 (9.52)$ & $50.62 (7.08)$ & $72.11 (0.06)$ & & $9.52 (3.28)$ & $14.45 (3.02)$ & $13.69 (3.29)$ & $14.16 (1.19)$ & & $135.95 (66.51)$ & $28.22 (17.35)$ & $127.66 (14.53)$ & $184.52 (215.77)$\\
Sparse XL $200$ & $442.92 (89.82)$ & $322.54 (64.03)$ & $479.03 (967.04)$ & $553.20 (28.34)$ & & $996.85 (249.11)$ & $39.47 (0.02)$ & $26.41 (9.86)$ & $71.31 (1.36)$ & & $3.58 (3.39)$ & $4.67 (1.20)$ & $13.28 (0.73)$ & $11.76 (0.13)$ & & $167.62 (67.12)$ & $207.81 (4.89)$ & $157.37 (10.61)$ & $88.47 (129.53)$\\

Lag $1$ & $656.69 (5.96)$ & $426.57 (2.83)$ & $901.14 (54.19)$ & $835.77 (40.08)$ & & $55.16 (0.06)$ & $38.90 (0.06)$ & $113.16 (107.48)$ & $38.94 (245.03)$ & & $1.99 (7.75)$ & $9.73 (0.56)$ & $13.60 (0.52)$ & $14.34 (0.11)$ & & $413.26 (9.65)$ & $318.73 (10.04)$ & $312.03 (22.16)$ & $357.73 (180.78)$\\
Lag $5$ & $255.56 (20.95)$ & $302.46 (7.48)$ & $873.81 (12.48)$ & $910.90 (46.35)$ & & $629.94 (72.33)$ & $38.85 (0.21)$ & $376.38 (123.44)$ & $852.61 (0.13)$ & & $10.07 (2.16)$ & $4.16 (4.77)$ & $14.51 (0.81)$ & $13.20 (0.36)$ & & $170.91 (51.92)$ & $289.44 (29.98)$ & $302.41 (28.88)$ & $640.21 (118.68)$\\
Lag $50$ & $56.94 (252.04)$ & $-272.78 (75.21)$ & $870.20 (41.13)$ & $885.37 (11.76)$ & & $383.76 (71.73)$ & $15.95 (0.16)$ & $612.84 (264.39)$ & $358.68 (109.04)$ & & $0.79 (2.52)$ & $6.26 (2.19)$ & $14.55 (0.34)$ & $11.16 (1.01)$ & & $97.29 (8.87)$ & $145.98 (67.31)$ & $183.41 (31.76)$ & $289.66 (67.07)$\\
Lag $100$ & $112.82 (208.18)$ & $10.21 (201.61)$ & $844.80 (62.22)$ & $843.90 (75.80)$ & & $997.51 (170.88)$ & $358.31 (9.52)$ & $275.65 (7.08)$ & $466.20 (0.06)$ & & $10.41 (3.28)$ & $3.49 (3.02)$ & $13.67 (3.29)$ & $11.60 (1.19)$ & & $114.41 (66.51)$ & $83.99 (17.35)$ & $149.93 (14.53)$ & $102.84 (215.77)$\\
Lag $200$ & $366.62 (89.82)$ & $-125.65 (64.03)$ & $674.74 (967.04)$ & $723.30 (28.34)$ & & $434.69 (249.11)$ & $38.95 (0.02)$ & $26.44 (9.86)$ & $38.74 (1.36)$ & & $5.27 (3.39)$ & $7.93 (1.20)$ & $13.28 (0.73)$ & $13.86 (0.13)$ & & $51.59 (67.12)$ & $147.23 (4.89)$ & $163.30 (10.61)$ & $277.79 (129.53)$\\
Lag XL $100$ & $869.35 (208.18)$ & $292.96 (201.61)$ & $721.18 (62.22)$ & $814.25 (75.80)$ & & $1000.04 (170.88)$ & $38.85 (9.52)$ & $532.12 (7.08)$ & $26.30 (0.06)$ & & $9.23 (3.28)$ & $0.48 (3.02)$ & $10.12 (3.29)$ & $10.19 (1.19)$ & & $105.65 (66.51)$ & $119.80 (17.35)$ & $191.74 (14.53)$ & $280.79 (215.77)$\\
Lag XL $200$ & $657.07 (89.82)$ & $260.13 (64.03)$ & $611.73 (967.04)$ & $801.94 (28.34)$ & & $692.92 (249.11)$ & $38.85 (0.02)$ & $38.78 (9.86)$ & $26.27 (1.36)$ & & $5.62 (3.39)$ & $6.48 (1.20)$ & $13.57 (0.73)$ & $5.56 (0.13)$ & & $56.72 (67.12)$ & $227.96 (4.89)$ & $122.66 (10.61)$ & $222.77 (129.53)$\\
\cmidrule{2-20}
\hagpucb (CTDE) & \multicolumn{4}{c}{$988.44(3.39)$}  & & \multicolumn{4}{c}{$213.50(22.26)$}  & & \multicolumn{4}{c}{$31.95(1.3)$} & & \multicolumn{4}{c}{$397.68(14.42)$}\\
\bottomrule
\end{tabular}}
\end{table*}
\end{landscape}

\subsection{Comparison with HDBO Algorithms}

\begin{figure*}
    \centering
    \includegraphics[width=1.0\linewidth]{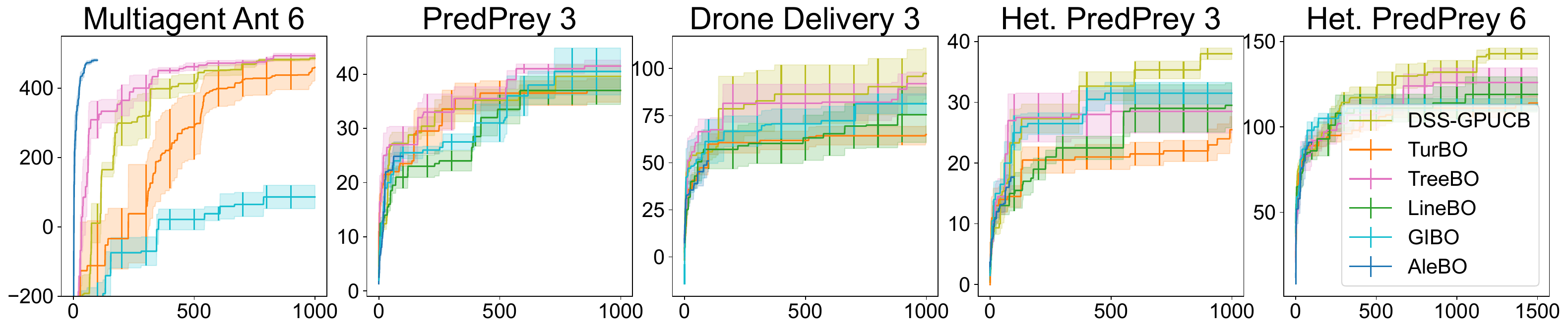}
    \caption{Comparison with \bo algorithms. \hagpucb outperforms on complex multi-agent coordination tasks.}
    \label{fig:bocomp}
\end{figure*}

We compare with several related work in High-dimensional \bo including TurBO~\citep{turbo}, AleBO~\citep{alebo}, LineBO~\citep{linebo}, TreeBO~\citep{treebo}, and GIBO~\citep{gibo}. This is presented in Fig. \ref{fig:bocomp}. We experienced out-of-memory issues with AleBO after approximately $100$ iterations, hence the AleBO plots are truncated. We compare against these algorithms at optimizing our \homodel policy for solving various multi-agent policy search tasks. We validated on Multiagent Ant with 6 agents, PredPrey with 3 agents, Het. PredPrey with 3 agents, Drone Delivery with 3 agents, and also Het. PredPrey with 6 agents. We observe that these competing works offer competitive performance for simpler tasks such as Multiagent Ant and PredPrey with 3 agents. However for more complex tasks that require role based interaction and coordination, our approach outperforms related work. This is evidenced in Het. PredPrey 3, Het. PredPrey 6 as well as the Drone Delivery task with 3 agents.

Thus our validation shows that for simpler task, competing related works are able to optimize for simple policies of low underlying dimensionality. However, for more complex tasks which require sophisticated interaction using both Role and Role Interaction, related work is less capable of optimizing for strong policies \emph{due to the complexity of the high-dimensional \bo task}. In contrast, our work offers the capability of finding stronger policies for these complex tasks and scenarios.

\newpage

\section{Table of Notations}

\textcolor{black}{Table~\ref{thenotation} provides a summary of notations that are used frequently in paper.}

\begin{table}
\caption{Summary of key notations.}
\label{thenotation}
\centering
\tiny{
\rstretch{1.4}
\begin{tabular}{|c|l|}
\hline
Notation & Description \\
\hline
$v$ & The objective function being optimized by Bayesian optimization \\
$\Theta$ & The domain for the objective function $v$ \\
$\theta_{t}$ & A point in the domain $\Theta$ that is picked at time $t$ \\
$\mu_T^{k}$ & The posterior mean (inferred after observations up to time $T-1$) at time $T$ using the kernel $k$ \\
$[\sigma^{k}_T]^2$ & The posterior variance at time $T$ using the kernel $k$ \\
$r(\theta_t)$ & The difference between the maxima of the function $v$ in domain $\Theta$, $v(\theta^{\ast})$, and $v(\theta_t)$ \\
$R_T$ & The cumulative regret, $\sum_{t=1}^{T} r(\theta_t)$ \\
$\Theta^{a}$ & Dimension $a$ of the domain $D$ \\
${\mathcal{G}_d}$ & A graph showing the dependencies between dimensions where edges exist between two dimensions if they are dependent \\
$V_d$ & In the graph indicated by ${\mathcal{G}_d}$ the set of dimensions corresponding to $\Theta$ \\
$E_d$ & In the graph indicated by ${\mathcal{G}_d}$ the set of edges corresponding to the dependencies between $\Theta$ \\
$\Theta^{(i)}$ & Collection of dimensions indicated by $(i)$ corresponding to a maximal clique in the graph ${{\cal{G}}_d}$ \\
$k^{\Theta^{(i)}}$ & A Gaussian process kernel correspond to the maximal clique $(i)$ \\
$k$ & The Gaussian process kernel for inference corresponding to the sum of $k^{\Theta^{(i)}}$ : $k \triangleq \sum_{i} k^{{{\Theta}}^{(i)}}$ \\
$v^{(i)}$ & Under the additive assumption, it is assumed that $v = \sum_{i} v^{(i)}$ where each $v^{(i)}$ is sampled from $k^{\Theta^{(i)}}$ \\
${\cal{U}}(\Theta)$ & A uniform random distribution over the domain $\Theta$ \\
$H(\theta_{t,h})$ & A query to the Hessian at $\theta_{t,h}$ \\
$\widetilde{{\cal{G}}}_d$ & The graph corresponding to the detected dependency structure by querying the Hessian \\
Max-Cliques$(\widetilde{{\cal{G}}}_d)$ & A function computing the maximal cliques in the graph $\widetilde{{\cal{G}}}_d$ \\
$\mathbf{s}$ & The set of states of the cooperative multi-agent system where $\mathbf{s} \triangleq [\mathbf{s}^{i}]_{i=1,\ldots,n}$ and $i$ denotes the index of the agent \\
$\mathbf{a}$ & The set of actions taken by each agent where $\mathbf{a} \triangleq [\mathbf{a}^{i}]_{i=1,\ldots,n}$ and $i$ denotes the index of the agent \\
$\mathbf{s}^{\alpha(i)}$ & The state for agent $a$ taking on the role $\alpha(i)$ \\
$\mathbf{a}^{\alpha(i)}$ & The action taken by agent $a$ taking on the role $\alpha(i)$ \\
$\Lambda^{\theta_{r,i}}$ & An affinity function for taking on role $i$ where $r$ denotes it belonging to the part of the \homodel for role assignment \\
$\Lambda^{\theta_{g, v}}$ & An affinity function determining whether an edge exists during the interaction of roles in the \homodel policy \\
$M^{\theta_{g, \eta}}$ & The message passing function parameterized by $\theta_{g, \eta}$ for the role interaction message passing neural network \\
$U^{\theta_{g, e}}$ & The action update function parameterized by $\theta_{g, e}$ for the role interaction message passing neural network \\
\hline
\end{tabular}
}
\end{table}

\newpage

\section{On the Applicability of Our Assumptions to RBF and Matern Kernel}

We show that our assumption is satisfied by the RBF Kernel when ${\Theta} = [0,1]^D$, and is quasi-satisfied by the Matern$-\frac{5}{2}$ kernel. We also show that in the setting where ${\Theta} = [0,r]^D$ for some bounded $r$, our assumptions are quasi-satisfied as although these kernels may take on small negative values, these values decay exponentially with respect to the distance. These Lemmas show that our assumptions are reasonable.

\begin{lemma}
\label{lem:rbfhesslemma}
Let $k\left({\theta}, {\theta}'\right) \triangleq \exp(\frac{-d^2}{2})$ be the RBF kernel with $d \triangleq \lvert\lvert {\theta} - {\theta}'\rvert\rvert$, then

\begin{equation*}
    k^{\partial i\partial j}({\theta}, {\theta}{'}) = k\left({\theta}, {\theta}'\right) \left(1 - ({\theta}^{i} - {{\theta}'}^{i})^2\right) \left(1 - ({\theta}^{j} - {{\theta}'}^{j})^2\right).
\end{equation*}
\end{lemma}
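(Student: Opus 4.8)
The plan is to reduce the identity to a one-dimensional calculus computation by exploiting the tensor-product structure of the RBF kernel. Recall that $k^{\partial i\partial j}$ is the covariance function of the second-derivative process $\partial^2 v/\partial{\theta}^i\partial{\theta}^j$, i.e. $k^{\partial i\partial j}({\theta},{\theta}')$ is obtained by differentiating $k({\theta},{\theta}')$ once in each of ${\theta}^i,{\theta}^j$ and once in each of ${{\theta}'}^i,{{\theta}'}^j$; since the RBF kernel is $C^\infty$ with Gaussian tails these mixed partials exist and commute. Writing the kernel in product form, $k({\theta},{\theta}')=\exp\!\bigl(-\tfrac12\sum_{m=1}^{D}({\theta}^m-{{\theta}'}^m)^2\bigr)=\prod_{m=1}^{D}\phi({\theta}^m-{{\theta}'}^m)$ with $\phi(t)\triangleq e^{-t^2/2}$, and assuming $i\neq j$, each of the four derivatives hits only the $i$-th or the $j$-th factor, so the whole computation factorizes across coordinates.

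First I would do the single-coordinate step. Put $s\triangleq{\theta}^i-{{\theta}'}^i$; then $\phi'(t)=-t\,e^{-t^2/2}$ and $\phi''(t)=(t^2-1)\,e^{-t^2/2}$, so, using $\partial s/\partial{{\theta}'}^i=-1$, differentiating $\phi({\theta}^i-{{\theta}'}^i)$ once in ${\theta}^i$ and once in ${{\theta}'}^i$ gives $-\phi''(s)=(1-s^2)\,\phi(s)$. Thus each active coordinate contributes its original factor $\phi({\theta}^i-{{\theta}'}^i)$ times the ``curvature factor'' $1-({\theta}^i-{{\theta}'}^i)^2$, and likewise for coordinate $j$.

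Then I would reassemble: for $i\neq j$,
\begin{equation*}
k^{\partial i\partial j}({\theta},{\theta}')=\Bigl(-\phi''({\theta}^i-{{\theta}'}^i)\Bigr)\Bigl(-\phi''({\theta}^j-{{\theta}'}^j)\Bigr)\prod_{m\neq i,j}\phi({\theta}^m-{{\theta}'}^m),
\end{equation*}
and substituting $-\phi''(s)=(1-s^2)\phi(s)$ and folding the recovered factors $\phi({\theta}^i-{{\theta}'}^i)$ and $\phi({\theta}^j-{{\theta}'}^j)$ back into the product rebuilds $k({\theta},{\theta}')$, yielding exactly $k({\theta},{\theta}')\bigl(1-({\theta}^i-{{\theta}'}^i)^2\bigr)\bigl(1-({\theta}^j-{{\theta}'}^j)^2\bigr)$. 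This also makes the hypotheses of Theorem~\ref{thm:thm1} transparent on ${\Theta}=[0,1]^D$: the diagonal value $k^{\partial i\partial j}({\theta},{\theta})=1$ is constant, and $k^{\partial i\partial j}\geq0$ because both curvature factors are nonnegative whenever the coordinatewise distances are at most $1$.

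There is no real obstacle; the only points needing care are (i) the interchange of differentiation with the finite product and the commutativity of the mixed partials, which is immediate from smoothness of $\phi$ (equivalently, mean-square differentiability of the GP to all orders); and (ii) noting that the product formula is intended for distinct indices $i\neq j$ — precisely the off-diagonal Hessian entries relevant to Assumption~\ref{assump:erenyi} and Theorem~\ref{thm:thm1} — since for $i=j$ the single-coordinate factor is $\phi''''(s)/\phi(s)=s^4-6s^2+3$ rather than $(1-s^2)^2$, a case that plays no role in dependency detection. The companion Matern$-\frac{5}{2}$ computation flagged just before the lemma would be carried out the same way, but the absence of a clean product structure is what produces the small, exponentially decaying negative lobes described there as only ``quasi-satisfying'' the assumptions.
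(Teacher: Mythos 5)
Your proof is correct and follows essentially the same route as the paper's: both invoke the fact that the covariance of the second-derivative process is the mixed fourth partial $\partial^4 k/\partial\theta^i\partial{\theta'}^i\partial\theta^j\partial{\theta'}^j$ and then compute it for the RBF kernel, the paper by differentiating the multivariate expression twice in succession and you by factorizing $k$ into one-dimensional Gaussian factors and using $-\phi''(s)=(1-s^2)\phi(s)$. Your explicit remark that the stated formula holds only for $i\neq j$ (the paper's second differentiation step silently treats the factor $1-(\theta^a-{\theta'}^a)^2$ as constant, which requires $b\neq a$) is a small but welcome clarification, and it is the only case needed for the dependency-detection argument.
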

\begin{proof}

As shown in \citep{rasmussen2004gaussian} Section 9.4, the derivative of a Gaussian Process is also a Gaussian Process. Let $GP(0, k\left({\theta}, {\theta}'\right))$ be the GP from which $f$ is sampled. This implies:

\begin{equation*}
    \frac{\partial f}{\partial {\theta}^a} \sim GP\left(0, \frac{\partial^2 k\left({\theta}, {\theta}'\right)}{\partial {\theta}^a \partial {\theta}{'}^a}\right).
\end{equation*}

Applying this rule once more for the Hessian, we have:

\begin{equation*}
    \frac{\partial^2 f}{\partial {\theta}^b {\theta}^a } \sim GP\left(0, \frac{\partial^4 k\left({\theta}, {\theta}'\right)}{\partial {\theta}^b \partial {\theta}{'}^b \partial {\theta}^a \partial {\theta}{'}^a}\right).
\end{equation*}

Given the above identities, we compute the partial derivatives for the RBF kernel:

\begin{equation*}
\frac{\partial^2 k\left({\theta}, {\theta}'\right)}{\partial {\theta}^a \partial {\theta}{'}^a} = \exp{\left(-\frac{\lvert\lvert {\theta} - {\theta}{'}\rvert\rvert^2}{2}\right)} \left(1 - ({\theta}^a - {\theta}{'}^a)^2\right).
\end{equation*}

Deriving once more we have:

\begin{equation*}
\frac{\partial^4 k\left({\theta}, {\theta}'\right)}{\partial {\theta}^b \partial {\theta}{'}^b \partial {\theta}^a \partial {\theta}{'}^a} = \exp{\left(-\frac{\lvert\lvert {\theta} - {\theta}{'}\rvert\rvert^2}{2}\right)} \left(1 - ({\theta}^a - {\theta}{'}^a)^2\right)  \left(1 - ({\theta}^b - {\theta}{'}^b)^2\right).
\end{equation*}

This completes the proof noting that $k\left({\theta}, {\theta}'\right) \triangleq \exp(\frac{-d^2}{2})$ with $d \triangleq \lvert\lvert {\theta} - {\theta}'\rvert\rvert$.
\end{proof}

\begin{corollary}
Let $k\left({\theta}, {\theta}'\right) \triangleq \exp(\frac{-d^2}{2})$, and ${\theta}, {\theta}' \in [0,1]^D$, then  $k^{\partial i\partial j}({\theta}, {\theta}{'}) \geq 0$.
\end{corollary}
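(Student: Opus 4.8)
The plan is to invoke Lemma~\ref{lem:rbfhesslemma} directly, since it already supplies the closed form
\[
k^{\partial i\partial j}({\theta}, {\theta}{'}) = k\left({\theta}, {\theta}'\right)\left(1 - ({\theta}^{i} - {{\theta}'}^{i})^2\right)\left(1 - ({\theta}^{j} - {{\theta}'}^{j})^2\right),
\]
so the corollary reduces to checking the sign of each factor under the hypothesis ${\theta},{\theta}'\in[0,1]^D$. First I would note that $k({\theta},{\theta}') = \exp(-d^2/2) > 0$ always, so this factor never affects the sign. Next, for each coordinate index $c$, since ${\theta}^c,{{\theta}'}^c\in[0,1]$ we have ${\theta}^c - {{\theta}'}^c\in[-1,1]$, hence $({\theta}^c - {{\theta}'}^c)^2\in[0,1]$ and therefore $1 - ({\theta}^c - {{\theta}'}^c)^2\ge 0$. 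Applying this with $c=i$ and $c=j$ shows both bracketed factors are nonnegative.

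The conclusion then follows by multiplying the three nonnegative quantities: $k^{\partial i\partial j}({\theta},{\theta}')$ is a product of a strictly positive term and two nonnegative terms, hence nonnegative. I would close by remarking that this is exactly the hypothesis $k^{\partial i\partial j}(\theta,\theta')\ge 0$ required by Theorem~\ref{thm:thm1} and Assumption~\ref{assump:erenyi}, which is the purpose of stating the corollary.

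There is essentially no obstacle here; the only point requiring the slightest care is tracking that the relevant quantity is the \emph{difference} of coordinates (not the coordinates themselves), and that the domain restriction $[0,1]^D$ is precisely what forces each squared difference into $[0,1]$. If the domain were instead $[0,r]^D$ with $r>1$, the bracketed factors could go negative for widely separated points, which is why the paper later discusses the ``quasi-satisfied'' regime; but for the stated corollary the hypothesis is exactly tight and the argument is immediate.
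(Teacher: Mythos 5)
Your proposal is correct and follows exactly the paper's argument: apply Lemma~\ref{lem:rbfhesslemma} and observe that the exponential factor is positive while each bracketed factor $1 - ({\theta}^{c} - {{\theta}'}^{c})^2$ is nonnegative because coordinate differences lie in $[-1,1]$ on $[0,1]^D$. Nothing further is needed.
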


\begin{proof}
The above is straightforward to see as $\exp\left(\cdot\right) \geq 0$ and with ${\theta}, {\theta}' \in [0,1]^D$ we have $\left(1 - ({\theta}^a - {\theta}{'}^a)^2\right) \geq 0$ $\left(1 - ({\theta}^b - {\theta}{'}^b)^2\right) \geq 0$.
\end{proof}

\begin{corollary}
Let $k\left({\theta}, {\theta}'\right) \triangleq \exp(\frac{-d^2}{2})$, and ${\theta}, {\theta}' \in [0,r]^d$, then  $k^{\partial i\partial j}({\theta}, {\theta}{'}) \geq c \exp(-d^2)$ for some constant $c$ dependent on $r$.
\end{corollary}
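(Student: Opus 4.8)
The plan is to reduce everything to the closed form supplied by Lemma~\ref{lem:rbfhesslemma}, namely
\[
k^{\partial i\partial j}({\theta}, {\theta}') = \exp\!\Bigl(-\tfrac{d^2}{2}\Bigr)\bigl(1 - ({\theta}^{i} - {{\theta}'}^{i})^2\bigr)\bigl(1 - ({\theta}^{j} - {{\theta}'}^{j})^2\bigr),
\]
so that the statement is entirely about lower-bounding the scalar prefactor $\bigl(1 - \delta_i\bigr)\bigl(1 - \delta_j\bigr)$, where I write $\delta_i \triangleq ({\theta}^{i} - {{\theta}'}^{i})^2$ and $\delta_j \triangleq ({\theta}^{j} - {{\theta}'}^{j})^2$, and then accounting for the Gaussian factor. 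Throughout I take the exponent to be $D$ (the corollary's lowercase $d$ is plainly the ambient dimension, not the norm $\|{\theta}-{\theta}'\|$).

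First I would use ${\theta}, {\theta}' \in [0,r]^D$ to get $0 \le \delta_i, \delta_j \le r^2$, so that each factor $1 - \delta_\bullet$ lies in $[1-r^2, 1]$ and hence has absolute value at most $1 + r^2$. This gives the clean, $r$-only bound $\bigl(1 - \delta_i\bigr)\bigl(1 - \delta_j\bigr) \ge -(1+r^2)^2$, valid for every $r$ at once (for $r \le 1$ the product is in fact nonnegative, but this uniform estimate avoids a case split). Since $\exp(-d^2/2) > 0$, multiplying through yields $k^{\partial i\partial j}({\theta}, {\theta}') \ge -(1+r^2)^2 \exp(-d^2/2)$.

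The remaining step is to trade the prefactor $\exp(-d^2/2)$ for $\exp(-d^2)$ at the cost of a constant. Because ${\theta}, {\theta}' \in [0,r]^D$, the distance is bounded: $d^2 = \|{\theta} - {\theta}'\|^2 \le r^2 D$, hence $\exp(-d^2/2) = \exp(d^2/2)\exp(-d^2) \le \exp(r^2 D/2)\,\exp(-d^2)$. Multiplying this inequality by the negative constant $-(1+r^2)^2$ (which reverses it) and chaining with the previous display gives $k^{\partial i\partial j}({\theta}, {\theta}') \ge -(1+r^2)^2 \exp(r^2 D/2)\,\exp(-d^2)$, so the corollary holds with $c \triangleq -(1+r^2)^2\exp(r^2 D/2)$, a constant determined by $r$ (and the ambient dimension).

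I do not expect a genuine obstacle; the only point requiring care is the sign bookkeeping in the last step. Since $1 - \delta_i$ can be negative once $r > 1$, one cannot use the naive bound $\exp(-d^2/2) \ge \exp(-d^2)$ — multiplying it by a negative coefficient points the wrong way — so it is essential to use boundedness of the domain to control $d^2$ from above. A shorter but less quantitative alternative would be to note that $k^{\partial i\partial j}$ is continuous on the compact set $[0,r]^D \times [0,r]^D$, hence bounded below by some $c_0$, and then rescale $c_0$ using $\exp(-d^2) \in [\exp(-r^2 D), 1]$; I would nonetheless present the explicit version above, since it makes transparent the exponential-decay behaviour referenced in the surrounding discussion.
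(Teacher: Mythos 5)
Your proof is correct and follows essentially the same route as the paper's own (very terse) argument: substitute the closed form from Lemma~\ref{lem:rbfhesslemma}, bound the polynomial prefactor $\bigl(1-(\theta^i-\theta'^i)^2\bigr)\bigl(1-(\theta^j-\theta'^j)^2\bigr)$ by a constant depending on $r$, and trade $\exp(-d^2/2)$ for $\exp(-d^2)$ using the boundedness of the domain. Your explicit constant $c=-(1+r^2)^2\exp(r^2D/2)$ simply makes the paper's one-line "strong tail decay" remark precise, and your observation that $c$ unavoidably also depends on the ambient dimension (via $d^2\le r^2D$) once $r>1$ is a fair point the paper glosses over.
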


\begin{proof}
    The above is straightforward given the above Lemma. We note that although the RBF kernel may take on negative values in the domain $\Theta = [0,r]^d$, this values experience strong tail decay showing the quasi-satisfaction of our assumptions.
\end{proof}

The above Lemma and Corollary shows that our assumptions are satisfied by the RBF Kernel when $\Theta = [0,1]^D$, and quasi satisfied when $\Theta = [0, r]^D$ after choosing a suitable $p_h$ and $\sigma_h^2$. We show how these assumptions are quasi-satisfied by the Matern-$\frac{5}{2}$ kernel.

\begin{lemma}
\label{lem:maternhesslemma}
Let $k\left({\theta}, {\theta}'\right) \triangleq (1+\sqrt{5}d + \frac{5}{3}d^2)\exp(-\sqrt{5}d)$ be the Matern-$\frac{5}{2}$ kernel with $d \triangleq \lvert\lvert {\theta} - {\theta}'\rvert\rvert$, then with $d_i \triangleq {\theta}^{i} - {{\theta}'}^{i}$ we have
\begin{equation*}
    k^{\partial i\partial j}({\theta}, {\theta}{'}) = \exp(-\sqrt{5}d) \left(\frac{5\sqrt{5}}{3} - \frac{25}{3d} d_i^2 - \frac{25}{3d} d_j^2 + \frac{25\sqrt{5}}{3d^2} d_i^2 d_j^2 + \frac{25}{3d^3} d_i^3 d_j^3\right).
\end{equation*}
\end{lemma}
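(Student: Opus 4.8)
The plan is to follow the template of the proof of \Cref{lem:rbfhesslemma}. By \citet{rasmussen2004gaussian} (Section~9.4) the derivative of a Gaussian process is again a Gaussian process, so applying this fact twice shows that the $(i,j)$ entry of the Hessian is a Gaussian process whose covariance is the fourth mixed partial derivative $k^{\partial i\partial j}({\theta},{\theta}') = \partial^4 k({\theta},{\theta}') \,/\, \bigl(\partial{\theta}^i\,\partial{{\theta}'}^i\,\partial{\theta}^j\,\partial{{\theta}'}^j\bigr)$. Hence the lemma reduces to a direct differentiation of the radial kernel $k({\theta},{\theta}') = g(d)$, where $g(d)\triangleq(1+\sqrt5\,d+\tfrac53 d^2)e^{-\sqrt5\,d}$ and $d\triangleq\lVert{\theta}-{\theta}'\rVert$, carried out on the region $d>0$.

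First I would isolate the ``mixed second derivative in coordinate $i$'' as an operator acting on an arbitrary smooth radial profile $h$. Using the elementary chain-rule identities $\partial_{{\theta}^i}d = d_i/d$, $\partial_{{{\theta}'}^i}d = -d_i/d$ and $\partial_{{{\theta}'}^i}d_i = -1$, one obtains
\[
A_i[h] \;\triangleq\; \partial_{{\theta}^i}\partial_{{{\theta}'}^i}\,h(d) \;=\; \Bigl(\tfrac{h'(d)}{d^{3}}-\tfrac{h''(d)}{d^{2}}\Bigr)\,d_i^{2} \;-\; \tfrac{h'(d)}{d},
\]
so $A_i[g]$ is a radial coefficient times $d_i^{2}$ plus a radial remainder; as a sanity check this already reproduces the intermediate identity $\partial^{2}k/(\partial{\theta}^{a}\partial{{\theta}'}^{a}) = e^{-d^{2}/2}(1-d_a^{2})$ from \Cref{lem:rbfhesslemma} when $g(d)=e^{-d^{2}/2}$.

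Next, since $d_i^{2}$ is independent of ${\theta}^j$ and ${{\theta}'}^j$, I would compute $k^{\partial i\partial j} = A_j\!\bigl[A_i[g]\bigr]$ by applying $A_j$ (in its radial form) to each of the two scalar radial coefficients of $A_i[g]$ and multiplying back by the monomials. This produces groups of terms with monomial factors $d_i^{2}d_j^{2}$, $d_i^{2}$, $d_j^{2}$ and $1$, each carrying a radial coefficient built from $g',g'',g'''$ and powers of $1/d$. Finally I would substitute the explicit derivatives of the Matern profile --- starting from $g'(d)=-\tfrac53 d(1+\sqrt5\,d)e^{-\sqrt5\,d}$ and iterating --- into those coefficients, factor out the common $e^{-\sqrt5\,d}$, and collect; the cancellations reduce the coefficients to the constants displayed in the statement, and they expose the $1/d$, $1/d^{2}$, $1/d^{3}$ singularities at $d=0$, which is precisely why the Matern-$\tfrac52$ kernel only \emph{quasi}-satisfies the hypothesis of \Cref{thm:thm1} on $k^{\partial i\partial j}({\theta},{\theta})$ near the diagonal, rather than satisfying it outright as the RBF kernel does on $[0,1]^D$.

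The main obstacle is bookkeeping rather than anything conceptual. Unlike the RBF case, where every derivative of $e^{-d^{2}/2}$ stays a polynomial-in-$d$ multiple of $e^{-d^{2}/2}$ and the coefficient groups collapse into a clean product, each differentiation of the Matern profile introduces fresh negative powers of $d$, and one must carefully track that the outer operator $A_j$ differentiates the $d$-dependence produced by $A_i$ while leaving $d_i$ untouched. Keeping every intermediate expression in the form (radial coefficient) $\times$ (monomial in $d_i,d_j$), and deferring the substitution of the explicit Matern derivatives until the very end, keeps the algebra manageable.
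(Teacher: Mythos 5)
Your plan follows the same route as the paper's own proof: invoke the fact that the derivative of a GP is again a GP, so the $(i,j)$ Hessian entry has covariance $\partial^4 k/\bigl(\partial\theta^i\partial{\theta'}^i\partial\theta^j\partial{\theta'}^j\bigr)$, and then differentiate the radial Matern profile directly. Your operator identity $A_i[h]=\bigl(\tfrac{h'(d)}{d^3}-\tfrac{h''(d)}{d^2}\bigr)d_i^2-\tfrac{h'(d)}{d}$ is correct, as is $g'(d)=-\tfrac{5}{3}d(1+\sqrt{5}d)e^{-\sqrt{5}d}$ and the RBF sanity check; the paper does exactly this computation, merely asserting the two intermediate displays without the bookkeeping.

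However, your final step --- ``the cancellations reduce the coefficients to the constants displayed in the statement'' --- would not go through, and your own bookkeeping already shows why: applying $A_j$ to $A_i[g]$ can only produce the even monomials $1$, $d_i^2$, $d_j^2$, $d_i^2d_j^2$ (the mixed derivative is invariant under $d_i\mapsto-d_i$), so the $d_i^3d_j^3$ term in the display is unreachable. Carrying the algebra out gives $A_i[g]=e^{-\sqrt{5}d}\bigl(\tfrac{5}{3}+\tfrac{5\sqrt{5}}{3}d-\tfrac{25}{3}d_i^2\bigr)$, which matches the paper's intermediate step, and then $k^{\partial i\partial j}(\theta,\theta')=e^{-\sqrt{5}d}\bigl(\tfrac{25}{3}-\tfrac{25\sqrt{5}}{3d}d_i^2-\tfrac{25\sqrt{5}}{3d}d_j^2+\tfrac{125}{3d^2}d_i^2d_j^2+\tfrac{25\sqrt{5}}{3d^3}d_i^2d_j^2\bigr)$; that is, each term in the lemma's display is short by a factor of $\sqrt{5}$ and the last monomial should be $d_i^2d_j^2$, not $d_i^3d_j^3$. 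The slip lies in the paper's displayed formula (its proof simply states it), not in your method, and the corrected expression has the same qualitative behavior ($e^{-\sqrt{5}d}$ decay with $1/d$, $1/d^2$, $1/d^3$ singularities near the diagonal), so the quasi-satisfaction corollaries are unaffected --- but a faithful execution of your plan will not land on the stated constants, and you should flag the corrected coefficients rather than claim agreement.
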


\begin{proof}

Following the proof of Lemma \ref{lem:rbfhesslemma}, we state the partial derivatives of the Matern-$\frac{5}{2}$ kernel:

\begin{equation*}
\frac{\partial^2 k\left({\theta}, {\theta}'\right)}{\partial {\theta}^a \partial {\theta}{'}^a} = \exp{\left(-\sqrt{5}\lvert\lvert {\theta} - {\theta}{'}\rvert\rvert\right)} \left(\frac{5}{3} + \frac{5\sqrt{5}}{3}\lvert\lvert {\theta} - {\theta}{'}\rvert\rvert -  \frac{25}{3}({\theta}^a - {\theta}{'}^a)^2\right).
\end{equation*}

Differentiating one more we have

\begin{equation*}
\begin{split}
\frac{\partial^4 k\left({\theta}, {\theta}'\right)}{\partial {\theta}^b \partial {\theta}{'}^b \partial {\theta}^a \partial {\theta}{'}^a} = \exp{\left(-\sqrt{5}\lvert\lvert {\theta} - {\theta}{'}\rvert\rvert\right)} &\\
\Bigg(\frac{5\sqrt{5}}{3} -  \frac{25}{3d}({\theta}^a - {\theta}{'}^a)^2 -  \frac{25}{3d}({\theta}^b - & {\theta}{'}^b)^2 +  \frac{25\sqrt{5}}{3d^2}({\theta}^a - {\theta}{'}^a)^2({\theta}^b - {\theta}{'}^b)^2\\ + \frac{25}{3d^3}({\theta}^a - {\theta}{'}^a)^3({\theta}^b - {\theta}{'}^b)^3 \Bigg).
\end{split}
\end{equation*}

This completes the proof noting that $d_i \triangleq {\theta}^{i} - {{\theta}'}^{i}$ and $d \triangleq \lvert\lvert {\theta} - {\theta}'\rvert\rvert$.
\end{proof}
\begin{corollary}
Let $k\left({\theta}, {\theta}'\right) \triangleq (1+\sqrt{5}d + \frac{5}{3}d^2)\exp(-\sqrt{5}d)$ and ${\theta}, {\theta}' \in [0,1]^D$. Then $k^{\partial i\partial j}({\theta}, {\theta}{'}) \geq \exp(-\sqrt{5}d)\left(\frac{5\sqrt{5}}{3} - \frac{25}{3d} - \frac{25}{3d} -  \frac{25}{3d^3}\right)$.
\end{corollary}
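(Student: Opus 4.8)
The plan is to start directly from the closed form for $k^{\partial i\partial j}$ established in Lemma~\ref{lem:maternhesslemma}, namely
\begin{equation*}
k^{\partial i\partial j}(\theta, \theta') = \exp(-\sqrt{5}d)\left(\frac{5\sqrt{5}}{3} - \frac{25}{3d} d_i^2 - \frac{25}{3d} d_j^2 + \frac{25\sqrt{5}}{3d^2} d_i^2 d_j^2 + \frac{25}{3d^3} d_i^3 d_j^3\right),
\end{equation*}
with $d_i \triangleq \theta^{i} - {\theta'}^{i}$ and $d \triangleq \lvert\lvert \theta - \theta' \rvert\rvert$, and to bound the bracketed expression termwise using the constraint $\theta,\theta' \in [0,1]^D$.

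First I would observe that $\theta, \theta' \in [0,1]^D$ forces $|d_i| \le 1$ and $|d_j| \le 1$, hence $d_i^2 \le 1$ and $d_j^2 \le 1$, which gives $-\frac{25}{3d} d_i^2 \ge -\frac{25}{3d}$ and likewise $-\frac{25}{3d} d_j^2 \ge -\frac{25}{3d}$. Next, the term $\frac{25\sqrt{5}}{3d^2} d_i^2 d_j^2$ is nonnegative and may simply be discarded. Finally, $|d_i^3 d_j^3| = |d_i|^3 |d_j|^3 \le 1$, so $\frac{25}{3d^3} d_i^3 d_j^3 \ge -\frac{25}{3d^3}$. Since $\exp(-\sqrt{5}d) > 0$, multiplying the resulting lower bound on the bracket by this positive factor preserves the inequality and yields exactly
\begin{equation*}
k^{\partial i\partial j}(\theta, \theta') \ge \exp(-\sqrt{5}d)\left(\frac{5\sqrt{5}}{3} - \frac{25}{3d} - \frac{25}{3d} - \frac{25}{3d^3}\right),
\end{equation*}
as claimed.

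There is essentially no hard step here: the only points requiring care are (i) tracking the sign of each monomial in $d_i, d_j$ so that the cross term $d_i^3 d_j^3$, which may be negative, is bounded below rather than above, and (ii) using the positivity of $\exp(-\sqrt{5}d)$ so that the final multiplication is direction-preserving. In the spirit of the companion RBF corollary, one could additionally note that this lower bound is only informative (nonnegative) once $d$ is bounded away from $0$, since the $1/d$ and $1/d^3$ terms blow up as $d \to 0$; this is precisely why the Matern-$\frac{5}{2}$ case is described as only \emph{quasi}-satisfying the assumptions of Theorem~\ref{thm:thm1}. That remark is not needed for the inequality itself but explains why the statement is phrased as a raw bound rather than as $k^{\partial i\partial j} \ge 0$.
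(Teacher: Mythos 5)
Your proof is correct and follows essentially the same route as the paper's own (one-line) argument, which simply cites Lemma~\ref{lem:maternhesslemma} and the fact that $\lvert d_i\rvert \le 1$; you have just written out the termwise bounds explicitly, including the valid steps of discarding the nonnegative $\frac{25\sqrt{5}}{3d^2}d_i^2d_j^2$ term and lower-bounding the possibly negative $\frac{25}{3d^3}d_i^3d_j^3$ term. Your closing remark about the bound only being informative for $d$ bounded away from $0$ accurately reflects why the paper calls this a quasi-satisfaction of its assumptions.
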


\begin{proof}
The above is an immediate consequence of Lemma \ref{lem:maternhesslemma} and noting that $\lvert\lvert d_i \rvert\rvert \leq 1$.
\end{proof}

\begin{corollary}
Let $k\left({\theta}, {\theta}'\right) \triangleq (1+\sqrt{5}d + \frac{5}{3}d^2)\exp(-\sqrt{5}d)$ and ${\theta}, {\theta}' \in [0,r]^d$. Then $k^{\partial i\partial j}({\theta}, {\theta}{'}) \geq c\exp(-d)$ for some $c$ dependent on $r$.
\end{corollary}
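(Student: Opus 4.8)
The plan is to feed the closed form from Lemma~\ref{lem:maternhesslemma} into a termwise lower bound. Writing $d_i \triangleq \theta^i - \theta'^i$ and $d \triangleq \|\theta - \theta'\|$, that lemma gives
\[
k^{\partial i\partial j}(\theta,\theta') = e^{-\sqrt5\,d}\Big(\frac{5\sqrt5}{3} - \frac{25}{3d}d_i^2 - \frac{25}{3d}d_j^2 + \frac{25\sqrt5}{3d^2}d_i^2 d_j^2 + \frac{25}{3d^3}d_i^3 d_j^3\Big).
\]
The only ingredient needed is the elementary coordinate bound $d_i^2 \le \sum_k d_k^2 = d^2$, i.e. $|d_i| \le d$ (and likewise for $j$). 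Applying it termwise: $-\frac{25}{3d}d_i^2 \ge -\frac{25}{3}d$, the same for the $j$-term, the quartic term $\frac{25\sqrt5}{3d^2}d_i^2 d_j^2$ is nonnegative and can be dropped, and $\frac{25}{3d^3}d_i^3 d_j^3 \ge -\frac{25}{3d^3}|d_i|^3|d_j|^3 \ge -\frac{25}{3}d^3$. Hence
\[
k^{\partial i\partial j}(\theta,\theta') \ge e^{-\sqrt5\,d}\Big(\frac{5\sqrt5}{3} - \frac{50}{3}d - \frac{25}{3}d^3\Big).
\]
The same estimate dispels the apparent $1/d,\,1/d^2,\,1/d^3$ singularities at the diagonal: every negative power of $d$ is multiplied by at least as many factors $d_i$ (or $d_j$), so the right-hand side is in fact continuous at $d=0$ with value $\frac{5\sqrt5}{3}>0$.

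Next I would peel off the target exponential. Writing $e^{-\sqrt5\,d} = e^{-d}\,e^{-(\sqrt5-1)d}$, the previous display becomes $k^{\partial i\partial j}(\theta,\theta') \ge e^{-d}\,g(d)$ with $g(d) \triangleq e^{-(\sqrt5-1)d}\big(\frac{5\sqrt5}{3} - \frac{50}{3}d - \frac{25}{3}d^3\big)$. Since $\theta,\theta' \in [0,r]^D$ (here $D$ is the ambient dimension --- the exponent written ``$d$'' in the statement --- which I keep separate from the distance $d$ of Lemma~\ref{lem:maternhesslemma}), the distance satisfies $d \le r\sqrt D$, so $g$ need only be examined on the compact interval $[0, r\sqrt D]$, on which it is continuous and hence attains a minimum $c \triangleq \min_{0\le d\le r\sqrt D} g(d)$ depending on $r$ (and $D$). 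This yields $k^{\partial i\partial j}(\theta,\theta') \ge c\,e^{-d}$, which is the claim. (One could even note that $g$ is continuous on all of $[0,\infty)$ with $g(0)>0$ and $g(d)\to 0$ as $d\to\infty$, so $c \triangleq \inf_{d\ge0} g(d)$ works independently of $r$; only the $r$-dependent version is asserted.)

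I do not expect a genuine obstacle here: the argument is routine, and the only points requiring care are the sign bookkeeping in the termwise estimate and the coordinate-versus-dimension notation clash just flagged. Conceptually, the content worth highlighting is that the lower bound $c\,e^{-d}$ decays exponentially in the separation $d$: although $k^{\partial i\partial j}$ can be negative for intermediate $d$, any violation of the nonnegativity hypothesis used in Theorem~\ref{thm:thm1} is exponentially small in $d$, which is exactly the ``quasi-satisfaction'' of our assumptions claimed for the Matern-$\frac52$ kernel on $[0,r]^D$, mirroring the corresponding RBF corollary.
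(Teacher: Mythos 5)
Your proof is correct, and it follows the same basic template as the paper's one-line argument (substitute the closed form from Lemma~\ref{lem:maternhesslemma} and bound termwise), but the elementary bound you use is different and in fact does more work. The paper simply invokes $\lvert d_i\rvert \le r$, exactly as in the preceding $[0,1]^D$ corollary; that substitution leaves negative powers $1/d$ and $1/d^3$ in the lower bound, which diverge as $\theta'\to\theta$, so on its own it does not produce a finite constant $c$ with $k^{\partial i\partial j}\ge c\exp(-d)$ without an extra argument near the diagonal. Your choice $\lvert d_i\rvert\le d$ preserves the cancellation, yields the singularity-free bound $e^{-\sqrt5 d}\bigl(\tfrac{5\sqrt5}{3}-\tfrac{50}{3}d-\tfrac{25}{3}d^3\bigr)$, and then the factorization $e^{-\sqrt5 d}=e^{-d}e^{-(\sqrt5-1)d}$ plus compactness of $[0,r\sqrt D]$ (or your sharper remark that $g$ is bounded on all of $[0,\infty)$, making $c$ independent of $r$) finishes the claim cleanly. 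So what you gain is a complete and continuous-at-the-diagonal argument that actually delivers the advertised $c\,e^{-d}$ form and makes the "exponentially small negativity" message explicit; what the paper's shorter route gains is only brevity, at the cost of glossing over the $d\to 0$ behavior. Your flag of the notational clash between the domain exponent $d$ in the statement and the distance $d$ in the lemma is also warranted; the intended reading is the ambient dimension, as you assumed.
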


\begin{proof}
The above is an immediate consequence of Lemma \ref{lem:maternhesslemma} and noting that $\lvert\lvert d_i \rvert\rvert \leq r$.
\end{proof}

Although the above corollary shows that the Matern-$\frac{5}{2}$ kernel may take on negative values, we note that these values experience strong tail decay due to the presence of the $\exp\left({-\sqrt{5}d}\right)$ term. Thus, the negative values are likely to be extremely small, thus quasi-satisfying our assumptions. In our experiments, we observed no shortcoming in using the Matern-$\frac{5}{2}$ kernel in \hagpucb.

\newpage

\section{Proof of Proposition 1}
We restate Proposition \ref{prop:realpropone} for clarity.

\realpropone*

\begin{proof}
    The above follows from the linearity of addition, which naturally implies a lack of curvature. In the multivariate case, this corresponds to zero or non-zero entries in the Hessian.

    To be precise, we prove the contrapositive:
    \begin{equation*}
        (\Theta^{a}, \Theta^{b}) \notin E_d \implies \frac{\partial^2 v}{\partial \theta^a \partial \theta^b} = 0.
    \end{equation*}

    Let $a, b$ be arbitrary dimensions with $(\Theta^{a}, \Theta^{b}) \notin E_d$. As a consequence of the definition of the dependency graph, $\nexists \Theta^{(i)}$ s.t. $\{\Theta^a, \Theta^b \} \subseteq \Theta^{(i)}$. That is, no subfunction $v^{(i)}$ takes both $\theta^a$ and $\theta^b$ as arguments.

    By the linearity of the partial derivative, we see that:

    \begin{equation*}
        \frac{\partial^2 }{\partial \theta^a \partial \theta^b} v(\theta) = \frac{\partial^2 }{\partial \theta^a \partial \theta^b} \sum_{i=1}^{M} v^{(i)}(\theta^{(i)}) = \sum_{i=1}^{M} \frac{\partial^2 }{\partial \theta^a \partial \theta^b} v^{(i)}(\theta^{(i)}) = 0
    \end{equation*}

    where the last equality follows from no subfunction $v^{(i)}$ taking both $\theta^a$ and $\theta^b$ as arguments.
\end{proof}

\newpage

\section{Proof of Theorem 1}

Our proof of Theorem 1 relies in being able to determine whether an edge does or does not exist in the dependency graph. To be able to do this, we examine the Hessian. As we have shown in Proposition \ref{prop:realpropone}, examining the Hessian answers this question. The challenge of Theorem \ref{thm:thm1} is detecting this dependency under noisy observations of the Hessian, as well as in domains where the variance of the second partial derivative is often zero, i.e., $k^{\partial i\partial j}({\theta}, {\theta}{'}) = 0$ with high probability. To overcome this challenge, we sample the Hessian multiple times to both find portions of the domain where $k^{\partial i\partial j}({\theta}, {\theta}{'}) \geq \sigma^2_h$, and also reduce the effect of the noise on learning the dependency structure. To proceed with the analysis, we first prove a helper lemma showing that if we can construct two Normal variables of sufficiently different variances, then it's possible to accurately determine which Normal variable has low, and high variance by taking a singular sample from each. This helper lemma will be used later to help determine edges in the dependency graph. As we shall soon show, If an edge exists, we are able to construct a Normal variable with high variance. Correspondingly, if an edge does not exist, we are able to construct a Normal variable with low variance.

\begin{lemma}
\label{lem:halfnormall}

Let $X_l \sim {\cal{N}}(0, \sigma_l^2)$ and $X_h \sim {\cal{N}}(0, \sigma_h^2)$ be two random univariate gaussian variables. For any $\delta \in (0,1)$, $\exists \, c_h$ s.t. $\lvert X_l \rvert \leq c_h \leq \lvert X_h \rvert$ with probability $1-\delta$ when $\frac{\sigma_h^2}{\sigma_l^2} > \frac{8}{\delta^2}\log{\frac{2}{\delta}}$ and  precisely when $\frac{\sigma_h \delta}{2} > c_h > \sigma_l \sqrt{2\log{\frac{2}{\delta}}}$. 
\end{lemma}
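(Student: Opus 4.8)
The plan is to pick a single threshold $c_h$ in the open interval
\[
\mathcal{I} \triangleq \left( \sigma_l\sqrt{2\log\frac{2}{\delta}},\ \frac{\sigma_h\delta}{2} \right),
\]
and then control the two failure events $\{\lvert X_l\rvert > c_h\}$ and $\{\lvert X_h\rvert < c_h\}$ separately, bounding each by $\delta/2$, before closing with a union bound (which needs no independence of $X_l$ and $X_h$, since it is distribution-free). First I would observe that $\mathcal{I}$ is nonempty \emph{if and only if} $\sigma_l\sqrt{2\log\frac{2}{\delta}} < \frac{\sigma_h\delta}{2}$, i.e.\ exactly when $\frac{\sigma_h^2}{\sigma_l^2} > \frac{8}{\delta^2}\log\frac{2}{\delta}$, which is the hypothesis; so any $c_h \in \mathcal{I}$ is admissible, and these are precisely the thresholds the argument delivers, matching the stated window.

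Next I would bound the upper tail of $X_l$ using the sharp one-sided Gaussian bound $\mathbb{P}(Z \ge t) \le \tfrac12 e^{-t^2/2}$ for a standard normal $Z$ and all $t \ge 0$ (itself provable by a short monotonicity argument on $\tfrac12 e^{-t^2/2} - \mathbb{P}(Z\ge t)$, using the Mills-ratio bound in the far tail). This gives $\mathbb{P}(\lvert X_l\rvert > c_h) = 2\,\mathbb{P}(Z > c_h/\sigma_l) \le e^{-c_h^2/(2\sigma_l^2)}$, and since $c_h > \sigma_l\sqrt{2\log\frac{2}{\delta}}$ the exponent strictly exceeds $\log\frac{2}{\delta}$, so $\mathbb{P}(\lvert X_l\rvert > c_h) < \delta/2$.

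Then I would handle the anti-concentration of $X_h$ by bounding the standard normal density by its maximum value $\tfrac{1}{\sqrt{2\pi}}$, so that $\mathbb{P}(\lvert X_h\rvert < c_h) = \mathbb{P}(\lvert Z\rvert < c_h/\sigma_h) \le \tfrac{2}{\sqrt{2\pi}}\cdot\tfrac{c_h}{\sigma_h} = \sqrt{\tfrac{2}{\pi}}\,\tfrac{c_h}{\sigma_h} < \tfrac{c_h}{\sigma_h}$; since $c_h < \frac{\sigma_h\delta}{2}$ this is strictly below $\delta/2$. A union bound then yields $\mathbb{P}\big(\lvert X_l\rvert \le c_h \le \lvert X_h\rvert\big) \ge 1 - \mathbb{P}(\lvert X_l\rvert > c_h) - \mathbb{P}(\lvert X_h\rvert < c_h) > 1 - \delta$, and $\mathcal{I}$ is exactly the set of valid choices of $c_h$, as claimed.

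The main (and rather modest) obstacle is calibrating the constants: I must use the factor-$\tfrac12$ form of the Gaussian tail — the cruder $e^{-t^2/2}$ would only buy probability $\delta$, not $\delta/2$, on the left tail — and the density-maximum bound $\tfrac{1}{\sqrt{2\pi}} < \tfrac12$ to absorb the stray factor of $2$ on the anti-concentration side. With those in place, the remainder is one union bound and the elementary algebra converting ``$\mathcal{I}$ nonempty'' into the stated variance-ratio threshold.
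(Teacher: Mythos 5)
Your proposal is correct and follows essentially the same route as the paper's proof: the same window $\bigl(\sigma_l\sqrt{2\log(2/\delta)},\,\sigma_h\delta/2\bigr)$, the same split of the failure probability into two events each bounded by $\delta/2$, a union bound, and the same algebra showing nonemptiness of the window is exactly the variance-ratio condition. The only difference is cosmetic: for the low-variance tail you use $\mathbb{P}(Z\ge t)\le\tfrac12 e^{-t^2/2}$ (equivalent to the paper's $\erfc(x)\le e^{-x^2}$), and for the anti-concentration of $X_h$ you bound the density by $\tfrac{1}{\sqrt{2\pi}}$ (i.e.\ the linear bound on $\erf$) where the paper instead uses $\erf(x)\le\sqrt{1-e^{-2x^2}}$; both yield $\delta/2$ under the same threshold.
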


\begin{proof}
First we note that $\lvert X_l \rvert$ and $\lvert X_h \rvert$ are Half-Normal random variables, with cumulative distribution function of $F_l(x) = \erf{\frac{x}{\sigma_l \sqrt{2}}}$ and $F_h(x) = \erf{\frac{x}{\sigma_h \sqrt{2}}}$ respectively. Thus to show that $\lvert X_l \rvert \leq \sigma_l \sqrt{2\log{\frac{2}{\delta}}}$ and $\lvert X_h \rvert \geq \frac{\sigma_h \delta}{2}$ with high probability, we utilize well known bounds on the $\erf$ and $\erfc$ function. The proofs of the below can be found in several places, e.g., \citet{chu1955bounds} and \citet{ermolovaerfc} respectively.

\begin{equation*}
    \erf{x} \leq \sqrt{1 - \exp{-2x^2}} \, ; \,\, \erfc{x} \leq \exp{-x^2}.
\end{equation*}

Given the above, we show that $p(c_h \leq \lvert X_l \rvert) \leq \frac{\delta}{2}$ and $p(c_h \geq \lvert X_h \rvert) \leq \frac{\delta}{2}$ and utilizing the union bound completes the proof.

\[
c_h > \sigma_l \sqrt{2\log{\frac{2}{\delta}}} \implies c_h^2 > 2\sigma_l^2 \log{\frac{2}{\delta}} \implies \frac{c_h^2}{2\sigma_l^2} > - \log{\frac{\delta}{2}} \implies -\frac{c_h^2}{2\sigma_l^2} < \log{\frac{\delta}{2}}
\]
\[
\implies \exp{-\frac{c_h^2}{2\sigma_l^2}} \leq \frac{\delta}{2} \implies \erfc{\frac{c_h}{\sqrt{2}\sigma_l}} < \frac{\delta}{2} \implies 1 - \erf{\frac{c_h}{\sqrt{2}\sigma_l}} \geq 1 - \frac{\delta}{2} \implies F_l(c_h) \geq 1 - \frac{\delta}{2}
\]

\[
\implies p\left(c_h \leq \lvert X_l \rvert \right) < \frac{\delta}{2}.
\]

Following a similar line of reasoning we have:

\[
c_h < \frac{\sigma_h \delta}{2} \implies \frac{c_h^2}{\sigma_h^2} < \frac{\delta^2}{4} \implies \frac{-c_h^2}{\sigma_h^2} > -\frac{\delta^2}{4} \implies \frac{-c_h^2}{\sigma_h^2} > \log{1-\frac{\epsilon^2}{4}} \implies \exp{-\frac{c_h^2}{\sigma_h^2}} > 1 - \frac{\delta^2}{4}
\]
\[
\implies 1 - \exp{-\frac{c_h^2}{\sigma_h^2}} < \frac{\delta^2}{4} \implies \sqrt{1 - \exp{-\frac{c_h^2}{\sigma_h^2}}} < \frac{\delta}{2} \implies \erf{\frac{c_h}{\sigma_h \sqrt{2}}} < \frac{\delta}{2} \implies F_h(c_h) < \frac{\delta}{2}
\]
\[
\implies p(c_h \geq \lvert X_h \rvert) < \frac{\delta}{2}.
\]

Finally, to complete the proof, we show that the interval $(\sigma_l\sqrt{2\log{\frac{2}{\delta}}}, \frac{\sigma_h\delta}{2})$ is not the empty set when $\frac{\sigma_h^2}{\sigma_l^2} > \frac{8}{\delta^2}\log{\frac{2}{\delta}}$.

\[
\frac{\sigma_h^2}{\sigma_l^2} > \frac{8}{\delta^2}\log{\frac{2}{\delta}} \implies \frac{\sigma_h}{\sigma_l} > \frac{2\sqrt{2}}{\delta}\sqrt{\log{\frac{2}{\delta}}} \implies \frac{\sigma_h\delta}{2} > \sigma_l\sqrt{2\log{\frac{2}{\delta}}}.
\]
\end{proof}

We are now ready to prove Theorem \ref{thm:thm1}.

\theoremone*

\begin{proof}
We prove the above for a single pair of variables, i.e., $k^{\partial i \partial j}$ and utilize the union bound to complete the proof. The first challenge to overcome is to sufficiently sample enough points in the domain such that we are able to find enough points $\theta \in {\Theta}$ where $k^{\partial i \partial j}(\theta, \theta) \geq \sigma^2_h$. To achieve this we sample $T_0$ different $\theta$ in the domain. After sampling $T_0$ points if there exists an edge between ${\Theta}^{a}$, and ${\Theta}^{b}$, then with probability $1 - \frac{\delta_2}{D^2}$ we have sampled $\frac{T_0 p_h}{2} - \frac{D^2}{2 \delta_2}$ points where $k^{\partial i \partial j}({\theta}, {\theta})\geq \sigma_h^2$. To show the above we use bounds on the cumulative distribution of the Binomial distribution. A bound is given $T_0$ trials, with $p_h$ probability of success, the probability of having fewer than $s$ successes is upper bounded as follows:

\begin{equation*}
    \frac{1}{T_0 p_h - 2s}.
\end{equation*}

The above bound derives from the following well known tail bound~\cite{feller1991introduction}:

\begin{equation*}
    P\left( S_n \leq s \right) \leq \frac{(T_0 - s)p_h}{(T_0 p_h -s)^2} \,\,\,\, \text{if} \,\,\,\,  s \leq T_0 p_h
\end{equation*}

where $S_n$ denotes the number of successes. The above bound can be loosened by the following process:

\[
\frac{(T_0 - s)p_h}{(T_0p_h-s)^2} \leq \frac{T_0p_h}{(T_0p_h-s)^2} = \frac{T_0p_h}{T_0^2 p_h^2 + s^2 - 2T_0p_hs} \leq \frac{T_0p_h}{T_0^2 p_h^2 - 2T_0p_hs} = 
\]
\[
\frac{T_0p_h}{T_0p_h(T_0p_h - 2s)} = \frac{1}{T_0p_h - 2s}
\]

which yields the bound that we utilize. We note the above bound requires $s \leq \frac{T_0 p_h}{2} - \frac{1}{2}$, however if it is the case that $s \geq \frac{T_0 p_h}{2} - \frac{1}{2}$ then the worst case tail analysis is unnecessary since $\frac{T_0 p_h}{2} - \frac{1}{2} \geq \frac{T_0p_h}{2} - \frac{D^2}{2 \delta_2}$ and our results still hold.

Given the above, we use $\delta_2$ and derive:

\begin{equation*}
    \frac{1}{T_0p_h - 2(\frac{T_0p_h}{2} - \frac{D^2}{2 \delta_2})} \leq \frac{\delta_2}{D^2}.
\end{equation*}

Given the above, with at least $\frac{T_0p_h}{2} - \frac{D^2}{2 \delta_2}$ points where $k^{\partial i \partial j}(\cdot, \cdot) \geq \sigma_h^2$, as well as our assumption $k^{\partial i \partial j}({\theta}, {\theta}) \geq 0$, we apply Bienaym\'e's identity which we restate for convenience:

\begin{equation*}
    \Var \left[ \sum_{\ell=1}^{C_1} h_{t, \ell} \right] = \sum_{\ell=1}^{C_1} \sum_{\ell'=1}^{C_1} \Cov \left(h_{t, \ell}, h_{t, \ell'} \right).
\end{equation*}

Noting each of the $\frac{T_0p_h}{2} - \frac{D^2}{2 \delta_2}$ successes is sampled $C_1 = T_0$ times with $\Cov \left(h_{t, \ell}, h_{t, \ell'} \right)\geq \sigma_h^2$ for each of the successes and $\Cov \left(h_{t, \ell}, h_{t, \ell'} \right) \geq 0$ for all samples by our assumption. Applying Bienaym\'e's identity and the sum of (correlated) Normal variables is also a normal variable, we have $\Var \left[ \sum_{t=1}^{C_1}\sum_{\ell=1}^{C_1} h_{t, \ell} \right] \geq (\frac{T_0p_h}{2} - \frac{D^2}{2 \delta_2})T_0^2 \sigma_h^2$. Compare this quantity with the variance if no edge exists between ${\Theta}^{a}$, and ${\Theta}^b$, where the variance results from i.i.d.~noise: $\Var \left[ \sum_{t=1}^{T_0}\sum_{\ell=1}^{T_0} h_{t, \ell} \right] = T_0^2 \sigma_n^2$. Comparing these two quantities, with an appropriately picked $c_h$ determines the edge between ${\Theta}^{a}$ and ${\Theta}^b$ using Lemma \ref{lem:halfnormall}. By Lemma \ref{lem:halfnormall}, letting $c_h \triangleq T_0 \sigma_{n} \sqrt{2 \log \frac{2D^2}{\delta_1}}$ ensures that $p(h^{i,j} < c_h) < \frac{\delta_1}{D^2}$ if edge $E_d^{i,j}$ exists, and $p(h^{i,j} > c_h) < \frac{\delta_1}{D^2}$ if edge $E_d^{i,j}$ does not exist. Applying the union bound over $D^2$ pairs of variables completes the proof with $\bigcap_{i,j} P(\widetilde{E}_d^{i,j} = E_d^{i,j}) \geq 1 - \delta_1 - \delta_2$.

\end{proof}

\newpage

\section{Proof of Theorem 2}

Our proof of Theorem \ref{thm:thm2} is presented under the same setting and assumptions as the work of~\citet{srinivas2009gaussian}.

To prove Theorem \ref{thm:thm2}, we rely on several helper lemmas. The high-level sketch of the proof is to use the properties of Erd\H{o}s-R\'enyi graph to bound both the \emph{size of the maximal clique} as well as \emph{the number of maximal cliques} with high probability. Once these two quantities are bounded, we are able to analyze the mutual information of the kernel constructed by \emph{summing the kernels corresponding to the maximal cliques} of the sampled Erd\H{o}s-R\'enyi graph as indicated in Assumption \ref{assump:erenyi}. Finally, once this mutual information is bounded, we use similar analysis as \citet{srinivas2009gaussian} to complete the regret bound.

We begin by bounding the size of the maximal cliques.

\begin{lemma}
\label{lem:maxcliques}
Let ${\cal{G}}_d = (V_d, E_d)$ be sampled from a Erd\H{o}s-R\'enyi model with probability $p_g$: ${\cal{G}}_d \sim G(D,p_g)$, then $\forall \delta \in (0,1)$ the largest clique of ${\cal{G}}_d$ is bounded above by 

\begin{equation*}
    \lvert \text{Max-Clique}({\cal{G}}_d) \rvert \leq 2 \log_{\frac{1}{p_g}} \lvert V_d \rvert + 2 \sqrt{\log_{\frac{1}{p_g}}{\frac{\lvert V_d \rvert}{\delta} + 1}}
\end{equation*}
\end{lemma}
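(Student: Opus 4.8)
The plan is to bound the probability that $\mathcal{G}_d \sim G(D, p_g)$ contains a clique of size $k$ and then choose $k$ large enough to make this probability at most $\delta$. First I would apply a union bound over all $\binom{D}{k}$ candidate vertex subsets of size $k$: the probability that a fixed $k$-subset forms a clique is $p_g^{\binom{k}{2}}$, since each of the $\binom{k}{2}$ potential edges is present independently with probability $p_g$. Thus
\begin{equation*}
\mathbb{P}\bigl[\lvert \text{Max-Clique}(\mathcal{G}_d)\rvert \geq k\bigr] \leq \binom{D}{k} p_g^{\binom{k}{2}} \leq D^k p_g^{k(k-1)/2},
\end{equation*}
using $\binom{D}{k} \leq D^k$. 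I would then take logarithms base $1/p_g$: the right-hand side is at most $\delta$ precisely when $k \log_{1/p_g} D \leq \log_{1/p_g}(1/\delta) + \tfrac{k(k-1)}{2}$, i.e. when $\tfrac{k(k-1)}{2} - k\log_{1/p_g} D + \log_{1/p_g}(1/\delta) \geq 0$.

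Next I would solve this quadratic inequality in $k$. Treating $L \triangleq \log_{1/p_g} D$ and $L_\delta \triangleq \log_{1/p_g}(1/\delta)$ as constants, the inequality $k^2 - k(2L+1) + 2L_\delta \geq 0$ holds once $k$ exceeds the larger root, which by the quadratic formula is $\tfrac{1}{2}\bigl((2L+1) + \sqrt{(2L+1)^2 - 8L_\delta}\bigr) \leq (2L+1)$... wait, more carefully, I want an explicit clean upper bound of the stated form $2L + 2\sqrt{L + \log_{1/p_g}(\lvert V_d\rvert/\delta) + 1}$. The cleanest route is to \emph{verify} that the claimed value $k^\star \triangleq 2\log_{1/p_g}\lvert V_d\rvert + 2\sqrt{\log_{1/p_g}(\lvert V_d\rvert/\delta) + 1}$ satisfies the inequality $\binom{D}{k^\star} p_g^{\binom{k^\star}{2}} \leq \delta$ by direct substitution and crude bounding (e.g. $k^\star(k^\star-1)/2 \geq (k^\star/2)^2 \cdot (\text{something})$ after noting $k^\star \geq 2$ so $k^\star - 1 \geq k^\star/2$), rather than computing the exact root; this sidesteps messy algebra while giving the stated form. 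So I would write $\binom{k^\star}{2} \geq \tfrac{k^\star(k^\star-1)}{2} \geq \tfrac{(k^\star)^2}{4}$ (valid since $k^\star \geq 2$) and then check $(k^\star)^2/4 \geq k^\star L + L_\delta$, which reduces, after dividing by $k^\star$ and using $k^\star \geq 2\sqrt{L_\delta + 1} \geq \sqrt{L_\delta}$, to something that follows from $k^\star \geq 2L + 2\sqrt{\cdots}$.

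The main obstacle I anticipate is the bookkeeping in the final substitution step: choosing exactly the right crude inequalities ($k^\star - 1 \ge k^\star/2$, bounding $\binom{k^\star}{2}$ from below, handling the $+1$ inside the square root) so that the algebra closes to yield precisely the stated bound with $\log_{1/p_g}(\lvert V_d\rvert/\delta) = L + L_\delta$ appearing under the root. There is also a minor subtlety that $k$ must be an integer, so strictly one takes $k = \lceil k^\star \rceil$, but since the bound is an inequality and the failure probability only decreases with larger $k$, rounding up is harmless and I would note this in passing. Everything else — the union bound, the independence of edges, the logarithm manipulation — is routine.
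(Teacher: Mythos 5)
Your overall strategy is the same first-moment argument the paper uses: bounding $\Pr[\text{Max-Clique} \geq k]$ by $\binom{D}{k}p_g^{\binom{k}{2}} \leq D^k p_g^{k(k-1)/2}$ via a union bound over $k$-subsets is exactly the paper's $\mathbb{E}[C_k]$ computation followed by Markov's inequality, and then both arguments plug in the claimed threshold and check the exponent. The one genuine (minor) difference is that you correctly observe a single clique size suffices, since a max clique of size at least $k$ contains a clique of size exactly $\lceil k\rceil$; the paper instead unions over all sizes $i=k,\ldots,n$ and therefore demands $\mathbb{E}[C_i]\leq \delta/n$ per size. Your route is cleaner and only needs the bound $\leq\delta$, with your remark on rounding $k$ up being the right way to handle integrality.

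However, the specific algebraic shortcut you float for the verification step does not close, and this is precisely the step you flagged as the anticipated obstacle. Write $L \triangleq \log_{1/p_g}\lvert V_d\rvert$, $L_\delta \triangleq \log_{1/p_g}(1/\delta)$, $S \triangleq \sqrt{L+L_\delta+1}$, so the claimed threshold is $k^\star = 2L+2S$ and the requirement is $\tfrac{k^\star(k^\star-1)}{2} \geq k^\star L + L_\delta$. If you weaken $\binom{k^\star}{2}$ to $(k^\star)^2/4=(L+S)^2$, the requirement becomes $L^2+2LS+S^2 \geq 2L^2+2LS+L_\delta$, i.e.\ $S^2 - L_\delta \geq L^2$, i.e.\ $L+1\geq L^2$, which is false whenever $L > \tfrac{1+\sqrt{5}}{2}$ (e.g.\ $D$ even moderately large relative to $1/p_g$); the bound $\binom{k}{2}\geq k^2/4$ discards half the quadratic term, which is exactly what is needed against the $2L^2$ on the right. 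The fix is simply to keep $\binom{k^\star}{2}=\tfrac{k^\star(k^\star-1)}{2}$ exactly:
\begin{equation*}
\frac{k^\star(k^\star-1)}{2} - k^\star L - L_\delta \;=\; 2(L+S)S - (L+S) - L_\delta \;=\; 2LS + L + L_\delta + 2 - S \;\geq\; 2LS+1 \;\geq\; 0,
\end{equation*}
using $2S^2 = 2(L+L_\delta+1)$ and $S \leq S^2 = L+L_\delta+1$ (valid since $S\geq 1$). With that substitution your plan goes through and yields the stated bound; indeed it does so with the milder target $\leq\delta$ rather than the paper's $\leq\delta/\lvert V_d\rvert$, so the claimed threshold is satisfied with room to spare.
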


with probability at least $1 - \delta$.

\begin{proof}
The above relies on well known upper bounds on the maximal clique size on a graph sampled from an Erd\H{o}s-R\'enyi model. As shown in \citep{bollobas1976cliques} and \citep{matula1976largest} the expected number of Cliques of size $k$, $\mathbb{E}\left[C_k\right]$ is given by:

\begin{equation*}
    \mathbb{E}\left[C_k\right] = {\lvert V_d \rvert \choose k} \frac{1}{p_g}^{- {k\choose 2}} \leq {\lvert V_d \rvert}^{k} {\frac{1}{p_g}}^{- \frac{k(k-1)}{2}} = {\frac{1}{p_g}}^{\frac{k}{2}\left(2\log_{\frac{1}{p_g}}{\lvert V_d \rvert} - k + 1\right)}.
\end{equation*}

In the sequel, we omit the base of the $\log$: $\frac{1}{p_g}$ for clarity. To bound the size of the maximal clique, we find a suitable $k$ such that $\mathbb{E}\left[C_k\right] \leq \frac{\delta}{n}$ and utilize the union bound over $[C_i]_{i=k,\ldots,n}$ where we have $\lvert [C_i]_{i=k,\ldots,n} \rvert \leq n$. Finally, we utilize Markov's inequality to complete the proof.

\begin{equation*}
    \text{Let } k = 2\log{\lvert V_d \rvert} + 2\sqrt{\log{\frac{\lvert V_d \rvert}{\delta} + 1}}.
\end{equation*}

We utilize the above bound on $\mathbb{E}\left[C_k\right]$.

\begin{equation*}
\begin{split}
    \implies \frac{k}{2}\left(2\log_{\frac{1}{p_g}}{\lvert V_d \rvert} - k + 1\right)& = \\ \left(\log \lvert V_d \rvert + \sqrt{\log{\frac{n}{\delta}}}\right)&\left(2\log\lvert V_d \rvert - 2 \log \lvert V_d \rvert - 2\sqrt{\log{\frac{n}{\delta}}+1} + 1\right)
\end{split}
\end{equation*}
\begin{equation*}
    \leq -\log \lvert V_d \rvert - \log{\frac{n}{\delta}} + 1 \leq \log{\frac{\delta}{n}}
\end{equation*}

\begin{equation*}
    \implies \mathbb{E}\left[C_k\right] \leq \frac{1}{p_g}^{\log{\frac{\delta}{n}}} = \frac{\delta}{n}.
\end{equation*}

The proof is complete by noting that by Markov inequality, $p(C_k \geq 1) \leq \mathbb{E}\left[C_k\right]$ and taking the union bound over at most $n$ members of $[C_i]_{i=k,\ldots,n}$.
\end{proof}

\newpage

Next, we bound the total number of maximal cliques:

\begin{lemma}
\label{lem:numcliques}
Let ${\cal{G}}_d = (V_d, E_d)$ be sampled from a Erd\H{o}s-R\'enyi model with probability $p$: ${\cal{G}}_d \sim G(D,p_g)$, then $\forall \delta \in (0,1)$ the number of total maximal cliques in ${\cal{G}}_d$ is bounded above by

\begin{equation*}
    \frac{1}{\delta}\sqrt{\lvert V_d \rvert^{\log_{\frac{1}{p_g}} \lvert V_d \rvert + 5}}
\end{equation*}

with probability at least $1 - \delta$.
\end{lemma}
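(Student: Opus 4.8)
The plan is to bound the number of maximal cliques by the (much larger, but still small in expectation) total number of cliques of every size, and then turn that expectation bound into a high‑probability bound via Markov's inequality. Write $n = \lvert V_d \rvert$ and $L = \log_{1/p_g} n$, let $N$ denote the number of maximal cliques of ${\cal{G}}_d$, and let $X = \sum_{j=0}^{n} X_j$ where $X_j$ is the number of $j$‑element cliques of ${\cal{G}}_d$. Since every maximal clique is in particular a clique, $N \le X$, so it suffices to control $X$. (One could also restrict the sum to $j$ below the Max‑Clique bound of Lemma~\ref{lem:maxcliques} and union‑bound, but bounding all cliques avoids splitting $\delta$ and is cleaner.)

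First I would compute, exactly as in the proof of Lemma~\ref{lem:maxcliques}, $\mathbb{E}[X] = \sum_{j=0}^{n}\binom{n}{j} p_g^{\binom{j}{2}}$. Using $\binom nj \le n^{j}$, the $j$‑th summand is at most $(1/p_g)^{jL - \binom j2}$, and the exponent $jL - \binom j2$ is a downward parabola in $j$ whose maximum (attained near $j = L$) is at most $\tfrac{L^2}{2} + \tfrac{L}{2} + \tfrac18$; hence each summand is at most $(1/p_g)^{L^2/2}(1/p_g)^{L/2}(1/p_g)^{1/8} = n^{L/2}\,n^{1/2}\,n^{1/(8L)}$, which for $L \ge 1$ is at most $n^{L/2 + 5/8}$. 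Summing the $n+1$ terms and using $n+1 \le n^{15/8}$ for $n \ge 2$ yields $\mathbb{E}[X] \le n^{(L+5)/2} = \sqrt{\lvert V_d\rvert^{\log_{1/p_g}\lvert V_d\rvert + 5}}$. The only case left out is $L < 1$, i.e.\ $n < 1/p_g$, where ${\cal{G}}_d$ is so sparse that $\mathbb{E}[X] = \sum_j \binom nj p_g^{\binom j2} \le 3n \le n^{5/2} \le \sqrt{\lvert V_d\rvert^{\log_{1/p_g}\lvert V_d\rvert + 5}}$ directly.

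Finally, Markov's inequality gives $\mathbb{P}\!\left(X \ge \tfrac1\delta\,\mathbb{E}[X]\right) \le \delta$, so with probability at least $1-\delta$,
\[
N \;\le\; X \;<\; \frac1\delta\,\mathbb{E}[X] \;\le\; \frac1\delta\sqrt{\lvert V_d\rvert^{\log_{1/p_g}\lvert V_d\rvert + 5}},
\]
which is the claim.

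The main obstacle is the closed‑form estimate $\mathbb{E}[X] \le \sqrt{\lvert V_d\rvert^{\log_{1/p_g}\lvert V_d\rvert + 5}}$: one has to check that the sum $\sum_j \binom nj p_g^{\binom j2}$ is genuinely dominated by its peak of order $n^{(\log_{1/p_g} n)/2}$ near $j \approx \log_{1/p_g} n$, and that the combined polynomial overhead from (i) the crude bound $\binom nj \le n^{j}$, (ii) the maximization slack in $jL - \binom j2$, and (iii) summing $n+1$ terms, all fits inside the additive constant $5$ in the exponent; this turns out to be tight but valid for $n \ge 2$, with the separate trivial check in the sparse regime $n < 1/p_g$. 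Everything else---the reduction $N \le X$ and the Markov step producing the $1/\delta$ factor---is immediate.
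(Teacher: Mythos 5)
Your proof is correct and takes essentially the same route as the paper's: bound the maximal cliques by the total number of cliques, bound the expected number of size-$j$ cliques by $(1/p_g)^{\frac{j}{2}(2\log_{1/p_g}|V_d| - j + 1)}$, maximize this exponent near $j \approx \log_{1/p_g}|V_d|$, and convert to a high-probability statement with Markov's inequality; the only organizational difference is that you sum the expectations and apply Markov once to the total count, whereas the paper applies Markov to each size-$j$ count (with threshold inflated by $|V_d|/\delta$) and union-bounds over the $|V_d|$ sizes, both routes cashing in the same extra polynomial factors to land on the $+5$ in the exponent. (One tiny nit: in your sparse regime the step $3|V_d| \le |V_d|^{5/2}$ fails at $|V_d| = 2$, but that case is trivially within the stated bound since a two-vertex graph has at most two maximal cliques.)
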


\begin{proof}
We prove the above by bounding $\max_{k} {C_k}$ with high probability and noting that the number of maximal cliques is bounded by $\sum_k C_k \leq n \max_{k} {C_k}$ with high probability. To bound $\max {C_k}$, we first consider $\max_{k} \mathbb{E}\left[C_k\right]$.

\begin{equation*}
    \max_{k} \mathbb{E}\left[C_k\right] = \max_{k}  {\frac{1}{p_g}}^{\frac{k}{2}\left(2\log_{\frac{1}{p_g}}{\lvert V_d \rvert} - k + 1\right)} = \frac{1}{p_g}^ {\max_{k} \frac{k}{2}\left(2\log_{\frac{1}{p_g}}{\lvert V_d \rvert} - k + 1\right)}.
\end{equation*}

Taking the partial derivative of $\frac{k}{2}\left(2\log_{\frac{1}{p_g}}{\lvert V_d \rvert} - k + 1\right)$ with respect to $k$ we determine the maximum:

\begin{equation*}
    {\arg\max}_{k} \frac{k}{2}\left(2\log_{\frac{1}{p_g}}{\lvert V_d \rvert} - k + 1\right) = \log_{\frac{1}{p_g}} \lvert V_d \rvert + 1.
\end{equation*}

Thus we are able to bound:

\begin{equation*}
\begin{split}
{{\frac{\log_{\frac{1}{p_g}}{\lvert V_d \rvert} + 1}{2}}} \left(2\log_{\frac{1}{p_g}}{\lvert V_d \rvert} - \log_{\frac{1}{p_g}}{\lvert V_d \rvert } - 1 + 1\right)& = \\{{\frac{\log_{\frac{1}{p_g}}{\lvert V_d \rvert} + 1}{2}}} & \left(\log_{\frac{1}{p_g}}{\lvert V_d \rvert}\right) = \frac{1}{2}{\log}_{\frac{1}{p_g}}^2 \lvert V_d \rvert + \frac{1}{2}{\log}_{\frac{1}{p_g}} \lvert V_d \rvert
\end{split}
\end{equation*}

Which yields the bound:

\begin{equation*}
\mathbb{E}\left[C_k\right] \leq \frac{1}{p_g}^{\frac{1}{2}{\log}_{\frac{1}{p_g}}^2 \lvert V_d \rvert + \frac{1}{2}{\log}_{\frac{1}{p_g}} \lvert V_d \rvert} = \sqrt{\lvert V_d \rvert^{\log_{\frac{1}{p_g}} {\lvert V_d \rvert} + 1}}.
\end{equation*}

To complete the proof, we utilize Markov's inequality with $p\left(C_k \geq \frac{\lvert V_d \rvert}{\delta}  \sqrt{\lvert V_d \rvert^{\log_{\frac{1}{p_g}} {\lvert V_d \rvert} + 1}}\right) \leq \frac{\delta}{\lvert V_d \rvert}$ and utilize the union bound over $n$ choices of $k$:

\begin{equation*}
    \sum_{k} C_k \leq \sum_k \frac{\lvert V_d \rvert}{\delta} \sqrt{\lvert V_d \rvert^{\log_{\frac{1}{p_g}} {\lvert V_d \rvert} + 1}} = \frac{1}{\delta}\sqrt{\lvert V_d \rvert^{\log_{\frac{1}{p_g}} {\lvert V_d \rvert} + 5}}
\end{equation*}

with probability $1-\delta$.
\end{proof}

Now that we have bounded both the number of cliques, as well as the sizes of the maximal cliques with high probability, we now consider the mutual information of the kernel constructed by summing the kernels corresponding to the maximal cliques of the dependency graph.

\begin{lemma}
\label{lem:mutinfsum}
Define $I(\mathbf{y}_A ; v) \triangleq \mathrm{H}(\mathbf{y}_A) - \mathrm{H}(\mathbf{y}_A \mid v)$ as the mutual information between $\mathbf{y}_A$ and $v$ with $\mathrm{H}({\cal{N}}(\mu, \Sigma)) \triangleq \frac{1}{2} \log \lvert 2 \pi e \Sigma \rvert$ as the entropy function. Define $\gamma_T^{k} \geq \max_{A \subset {\Theta} : \lvert A \rvert = T} I(\mathbf{y}_A ; v)$ when $v \sim GP(0, k\left({\theta}, {\theta}'\right))$. Let $[k_i]_{i=1,\ldots,M}$ be arbitrary kernels defined on the domain ${\Theta}$ with upper bounds on mutual information $[\gamma_T^{k_i}]_{i=1,\ldots,M}$, then the following holds true:

\begin{equation*}
    \gamma_T^{\sum_{i} k_i} \leq M^2 \max {[\gamma_T^{k_i}]_{i=1,\ldots,M}}.
\end{equation*}
\end{lemma}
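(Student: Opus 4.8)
The plan is to bound the mutual information $I(\mathbf{y}_A; v)$ for $v \sim GP(0, \sum_i k_i)$ directly in terms of the information gains of the individual kernels. The starting point is the standard identity (see \citet{srinivas2009gaussian}) that, for Gaussian observations with noise variance $\sigma^2$, the information gain at a finite point set $A$ with $|A| = T$ equals $I(\mathbf{y}_A; v) = \frac{1}{2}\log\det(\mathbf{I} + \sigma^{-2}\mathbf{K}_A)$, where $\mathbf{K}_A$ is the Gram matrix of the kernel evaluated on $A$. Writing $K \triangleq \sum_{i=1}^M k_i$, the Gram matrix decomposes additively as $\mathbf{K}_A = \sum_{i=1}^M \mathbf{K}_A^{(i)}$, where each $\mathbf{K}_A^{(i)}$ is positive semidefinite. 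So I need to control $\log\det(\mathbf{I} + \sigma^{-2}\sum_i \mathbf{K}_A^{(i)})$ by the quantities $\log\det(\mathbf{I} + \sigma^{-2}\mathbf{K}_A^{(i)})$.

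The key technical step is to use Weyl's inequality on eigenvalues of sums of Hermitian matrices, exactly as the paper announces it will ("bound the mutual information of an additive decomposition ... using Weyl's inequality"). Let $\lambda_1 \geq \cdots \geq \lambda_T$ be the eigenvalues of $\mathbf{K}_A$ and $\lambda_j^{(i)}$ those of $\mathbf{K}_A^{(i)}$. Weyl gives $\lambda_j \le \sum_i \lambda_1^{(i)} \le M \max_i \lambda_1^{(i)}$ — more carefully, one can get $\lambda_j(\mathbf{K}_A) \le \sum_i \lambda_{\lceil j/M \rceil}^{(i)}$ or a similar index-shift bound, but the crude bound $\lambda_j(\mathbf{K}_A) \le M \max_i \lambda_j^{(i)}$ via subadditivity of each ordered eigenvalue ($\lambda_j(\sum_i B_i) \le \sum_i \lambda_j(B_i)$ when the $B_i$ are PSD — this itself follows from Weyl / Ky Fan) is what I would use. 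Then
\begin{equation*}
I(\mathbf{y}_A; v) = \tfrac{1}{2}\sum_{j=1}^T \log\bigl(1 + \sigma^{-2}\lambda_j(\mathbf{K}_A)\bigr) \le \tfrac{1}{2}\sum_{j=1}^T \log\Bigl(1 + \sigma^{-2} M \max_i \lambda_j^{(i)}\Bigr).
\end{equation*}
Using $\log(1 + Mx) \le M\log(1+x)$ for $x \ge 0$, $M \ge 1$, and then $\max_i \lambda_j^{(i)} \le \sum_i \lambda_j^{(i)}$, I get $I(\mathbf{y}_A; v) \le M \sum_i \tfrac{1}{2}\sum_j \log(1 + \sigma^{-2}\lambda_j^{(i)}) = M \sum_i I(\mathbf{y}_A; v^{(i)}) \le M \cdot M \max_i \gamma_T^{k_i}$, taking the sup over $A$ on both sides. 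That yields $\gamma_T^{\sum_i k_i} \le M^2 \max_i \gamma_T^{k_i}$ as claimed.

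The main obstacle I anticipate is getting the eigenvalue bookkeeping clean enough to justify the $\log(1+Mx) \le M\log(1+x)$ reduction without losing more than the stated $M^2$ factor; the naive application of $\lambda_j(\sum B_i) \le \sum_j \lambda_j(B_i)$ summed over $j$ already costs one factor of $M$, and the concavity step costs another, so one must be careful not to double-count. An alternative, cleaner route avoiding concavity is to note $\det(\mathbf{I} + \sigma^{-2}\sum_i \mathbf{K}_A^{(i)}) \le \prod_i \det(\mathbf{I} + \sigma^{-2} M\,\mathbf{K}_A^{(i)})^{1/M}$-type inequalities, or to directly bound $\lambda_j(\mathbf{K}_A) \le M \lambda_{\lfloor (j-1)/M\rfloor + 1}^{\max}$ where $\lambda^{\max}_\ell$ is the $\ell$-th largest among the pooled multiset $\{\lambda^{(i)}_\cdot\}$, which after summing over $j$ and applying $\log(1+Mx)\le M\log(1+x)$ gives the same $M^2$. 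I would present whichever of these makes the constant tracking most transparent; the crude-bound version above is the safest to write out in full.
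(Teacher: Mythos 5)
Your high-level plan coincides with the paper's: express $I(\mathbf{y}_A;v)=\tfrac12\log\det(\mathbf{I}+\sigma^{-2}\mathbf{K}_A)$, use that the Gram matrix of $\sum_i k_i$ is $\sum_i \mathbf{K}_A^{(i)}$, and control the ordered eigenvalues of the sum via Weyl's inequality, paying one factor of $M$ for eigenvalue-index bookkeeping and one for replacing $\sum_i\gamma_T^{k_i}$ by $M\max_i\gamma_T^{k_i}$. However, the step you declare "safest to write out in full" is false: ordered eigenvalues of Hermitian (even PSD) matrices are \emph{not} subadditive index-by-index. Take $B_1=\mathrm{diag}(1,0)$ and $B_2=\mathrm{diag}(0,1)$; then $\lambda_2(B_1+B_2)=1$ while $\lambda_2(B_1)+\lambda_2(B_2)=0$, so $\lambda_j\big(\sum_i B_i\big)\le\sum_i\lambda_j(B_i)$ fails, and a fortiori so does your crude bound $\lambda_j(\mathbf{K}_A)\le M\max_i\lambda_j^{(i)}$. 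Ky Fan gives subadditivity only of the partial sums $\sum_{j\le k}\lambda_j$, and Weyl gives $\lambda_{r+s-1}(A+B)\le\lambda_r(A)+\lambda_s(B)$, which is exactly why an index shift is unavoidable. As written, your main route therefore has a genuine gap at its first eigenvalue inequality.

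The repair is precisely the alternative you mention in passing, and it is what the paper uses (via the corollary of Weyl from Rolland et al.): $\lambda_{\ell M+1}\big(\sum_i B_i\big)\le\sum_i\lambda_{\ell+1}(B_i)$, which for any $j$ yields $\lambda_j(\mathbf{K}_A)\le\sum_i\lambda^{(i)}_{\lceil j/M\rceil}$ (the paper applies it to $B_i=\mathbf{I}+\sigma^{-2}\mathbf{K}_A^{(i)}$, bounding $\log\det$ of $M\mathbf{I}+\sigma^{-2}\mathbf{K}_A\succeq\mathbf{I}+\sigma^{-2}\mathbf{K}_A$, which is equivalent bookkeeping). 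With this bound your concavity step $\log(1+Mx)\le M\log(1+x)$ is not even needed: using $\log\big(1+\sum_i a_i\big)\le\sum_i\log(1+a_i)$ for $a_i\ge 0$, summing over $j=1,\dots,T$, and noting that each shifted index $\lceil j/M\rceil$ is reused at most $M$ times gives $I(\mathbf{y}_A;v)\le M\sum_i I(\mathbf{y}_A;v^{(i)})\le M\sum_i\gamma_T^{k_i}\le M^2\max_i\gamma_T^{k_i}$, i.e.\ the stated bound, essentially as in the paper. Your pooled-eigenvalue variant, as sketched, does suffer the double counting you worry about (repetition, concavity, and the pass from pooled eigenvalues to the individual information gains each cost a factor, landing beyond $M^2$), so the index-shifted sum bound is the version to commit to.
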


To prove the above, we first state Weyl's inequality for convenience:
\begin{lemma}
Let $H, P \in \mathbb{R}^{n \times n}$ be two Hermitian matrices and consider the matrix $M = H + P$. Let $\mu_i, \nu_i, \rho_i, i = 1,\ldots, n$ be the eigenvalues of M, H, and P respectively in decreasing order. Then, for all $i \geq r + s - 1$ we have
\begin{equation*}
\mu_i \leq \nu_r + \rho_s.
\end{equation*}
\end{lemma}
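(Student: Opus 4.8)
The plan is to prove Weyl's inequality from the variational (Courant--Fischer) characterisation of the eigenvalues of a symmetric matrix, together with a dimension count that produces a single vector witnessing all three required bounds simultaneously. Since $H$ and $P$ are real symmetric (so is $M=H+P$), fix orthonormal eigenbases $\{w_1,\dots,w_n\}$, $\{u_1,\dots,u_n\}$, $\{q_1,\dots,q_n\}$ of $M$, $H$, $P$ respectively, indexed so that the corresponding eigenvalues $\mu_j$, $\nu_j$, $\rho_j$ are non-increasing. Define the three subspaces $\mathcal{W} = \mathrm{span}\{w_1,\dots,w_i\}$, $\mathcal{U} = \mathrm{span}\{u_r,\dots,u_n\}$ and $\mathcal{Q} = \mathrm{span}\{q_s,\dots,q_n\}$, of dimensions $i$, $n-r+1$ and $n-s+1$. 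The point of these particular choices is that they force one-sided Rayleigh-quotient estimates: every unit vector $x\in\mathcal{W}$ satisfies $x^\top M x\ge\mu_i$, every unit vector $x\in\mathcal{U}$ satisfies $x^\top Hx\le\nu_r$, and every unit vector $x\in\mathcal{Q}$ satisfies $x^\top Px\le\rho_s$.

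The second step is to intersect the three subspaces. Applying $\dim(A\cap B)\ge\dim A+\dim B-n$ twice gives
\begin{equation*}
\dim\!\left(\mathcal{W}\cap\mathcal{U}\cap\mathcal{Q}\right)\ \ge\ i+(n-r+1)+(n-s+1)-2n\ =\ i-r-s+2\ \ge\ 1,
\end{equation*}
where the last inequality is exactly the hypothesis $i\ge r+s-1$. Hence there is a unit vector $x$ lying in all three subspaces, and for it
\begin{equation*}
\mu_i\ \le\ x^\top M x\ =\ x^\top Hx+x^\top Px\ \le\ \nu_r+\rho_s,
\end{equation*}
which is the asserted inequality.

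There is no serious obstacle here, since the statement is classical; the only thing that needs care is the bookkeeping of orientations and indices. One must take the top-$i$ eigenspace of $M$ (to obtain the lower bound $x^\top Mx\ge\mu_i$) but the bottom eigenspaces of $H$ and $P$ (to obtain the upper bounds), and one must check that the dimension count genuinely consumes the hypothesis $i\ge r+s-1$, since any weaker index condition would leave the triple intersection possibly trivial. An equivalent and even shorter route is to invoke the min--max formula $\mu_k=\min_{\dim S=n-k+1}\ \max_{0\ne x\in S}\,(x^\top Mx)/(x^\top x)$ directly with a carefully chosen $S$, obtaining $\mu_{r+s-1}\le\nu_r+\rho_s$ and then using monotonicity of $k\mapsto\mu_k$ to pass from $r+s-1$ to any $i\ge r+s-1$. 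Downstream, this lemma feeds into the proof of Lemma~\ref{lem:mutinfsum}: it bounds the ordered eigenvalues of the Gram matrix of $\sum_i k_i$ by sums of eigenvalues of the individual Gram matrices, and hence controls the $\log\det$ appearing in the information gain $\gamma_T$.
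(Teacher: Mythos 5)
Your proof is correct: the Rayleigh-quotient bounds on the three subspaces, the dimension count $\dim(\mathcal{W}\cap\mathcal{U}\cap\mathcal{Q})\ge i-r-s+2\ge 1$ under $i\ge r+s-1$, and the final chain $\mu_i\le x^\top Mx = x^\top Hx + x^\top Px\le\nu_r+\rho_s$ are all sound, and the alternative via $\mu_{r+s-1}\le\nu_r+\rho_s$ plus monotonicity of the ordered eigenvalues is equally valid. Note that the paper itself offers no proof of this lemma — it simply quotes Weyl's inequality as a classical fact before deriving its corollary for sums of Gram matrices — so your write-up supplies the standard Courant--Fischer subspace-intersection argument that the paper implicitly relies on, and there is nothing in the paper's treatment for it to diverge from.
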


The above has an immediate Corollary as noted by~\citet{rolland18overlapadd}:

\begin{corollary}
Let $K_i \in \mathbb{R}^{n \times n}$ be Hermitian matrices for $i=1,\ldots,M$ with $K \triangleq \sum_{i}^{M} K_i$. Let $[\lambda^{K_i}_\ell]_{\ell=1,\ldots,n}$ denote the eigenvalues of $K_i$ in decreasing order. Then for all $\ell \in \mathbb{N}_{0}$ such that $\ell M +1 \leq n$ we have 

\begin{equation*}
\lambda_{\ell M+1}^{K} \leq \sum_{i=1}^{M} \lambda_{\ell+1}^{K_i}.
\end{equation*}
\end{corollary}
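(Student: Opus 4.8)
The plan is to prove the corollary by induction on the number of summands $M$, using the stated Weyl inequality as the sole engine and choosing the index offsets so that they telescope. First I would dispose of the base case $M=1$: here $K=K_1$, so $\lambda_{\ell\cdot 1+1}^{K}=\lambda_{\ell+1}^{K_1}$, which is exactly the right-hand side, and the claim holds with equality for every admissible $\ell$. Since every $K_i$ is Hermitian, each partial sum is Hermitian with real eigenvalues, so the decreasing-order indexing is well posed throughout.

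For the inductive step I would assume the bound for any sum of $M-1$ Hermitian matrices and write $K = K' + K_M$ with $K' \triangleq \sum_{i=1}^{M-1} K_i$, then apply Weyl's inequality to this two-term split with $H = K'$ and $P = K_M$. The crucial choice is to take the offsets $r = \ell(M-1)+1$ on $K'$ and $s = \ell+1$ on $K_M$, because then $r+s-1 = \ell(M-1)+\ell+1 = \ell M + 1$, so the hypothesis $i \geq r+s-1$ of Weyl is met with $i = \ell M + 1$ and yields
\begin{equation*}
\lambda_{\ell M+1}^{K} \leq \lambda_{\ell(M-1)+1}^{K'} + \lambda_{\ell+1}^{K_M}.
\end{equation*}
Applying the inductive hypothesis to $K'$ (a sum of $M-1$ matrices, with the same $\ell$) bounds the first term by $\sum_{i=1}^{M-1}\lambda_{\ell+1}^{K_i}$; adding $\lambda_{\ell+1}^{K_M}$ gives the claimed $\sum_{i=1}^{M}\lambda_{\ell+1}^{K_i}$ and closes the induction.

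The only point requiring care beyond the telescoping arithmetic is index admissibility: the ambient constraint $\ell M + 1 \leq n$ forces both $\ell(M-1)+1 \leq n$ and $\ell+1 \leq n$, so the inductive call on $K'$ and the eigenvalue $\lambda_{\ell+1}^{K_M}$ are both well defined. I do not expect a genuine obstacle here; the entire substance of the argument is the bookkeeping of the decreasing-order indices, ensuring that the per-step offset $\ell$ contributed by each $K_M$ accumulates to the total offset $\ell M$ after all $M$ terms are peeled off. A one-line remark verifying that $r+s-1$ lands exactly on $\ell M + 1$ at each stage is what makes the induction go through cleanly.
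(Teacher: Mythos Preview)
Your inductive argument is correct: peeling off one summand at a time with Weyl's inequality, choosing $r=\ell(M-1)+1$ and $s=\ell+1$ so that $r+s-1=\ell M+1$, and then invoking the induction hypothesis on the remaining $M-1$ matrices is exactly the right bookkeeping, and your check of index admissibility is sound. The paper itself does not spell out a proof of this corollary; it simply states it as an immediate consequence of Weyl's inequality and attributes the observation to \citet{rolland18overlapadd}, so your induction is precisely the standard way one fills in that ``immediate'' step.
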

We are now ready to prove Lemma \ref{lem:mutinfsum} using Weyl's inequality and its corollary as a key tool.

\begin{proof}
Given the definition of $I(\mathbf{y}_A ; v) \triangleq  \frac{1}{2} \log \lvert \mathbf{I} + \sigma^{-2} \mathbf{K}_A^k\rvert$~\citep{srinivas2009gaussian} we bound the eigenvalues of $M\mathbf{I} +\sigma^{-2} \sum_{i}^{M} \mathbf{K}_A^{k_i}$ using the eigenvalues of $[I + \sigma^{-2}\mathbf{K}_A^{k_i}]_{i=1,\ldots,M}$ where $k \triangleq \sum_{i=1}^{M} k_i$. Using the above Corollary we see that:

\begin{equation*}
    \lambda^{M\mathbf{I} + \sigma^{-2}K}_{\ell} \leq \sum_{i=1}^{M} \lambda^{I+\sigma^{-2}K_i}_{\lceil{\frac{\ell}{M}}\rceil}.
\end{equation*}

Given the above, we see that $M^2 \max [\gamma^{k_i}_T]_{i=1,\ldots,M} \geq \frac{1}{2} \log \lvert \mathbf{I} + \sigma^{-2} \mathbf{K}_A^k \rvert$ as $\sum_{i}^{M} M \gamma^{k_i}_T \geq  \frac{1}{2} \log \lvert M\mathbf{I} +\sigma^{-2} \sum_{i}^{M} \mathbf{K}_A^{k_i} \rvert$.

\end{proof}

Finally, we require an additional helper lemma to bound the supremum and infimum of a function sampled from a GP. This helper lemma helps bound the regret during the first phase of \hagpucb where we randomly sample the Hessian over the domain.

\begin{lemma}
\label{lem:maxgpfunc}
Let $k\left({\theta}, {\theta}'\right)$ be four times differentiable on the continuous domain ${\Theta} \triangleq [0,r]^D$ for some bounded $r$ (i.e., compact and convex) with $f \sim GP(0, k\left({\theta}, {\theta}'\right))$ then for all $\delta \in (0,1)$ the following holds true:

\begin{equation*}
    \sup_{{\theta} \in [0,r]^D} f \leq c_b \sqrt{D \log \delta^{-1}} ={\cal{O}}\left(\sqrt{D \log \delta^{-1}}\right).
\end{equation*}

\begin{equation*}
    \inf_{{\theta} \in [0,r]^D} f \geq -c_b\sqrt{D \log \delta^{-1}} = {\Omega}\left(-\sqrt{D \log \delta^{-1}}\right).
\end{equation*}

for some constant $c_{b}$ dependent on $\delta$ and $r$, with probability $1 - \delta$.
\end{lemma}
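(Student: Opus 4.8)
The plan is a standard covering argument: combine a pointwise Gaussian tail at the vertices of a fixed, dimension-independent grid with a high-probability uniform Lipschitz bound for the sample path, so that the supremum over all of $[0,r]^D$ is controlled by the supremum over finitely many points plus a small-modulus correction; the bound on $\inf_{[0,r]^D} f$ then follows by replacing $f$ with $-f$, a centered GP with the same covariance $k$, hence the same constants.

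Concretely I would proceed as follows. \emph{Step 1 (pointwise tail).} For each $\theta$, $f(\theta)\sim\mathcal{N}(0,k(\theta,\theta))$; since $k$ is continuous on the compact set $[0,r]^D$, $\bar\sigma^2\triangleq\sup_\theta k(\theta,\theta)<\infty$, so $\mathbb{P}(f(\theta)>u)\le\exp(-u^2/(2\bar\sigma^2))$ for all $u>0$. \emph{Step 2 (uniform Lipschitz bound).} Because $k$ is four times differentiable, $f$ admits (after the usual modification) a continuously differentiable version, and each $\partial f/\partial\theta_j$ is itself a centered GP whose covariance $\partial^2 k/(\partial\theta_j\partial\theta_j')$ is continuous, hence bounded on $[0,r]^D$; this yields a sub-Gaussian tail $\mathbb{P}\big(\sup_\theta\|\nabla f(\theta)\|>L\big)\le a\,e^{-L^2/(2b^2)}$ with constants $a,b$ depending only on the kernel and $r$ (this is precisely the derivative-smoothness estimate used by \citet{srinivas2009gaussian}, obtained by a covering argument on the smooth gradient process). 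Hence with probability at least $1-\delta/2$, $f$ is $\mathcal{L}$-Lipschitz on $[0,r]^D$ with $\mathcal{L}=b\sqrt{2\log(2a/\delta)}=\mathcal{O}(\sqrt{\log\delta^{-1}})$. \emph{Step 3 (discretize).} Fix a grid $\Theta_0\subset[0,r]^D$ of mesh $r/\tau$ with $\tau$ a fixed constant (e.g.\ $\tau=1$, the $2^D$ vertices), so $|\Theta_0|=(\tau+1)^D$; a union bound with Step 1 gives, with probability at least $1-\delta/2$, $\sup_{\theta\in\Theta_0}f(\theta)\le\bar\sigma\sqrt{2\big(D\log(\tau+1)+\log(2/\delta)\big)}$. \emph{Step 4 (combine).} Every $\theta\in[0,r]^D$ lies within Euclidean distance $r\sqrt{D}/(2\tau)$ of a point of $\Theta_0$, so on the intersection of the two events (probability at least $1-\delta$), $\sup_{[0,r]^D}f\le\bar\sigma\sqrt{2\big(D\log(\tau+1)+\log(2/\delta)\big)}+\mathcal{L}\,r\sqrt{D}/(2\tau)$; since $\mathcal{L}=\mathcal{O}(\sqrt{\log\delta^{-1}})$ and $\tau,\bar\sigma,a,b,r$ are constants, the right-hand side is at most $c_b\sqrt{D\log\delta^{-1}}$ for a suitable $c_b=c_b(r,\delta)$, and applying the same to $-f$ gives the matching $\Omega(-\sqrt{D\log\delta^{-1}})$ bound on $\inf_{[0,r]^D}f$.

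The step I expect to be the main obstacle is Step 2: establishing that $\sup_\theta\|\nabla f(\theta)\|$ has a genuine sub-Gaussian tail with constants independent of $D$. Four-times differentiability of $k$ guarantees that the gradient process exists and has bounded, continuous (indeed Lipschitz) covariance, which is exactly what makes the estimate go through, but converting this into the explicit tail requires either a careful self-contained covering/concentration argument applied to $\nabla f$ (essentially the machinery of Steps 3--4 one derivative down, using Borell--TIS to control fluctuations about the mean and the bounded canonical diameter to keep the $D$-dependence at $\sqrt{D}$) or a direct appeal to the smoothness lemma of \citet{srinivas2009gaussian}; all remaining steps are routine Gaussian-tail bookkeeping.
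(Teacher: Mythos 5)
Your proposal is correct and takes essentially the same route as the paper: the paper's proof likewise reduces to the per-coordinate derivative tail bound of \citet{srinivas2009gaussian} (their Lemma 5.8), union-bounds over the $D$ coordinates, and converts it via the mean value theorem into a high-probability Lipschitz constant $K_1=\sqrt{D}\,L$ on the compact domain, which is exactly the key ingredient of your Step 2. If anything your write-up is more complete than the paper's: the paper stops at the Lipschitz bound and asserts the supremum estimate, whereas your Steps 1, 3, and 4 (pointwise Gaussian tails on a grid plus the Lipschitz correction) supply the anchoring the paper leaves implicit, and the dimension bookkeeping you flag as the main obstacle in Step 2 is treated no more carefully in the paper itself, whose $L$ already carries a $\log(2aD/\delta)$ and whose $K_1$ a $\sqrt{D}$ factor absorbed into the constant $c_b$.
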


\begin{proof}
The proof of the above is contained in \cite{srinivas2009gaussian} Lemma 5.8. We restate the key parts of the proof herein. In the setting of ~\cite{srinivas2009gaussian} there exist constants $a,b_i$, such that

\begin{equation*}
    P\left[\sup_{\theta \in \Theta}  \left\lvert \frac{\partial v}{\partial \theta_i} \right\rvert  > L \right] \leq ae^{-b_i L^2}.
\end{equation*}

Letting $L = [\log(Da2/\delta) / \min_i b_i]^{1/2}$, we have that $ae^{-b_i L^2} \leq \delta /(2D)$ for all $i = 1,\ldots, D$, so that for $K_1 = D^{1/2}L$ by the mean value theorem, we have 

\begin{equation*}
P\left[\lvert v(\theta) - v(\theta') \rvert \,\, \leq \,\, K_1 \lvert \lvert \theta - \theta' \rvert \rvert \,\,\,\, \forall \,\, \theta, \, \theta' \in \Theta \right] \geq 1 - \delta / 2.
\end{equation*}
\end{proof}

The proof is complete by noting that $K_1 = \mathcal{O}((\log \delta^{-1})^{1/2})$ and picking an appropriate $c_b$ dependent on $\delta$ and $r$.

We are now ready to prove Theorem \ref{thm:thm2}.

\theoremtwo*

\newpage
We restate the above theorem with more precision:

\newtheorem{innercustomthm}{Theorem}
\newenvironment{customthm}[1]
  {\renewcommand\theinnercustomthm{#1}\innercustomthm}
  {\endinnercustomthm}

\begin{customthm}{2}\label{eight}

Let $k$ be the kernel as in Assumption \ref{assump:erenyi}, and Theorem \ref{thm:thm1} and for some constants $a,b$,

\begin{equation*}
    P\left[\sup_{\theta \in \Theta}  \left\lvert \frac{\partial v}{\partial \theta_i} \right\rvert  > L \right] \leq ae^{-(L/b)^2}, i = 1,\ldots,D.
\end{equation*}

Let $\gamma_T^k(d): \mathbb{N} \rightarrow \mathbb{R}$ be a monotonically increasing upper bound function on the \emph{mutual information} of kernel $k$ taking $d$ arguments. Let $k\left({\theta}, {\theta}'\right)$ be four times differentiable on the continuous domain ${\Theta} \triangleq [0,r]^d$ for some bounded $r$ (i.e., compact and convex). For any $\delta_1, \delta_2, \delta_3, \delta_4, \delta_5, \delta_6 \in (0,1)$. Let, $\tilde{t} \triangleq t - T_0C_1$ and let

\begin{equation*}
    \beta_t = 2\log(\tilde{t}^2 2\pi^2 / 3 \delta_6^2) + 2D \log (\tilde{t}^2Dbr \sqrt{\log(4Da/\delta_6)})
\end{equation*}
The cumulative regret of \hagpucb is bounded:

\begin{equation*}
    P\left[ R_T \leq 2C_1^2 c_b \sqrt{D \log \delta_5^{-1}} + \sqrt{C_2 T \beta_T \gamma_T} + 2 \,\,\,\,\, \forall T \geq 1\right] \geq 1 - \delta_1 - \delta_2 - \delta_3 - \delta_4 - \delta_5 - \delta_6
\end{equation*}

when $C_1 = \frac{16D^2}{p_h\delta_1^2}\log{\frac{2D^2}{\delta_1}} \frac{\sigma_n^2}{\sigma_h^2} + \frac{D^2}{2\delta_2}+1$, $C_2 = 8/\log(1+\sigma^{-2})$, and\\ $\gamma_T = \frac{1}{\delta_4^2} D^{\log_{1/p_g} D + 5}\gamma_T^k\left(2 \log_{1/p_g} D + 2 \sqrt{\log_{1/p_g} D/\delta_3 + 1}\right)$ where $c_b$ is some constant dependent on $\delta_5$.
\end{customthm}

\begin{proof}
The proof is a consequence of the helper lemmas and theorems we have proved. First we consider Phase 1 of \hagpucb where $t \leq T_0$. By Theorem \ref{thm:thm1}, at most $T_0C_1 = C_1^2$ queries will be made during Phase 1, and Lemma \ref{lem:maxgpfunc} indicates the maximum regret for any query. Consulting the respective Theorem and Lemma, we are able to bound the cumulative regret during Phase 1 by:

\begin{equation*}
    2C_1^2 c_b \sqrt{D \log \delta_5^{-1}} = {\mathcal{O}}(D^{4.5} \log^2 D).
\end{equation*}

Considering Phase 2, we utilize Lemma \ref{lem:maxcliques}, Lemma \ref{lem:numcliques}, Lemma \ref{lem:mutinfsum} to bound the mutual information of the sampled kernel with high probability. The number of cliques is given by:

\begin{equation*}
    \frac{1}{\delta_4}\sqrt{D^{\log_{1/p_g}D+5}}.
\end{equation*}

The size of the largest clique is given by:

\begin{equation*}
    2\log_{1/p_g}D+2\sqrt{\log_{1/p_g} D / \delta_3 + 1} \leq 2(\log_{1/p_g}D + \log_{1/p_g} D / \delta_3 + 1) \leq 4 \log_{1/p_g} D / \delta_3 + 2
\end{equation*}

Following Lemma \ref{lem:mutinfsum}, we see that

\begin{equation*}
    \gamma_T^{\sum_{i} k_i} \leq M^2 \max {[\gamma_T^{k_i}]_{i=1,\ldots,M}} \leq \frac{1}{\delta_4^2} D^{\log_{1/p_g}D+5} \gamma_T^k(4 \log_{1/p_g} D / \delta_3 + 2).
\end{equation*}

Let $c_\gamma = 4 \log_{\frac{1}{p_g}} {1 / \delta_3} + 2$ which yields the following information on the mutual information:

\begin{equation*}
    \gamma_T^{\sum_{i} k_i} \leq D^{\log_{1/p_g}D+5} \gamma_T^k(4 \log_{1/p_g} D + c_\gamma). 
\end{equation*}

The proof is complete by leveraging the connection between mutual information and cumulative regret as shown by \cite{srinivas2009gaussian} where $\widetilde{{\mathcal{O}}}$ is the same as ${\mathcal{O}}$ with the $\log$ factors suppressed.
\end{proof}

\newpage

\section{On the Surrogate Hessian, $\mathbf{H}_{\pi}$}
\label{app:hessremark}
In Section \ref{sec:habo} we remarked that although we cannot observe $\mathbf{H}_{v}$, we can observe a surrogate Hessian, $\mathbf{H}_{\pi}$ which is related to $\mathbf{H}_{v}$ by the chain rule. We justify our choice here with showing how $\mathbf{H}_{\pi}$ is an important sub-component of $\mathbf{H}_{v}$~\citep{skorski2019chain}. Although the reasoning we give is in one dimension, an analogous argument can be made in arbitrary dimensions using the chain rule for vector-valued functions yielding the Hessian tensor~\citep{vixrapaper}. We have $v: {\Theta} \rightarrow \mathbb{R}$ is a function of the policy $\pi$ and can be expressed as a composition of functions:

\begin{equation}
\label{eq:altvdefn}
    v: {\Theta} \rightarrow \mathbb{R} = \hat{v} \left( \pi\left( {\theta} \right) \right).
\end{equation}

In the above we use $\pi\left({\theta}\right)$ as shorthand for $\pi\left(  \mathbf{s^{\alpha}}, \mathbf{a^{\alpha}} ; {\theta} \right)$ with $\hat{v}$ representing some unknown function. Using the definition of the Hessian we have: 
\begin{equation*}
    \mathbf{H}_{v} \triangleq \left[ \frac{\partial^2 v}{\partial {{\theta}}^a \partial {{\theta}}^b} \right]_{a,b=1,\ldots,D} = \left[ \frac{\partial^2 }{\partial {{\theta}}^a \partial {{\theta}}^b} \,\, \hat{v} \left( \pi\left( {\theta} \right) \right)\right]_{a,b=1,\ldots,D}
\end{equation*}

Where the above identity follows from the definition of $v$ in Eq. \ref{eq:altvdefn}. We can now apply chain rule to express:

\begin{equation}
\label{eq:hvchainrule}
\frac{\partial^2 }{\partial {{\theta}}^a \partial {{\theta}}^b} \,\, \hat{v} \left( \pi\left( {\theta} \right) \right) = \underbrace{ \left[ \mathbf{H}_{\hat{v}}(\pi(\theta)) \frac{\partial \pi}{\partial \theta^a}(\theta) \right] \cdot \frac{\partial \pi}{\partial \theta^b}(\theta)}_{r({\theta})}+ \underbrace{\frac{\partial^2 \pi}{\partial {{\theta}}^a \partial {{\theta}}^b} \left( {\theta} \right)}_{\mathbf{H}_{\pi} \left({\theta}\right)}\cdot \underbrace{\nabla \hat{v} (\pi({\theta}))}_{g({\theta})}
\end{equation}

As we see in the above as a consequence of the chain rule, $\frac{\partial^2 \pi}{\partial {{\theta}}^a \partial {{\theta}}^b}$ forms an important sub-component $\frac{\partial^2 v}{\partial {{\theta}}^a \partial {{\theta}}^b}$. Given the above, we can simplify the above in the following manner:

\begin{equation*}
\mathbf{H}_{v} = r + \mathbf{H}_{\pi} \circ g
\end{equation*}

where $r, g$, and  $\mathbf{H}_{\pi}$ arise from the corresponding highlighted terms in Eq. \ref{eq:hvchainrule} with $r$ representing some unknown remainder term and $\circ$ representing the Hadamard product. Given the above, it is straightforward to see how $\mathbf{H}_{\pi}$ serves as a surrogate Hessian for $\mathbf{H}_{v}$. Indeed if $r \neq - \mathbf{H}_{\pi} \circ g$ and $g$ has no zero entries then $\mathbf{H}_{\pi} \neq 0 \implies \mathbf{H}_{v} \neq 0$. In our use case, we are most concerned with non-zero entries in the Hessian, $\mathbf{H}_{v}$, and the surrogate Hessian, $\mathbf{H}_{\pi}$ is well served for determining $\mathbf{H}_{v} \neq 0$ due to the above.

Since $\pi\left({\theta}\right)$ is shorthand for $\pi\left(  \mathbf{s^{\alpha}}, \mathbf{a^{\alpha}} ; {\theta} \right)$, to approximate $\mathbf{H}_{\pi}$ we average $\mathbf{H}_{\pi\left(\mathbf{s^{\alpha}}, \mathbf{a^{\alpha}} ; {\theta} \right)}$ over state action pairs, $\left( \mathbf{s^{\alpha}}, \mathbf{a^{\alpha}} \right)$ formed through interaction of the policy with the unknown task environment.

A possible avenue of overcoming this limitation is considering Hessian estimation through zero'th order queries. Several works along this direction have recently appeared using Finite Differences~\citep{fdpaper}, as well as Gaussian Processes~\citep{gibo}. We consider removing this dependency on the surrogate Hessian for future work.

\newpage

\section{Drone Delivery Task}
\label{sec:dddetails}

\textcolor{black}{Our drone delivery task was inspired by recent research work in studying unique problems in drone delivery vehicle routing problems~\citep{dorlinghmm17}.}

Drones fly from delivery point to delivery point where completing a delivery gives a large amount of reward, but running out of fuel and collisions give a small amount of negative reward. After completing a delivery, the delivery point is randomly removed within the environment. A collision gives a small amount of negative reward and momentarily stops the drone. Completing a delivery refills the drone fuel and allows it to continue to make more deliveries. The amount of reward given increases quadratically with the distance of the delivery to highly reward long distance deliveries which require long term planning. To compound this requirement for long term planning, fuel consumption also dramatically increases at high velocities to encourage long-term fuel efficiency planning. In this complex scenario requiring long term planning, \rlCustom approaches can easily fall into local minima of completing short distance, low reward deliveries and fail to sufficiently explore (under sparse reward) policies which complete long distance deliveries with careful planning.

Implementation code of this task can be found in supplementary materials.

\newpage

\section{Replot With Timesteps}
\label{sec:replots}

\begin{figure*}
    \centering
    \includegraphics[width=1.0\linewidth]{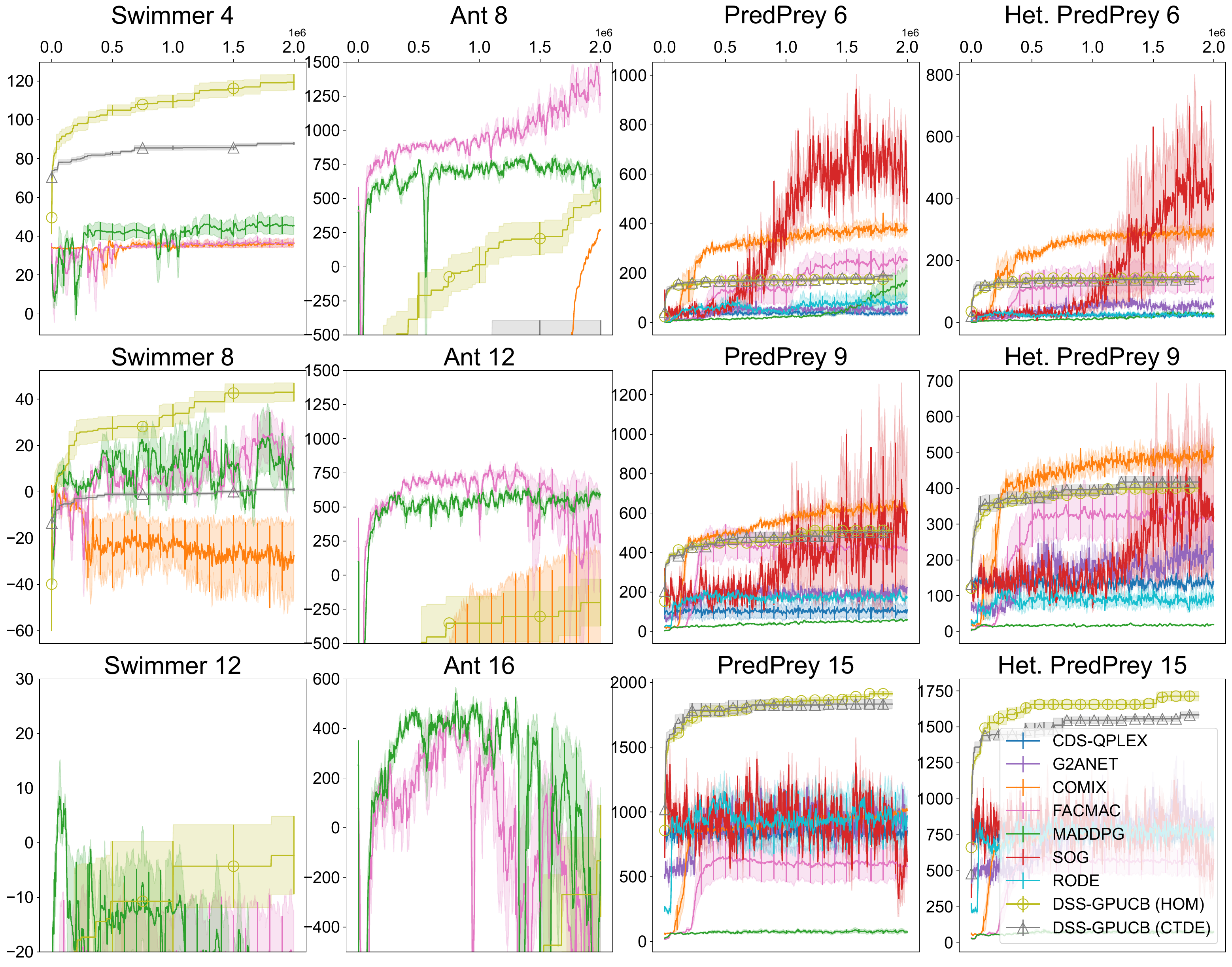}
    \caption{Comparison with \marl approaches with varying number of agents.}
    \label{fig:marlcompareappndreplot}
\end{figure*}

\begin{figure*}
    \centering
    \includegraphics[width=1.0\linewidth]{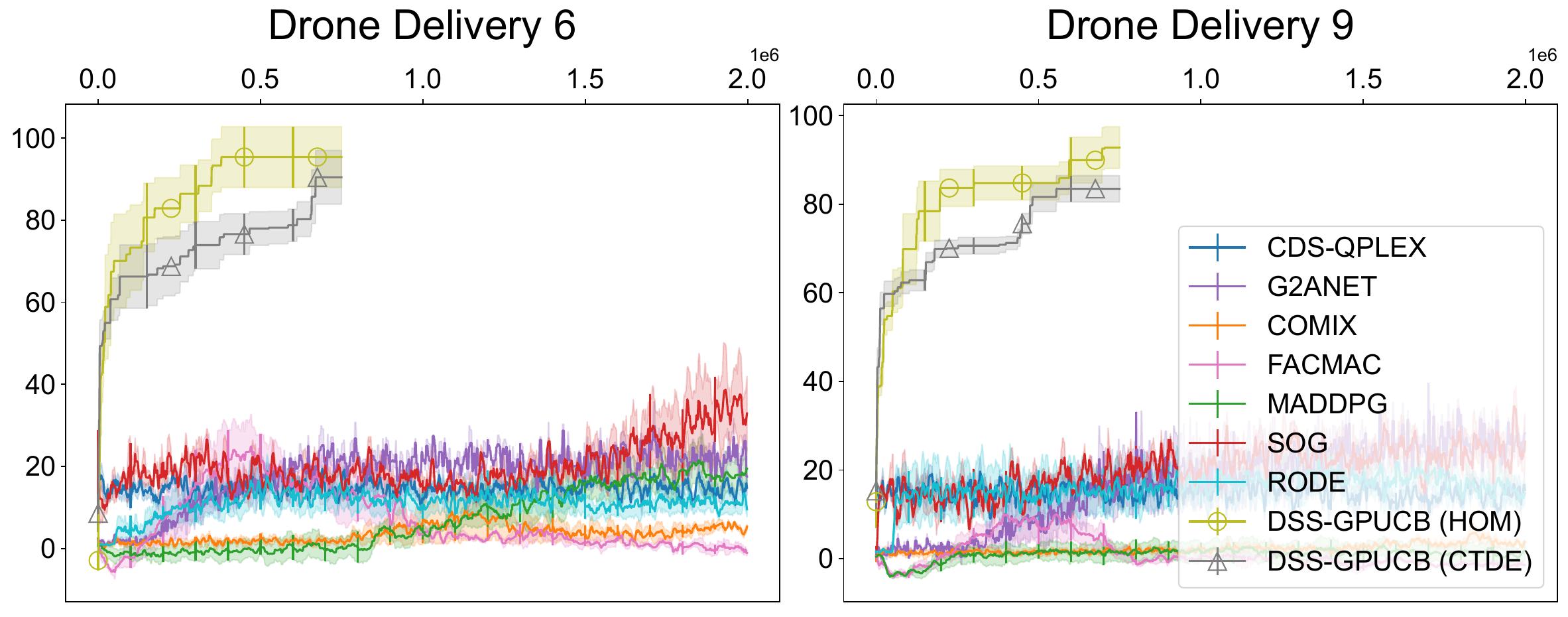}
    \caption{Comparison with \marl approaches on the drone delivery task.}
    \label{fig:dronedeliveryreplot}
\end{figure*}

We replot the relevant figures in Fig. \ref{fig:marlcompareappndreplot} and Fig. \ref{fig:dronedeliveryreplot}  while maintaining total environment interactions as the singular independent variable. We note that there is no significant change to our conclusions as a consequence of this replotting. We also highlight that although total environment interactions is considered the important independent variable in \rlCustom and \marl, in \bo typically the total evaluated policies is considered the more important independent variable as each evaluation is assumed to be costly.

\newpage

\section{Replot With ``best found policy so far'' in RL}
\label{sec:replotsbestfoundpolicy}

\begin{figure*}
    \centering
    \includegraphics[width=1.0\linewidth]{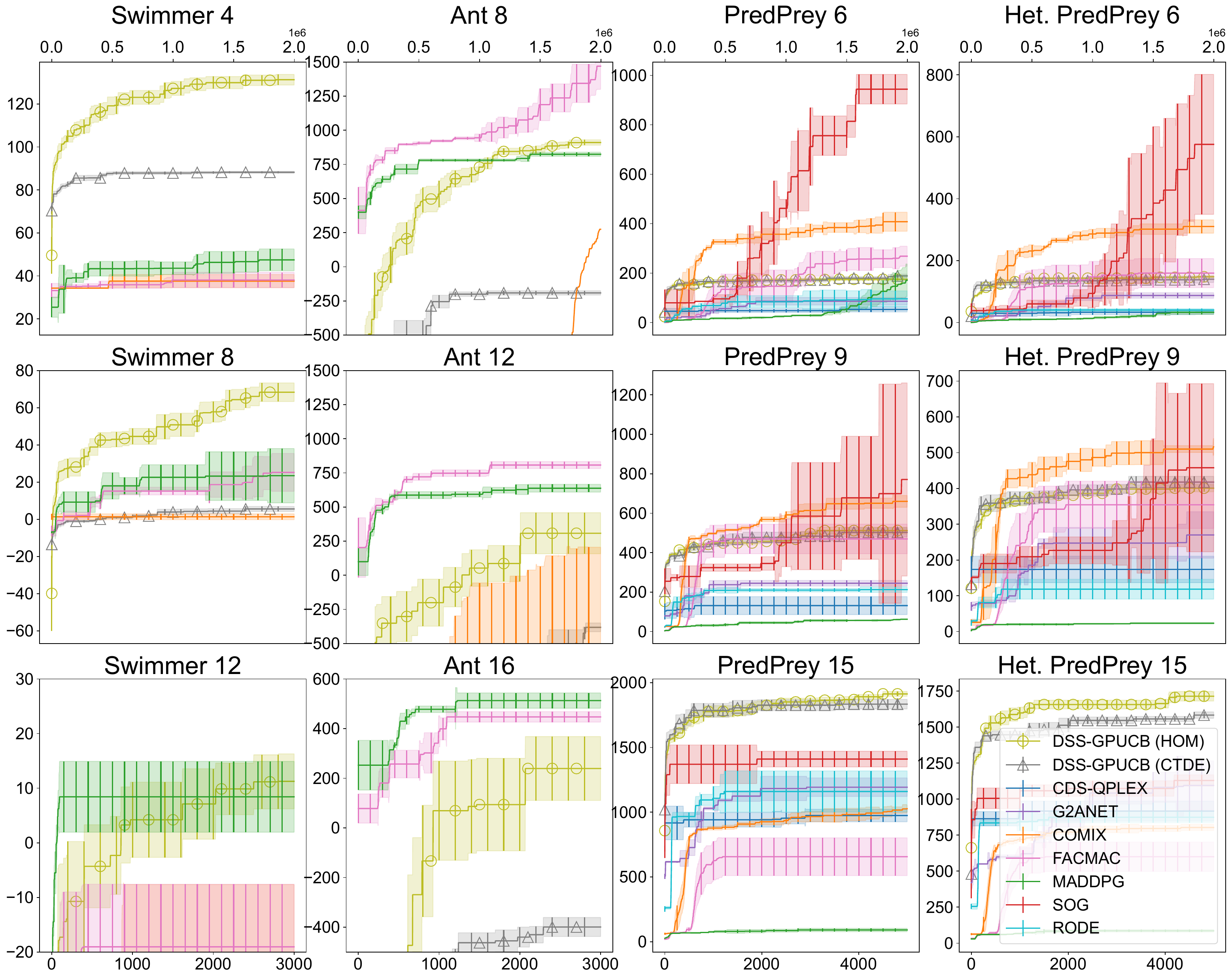}
    \caption{Comparison with \marl approaches with varying number of agents.}
    \label{fig:marlcompareappndreplot}
\end{figure*}

\begin{figure*}
    \centering
    \includegraphics[width=1.0\linewidth]{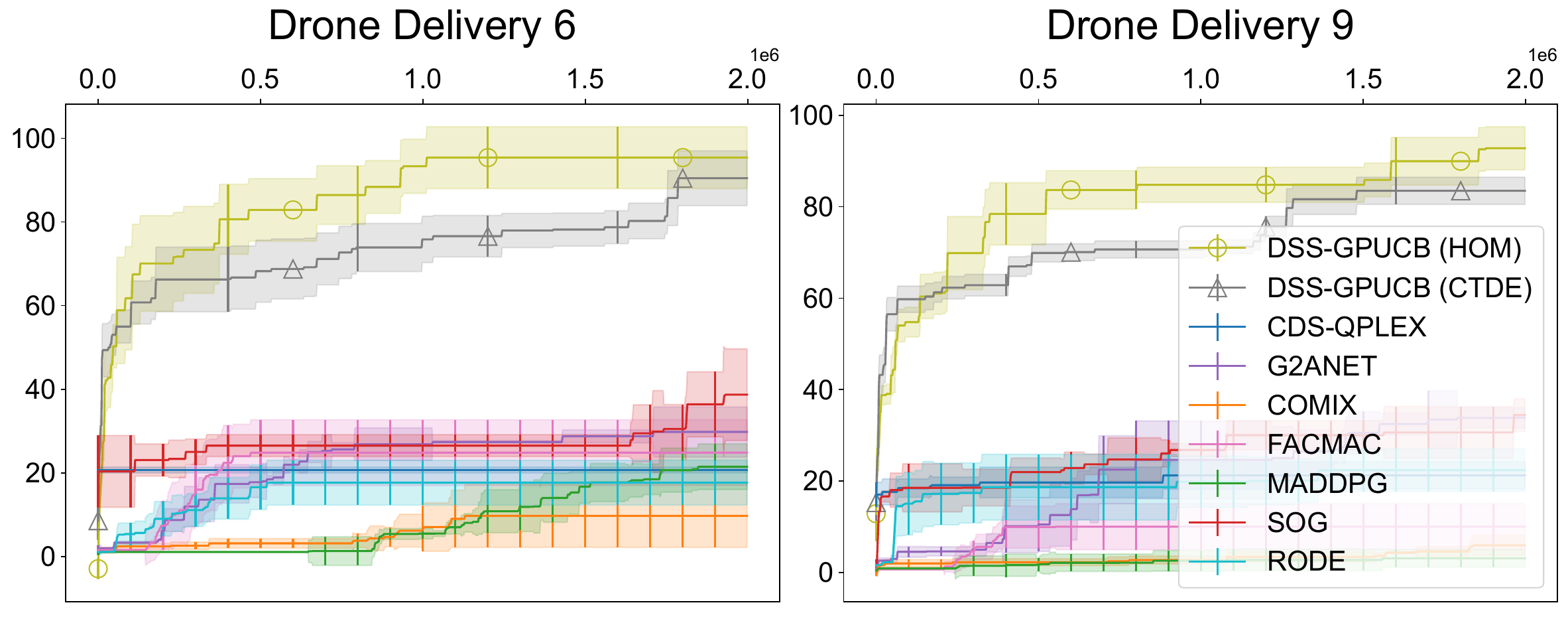}
    \caption{Comparison with \marl approaches on the drone delivery task.}
    \label{fig:dronedeliveryreplot}
\end{figure*}

\textcolor{black}{We replot the relevant figures in Fig. \ref{fig:marlcompareappndreplot} and Fig. \ref{fig:dronedeliveryreplot} where both \bo and \marl approaches show the value of the ``best found policy so far." We note that there is no significant change to our conclusions as a consequence of this replotting.}

\newpage

\section{Tables With ``best found policy so far'' in RL}
\label{sec:retablebestfoundpolicy}

\textcolor{black}{We generate new tables investigating \rlCustom and \marl under sparse or malformed reward. In Table \ref{rltablefull_2} and \ref{marltablefull_2} we show the value of the best found policy during the training process for \rlCustom and \marl. Our observations and conclusions remain the same where \rlCustom and \marl performance severely degrades under sparse and malformed reward and is often outperformed by our \hagpucb approach.}

\begin{landscape}
\setlength{\tabcolsep}{3pt}
\begin{table*}
\caption{\rlCustom under sparse reward. Sparse $n$ refers to sparse reward. Delay $n$ refers to delayed reward. Averaged over 5 runs. Parenthesis indicate standard error.}
\vskip 0.05in
\label{rltablefull_2}
\rstretch{0.9}
\resizebox{\linewidth}{!}{%
\begin{tabular}{@{}rccccccccccccccccccccccc@{}}\toprule
& \multicolumn{5}{c}{Ant-v3}  & & \multicolumn{5}{c}{Hopper-v3}  & & \multicolumn{5}{c}{Swimmer-v3} & & \multicolumn{5}{c}{Walker2d-v3} \\
\cmidrule{2-6} \cmidrule{8-12} \cmidrule{14-18} \cmidrule{20-24}
& DDPG & PPO & SAC & TD3 & Intrinsic & & DDPG & PPO & SAC & TD3 & Intrinsic & & DDPG & PPO & SAC & TD3 & Intrinsic & & DDPG & PPO & SAC & TD3 & Intrinsic \\
\cmidrule{2-24}
Baseline & $735.19 (12.94)$ & $1224.60 (31.01)$ & $2758.40 (165.17)$ & $2789.28 (148.96)$ & $2560.00 (91.35)$ & & $1126.75 (54.88)$ & $2315.99 (215.04)$ & $3461.47 (55.22)$ & $2281.42 (502.82)$ & $2302.00 (115.32)$ & & $67.28 (11.09)$ & $123.90 (4.21)$ & $72.93 (11.24)$ & $50.64 (0.26)$ & $2524.00 (107.80)$ & & $2920.52 (233.23)$ & $1041.51 (250.84)$ & $4496.50 (62.78)$ & $1946.67 (468.29)$ & $2680.00 (81.63)$\\
Sparse $2$ & $636.88 (37.47)$ & $1181.21 (31.71)$ & $2656.32 (465.10)$ & $1479.48 (240.63)$ & $2474.00 (68.80)$ & & $1580.92 (221.75)$ & $2004.17 (277.09)$ & $3443.45 (51.74)$ & $1621.54 (652.16)$ & $2512.00 (166.09)$ & & $69.55 (14.27)$ & $101.49 (11.75)$ & $51.34 (3.58)$ & $50.07 (1.46)$ & $2137.00 (312.22)$ & & $1966.29 (325.89)$ & $1742.26 (528.65)$ & $1783.93 (702.32)$ & $2805.47 (487.28)$ & $2792.00 (146.95)$\\
Sparse $5$ & $515.01 (34.27)$ & $1128.62 (45.65)$ & $876.94 (4.66)$ & $922.45 (7.91)$ & $2434.00 (133.67)$ & & $1118.18 (37.46)$ & $2106.16 (421.55)$ & $3372.21 (67.27)$ & $2661.46 (595.17)$ & $2236.00 (58.83)$ & & $113.11 (30.07)$ & $70.30 (15.13)$ & $48.65 (1.66)$ & $54.82 (6.45)$ & $2232.00 (180.56)$ & & $1577.68 (338.12)$ & $884.58 (430.36)$ & $2055.79 (590.11)$ & $2990.79 (376.18)$ & $2516.00 (131.84)$\\
Sparse $20$ & $515.01 (34.27)$ & $657.98 (29.03)$ & $823.76 (10.79)$ & $865.98 (3.39)$ & $2260.00 (134.28)$ & & $1204.52 (131.32)$ & $2378.65 (197.54)$ & $2862.10 (354.24)$ & $1858.63 (645.13)$ & $1941.60 (395.32)$ & & $77.75 (14.17)$ & $56.77 (9.94)$ & $45.54 (0.97)$ & $52.02
(4.69)$ & $2430.00 (51.77)$ & & $1300.61 (83.97)$ & $405.58 (81.94)$ & $1354.90 (558.24)$ & $962.23 (195.65)$ & $2496.00 (193.11)$\\
Sparse $50$ & $515.01 (34.27)$ & $-15.60 (5.48)$ & $802.86 (14.53)$ & $855.26 (11.19)$ & $1487.20 (273.48)$ & & $1068.08 (13.54)$ & $1120.64 (209.65)$ & $1343.22 (490.56)$ & $569.26 (111.17)$ & $1208.40 (438.47)$ & & $62.89 (14.23)$ & $53.39 (7.82)$ & $44.60 (2.39)$ & $52.49 (5.12)$ & $1168.60 (347.72)$ & & $1150.83 (58.12)$ & $494.80 (79.40)$ & $561.26 (99.60)$ & $723.43 (179.75)$ & $1244.00 (336.40)$\\
Sparse $100$ & $515.01 (34.27)$ & $-320.51 (147.49)$ & $791.90 (17.81)$ & $858.21 (8.65)$ & $615.00 (69.76)$ & & $1068.30 (4.95)$ & $349.76 (58.74)$ & $275.72 (25.53)$ & $610.10 (149.87)$ & $425.20 (50.65)$ & & $43.90 (3.86)$ & $33.41 (4.94)$ & $39.77 (1.18)$ & $41.13 (2.61)$ & $457.20 (74.06)$ & & $965.61 (49.66)$ & $236.56 (19.79)$ & $341.90 (113.31)$ & $276.49 (49.34)$ & $499.20 (79.82)$\\
Sparse $200$ & $515.01 (34.27)$ & $-605.62 (371.26)$ & $734.80 (37.97)$ & $855.26 (11.19)$ & $275.60 (44.18)$ & & $1070.79 (9.05)$ & $231.51 (28.29)$ & $331.94 (30.02)$ & $446.48 (127.60)$ & $378.40 (63.95)$ & & $57.84 (11.71)$ & $29.18 (1.40)$ & $41.58 (5.45)$ & $50.91 (6.31)$ & $360.60 (10.33)$ & & $931.29 (64.05)$ & $213.46 (19.25)$ & $553.02 (117.77)$ & $232.43 (36.01)$ & $391.00 (51.74)$\\
Sparse XL $100$ & $636.74 (17.15)$ & $-149.05 (22.88)$ & $767.14 (16.06)$ & $950.98 (3.26)$ & $612.00 (152.71)$ & & $1274.85 (145.43)$ & $522.46 (73.13)$ & $745.89 (143.22)$ & $485.85 (191.12)$ & $840.20 (92.72)$ & & $55.00 (3.96)$ & $48.31 (4.70)$ & $67.36 (8.05)$ & $61.08 (12.17)$ & $600.80 (123.76)$ & & $1064.29 (15.42)$ & $242.14 (27.16)$ & $543.12 (87.70)$ & $340.95 (126.72)$ & $664.80 (65.77)$\\
Sparse XL $200$ & $636.74 (17.15)$ & $-207.31 (20.43)$ & $735.49 (21.74)$ & $950.93 (3.07)$ & $426.60 (38.49)$ & & $1232.07 (88.70)$ & $521.58 (15.14)$ & $494.12 (121.20)$ & $243.78 (73.73)$ & $381.20 (16.22)$ & & $45.12 (2.27)$ & $38.37 (2.50)$ & $56.41 (9.45)$ & $57.03 (4.22)$ & $475.60 (13.04)$ & & $1082.17 (19.44)$ & $211.45 (31.80)$ & $916.00 (41.96)$ & $314.68 (30.21)$ & $549.60 (116.24)$\\
Lag $1$ & $678.29 (20.14)$ & $1185.23 (50.81)$ & $2472.69 (394.47)$ & $2791.95 (183.88)$ & $2756.00 (42.58)$ & & $1363.76 (270.35)$ & $2332.61 (277.97)$ & $3070.40 (384.04)$ & $2968.75 (348.30)$ & $2482.00 (86.46)$ & & $92.38 (18.71)$ & $112.32 (12.11)$ & $71.84 (18.38)$ & $50.45 (0.81)$ & $2466.00 (75.66)$ & & $2759.46 (297.35)$ & $1281.11 (354.59)$ & $3763.69 (710.79)$ & $3973.95 (121.84)$ & $2488.00 (102.11)$\\
Lag $5$ & $515.01 (34.27)$ & $925.87 (29.70)$ & $907.16 (8.80)$ & $912.37 (7.18)$ & $2352.60 (333.51)$ & & $1062.45 (2.56)$ & $2215.38 (320.02)$ & $3390.11 (62.47)$ & $3404.84 (53.10)$ & $2482.00 (109.96)$ & & $91.55 (17.14)$ & $89.87 (14.44)$ & $59.44 (4.63)$ & $51.26 (4.11)$
& $2414.00 (183.10)$ & & $1836.65 (285.47)$ & $1150.46 (409.99)$ & $4027.20 (470.29)$ & $3429.69 (353.47)$ & $2310.00 (102.33)$\\
Lag $20$ & $515.01 (34.27)$ & $246.04 (68.10)$ & $858.72 (11.61)$ & $895.80 (4.28)$ & $2220.00 (148.22)$ & & $896.93 (149.20)$ & $2278.98 (287.09)$ & $3512.11 (18.23)$ & $2251.41 (572.46)$ & $2302.00 (100.18)$ & & $58.04 (3.18)$ & $61.09 (5.22)$ & $48.65 (4.03)$ & $50.73 (4.15)$ & $2528.00 (60.49)$ & & $1168.09 (83.23)$ & $1058.63 (237.01)$ & $2139.30 (644.12)$ & $2422.67 (647.97)$ & $2220.00 (99.32)$\\
Lag $50$ & $515.01 (34.27)$ & $-47.10 (13.04)$ & $857.88 (11.48)$ & $886.99 (11.81)$ & $1285.60 (308.68)$ & & $1062.32 (5.64)$ & $968.13 (262.18)$ & $619.88 (118.08)$ & $795.28 (104.06)$ & $1394.60 (260.90)$ & & $104.00 (16.80)$ & $40.15 (3.55)$ & $80.56 (22.86)$ & $33.18 (3.89)$ & $1689.20 (290.96)$ & & $1302.88 (182.77)$ & $412.56 (78.02)$ & $658.04 (166.07)$ & $696.85 (165.26)$ & $1243.60 (217.52)$\\
Lag $100$ & $515.01 (34.27)$ & $-536.23 (312.30)$ & $818.49 (7.29)$ & $865.77 (7.49)$ & $273.20 (40.32)$ & & $1182.87 (112.11)$ & $228.23 (34.10)$ & $278.11 (35.84)$ & $371.62 (44.65)$ & $302.40 (38.33)$ & & $63.01 (6.61)$ & $36.83 (4.75)$ & $42.58 (1.82)$ & $44.88 (2.28)$ & $388.60 (56.65)$ & & $989.92 (66.76)$ & $222.44 (17.09)$ & $390.56 (95.75)$ & $224.57 (35.10)$ & $396.40 (50.45)$\\
Lag $200$ & $515.01 (34.27)$ & $-510.41 (279.94)$ & $743.10 (16.65)$ & $855.32 (11.23)$ & $421.60 (29.18)$ & & $1079.57 (21.63)$ & $282.80 (49.28)$ & $289.69 (24.11)$ & $578.05 (158.82)$ & $319.20 (33.32)$ & & $61.89 (13.75)$ & $27.51 (1.52)$ & $34.54 (5.75)$ & $55.55 (8.01)$ & $359.60 (44.53)$ & & $902.89 (112.65)$ & $199.60 (7.11)$ & $416.00 (137.76)$ & $171.80 (21.82)$ & $324.60 (31.72)$\\
Lag XL $100$ & $636.74 (17.15)$ & $-70.24 (30.12)$ & $707.49 (11.16)$ & $953.04 (3.31)$ & $507.00 (54.07)$ & & $779.16 (269.88)$ & $640.55 (12.56)$ & $327.03 (6.84)$ & $241.02 (66.35)$ & $489.80 (40.90)$ & & $54.91 (6.36)$ & $42.82 (2.41)$ & $76.54 (10.58)$ & $50.88 (13.32)$ &
$536.80 (34.29)$ & & $1082.21 (26.81)$ & $235.37 (34.48)$ & $669.73 (88.18)$ & $315.20 (73.78)$ & $540.40 (36.24)$\\
Lag XL $200$ & $636.74 (17.15)$ & $-207.36 (23.19)$ & $685.66 (20.04)$ & $953.04 (3.31)$ & $503.80 (21.51)$ & & $711.02 (246.17)$ & $517.26 (15.69)$ & $360.07 (11.12)$ & $359.04 (160.28)$ & $404.00 (20.62)$ & & $49.14 (2.60)$ & $36.33 (3.32)$ & $48.74 (3.04)$ & $30.97 (5.69)$ & $385.80 (56.44)$ & & $1098.09 (29.28)$ & $229.31 (25.57)$ & $507.49 (110.36)$ & $360.45 (160.82)$ & $390.60 (51.72)$\\
\cmidrule{2-24}
\hagpucb & \multicolumn{5}{c}{$1147.21(36.88)$}  & & \multicolumn{5}{c}{$1009.3(17.95)$}  & & \multicolumn{5}{c}{$175.73(15.54)$} & & \multicolumn{5}{c}{$1008.90(11.95)$}\\
\bottomrule
\end{tabular}}
\end{table*}
\setlength{\tabcolsep}{3pt}
\begin{table*}
\caption{\marl under sparse reward. Sparse $n$ refers to sparse reward. Delay $n$ refers to delayed reward. Averaged over 5 runs. Parenthesis indicate standard error.}
\vskip 0.05in
\label{marltablefull_2}
\centering
\rstretch{0.9}
\resizebox{\linewidth}{!}{%
\begin{tabular}{@{}rccccccccccccccccccc@{}}\toprule
& \multicolumn{4}{c}{Ant-v3}  & & \multicolumn{4}{c}{Hopper-v3}  & & \multicolumn{4}{c}{Swimmer-v3} & & \multicolumn{4}{c}{Walker2d-v3} \\
\cmidrule{2-5} \cmidrule{7-10} \cmidrule{12-15} \cmidrule{17-20}

& COVDN & COMIX & MADDPG & FACMAC & & COVDN & COMIX & MADDPG & FACMAC & & COVDN & COMIX & MADDPG & FACMAC & & COVDN & COMIX & MADDPG & FACMAC \\
\cmidrule{2-20}

Baseline & $993.37 (7.67)$ & $987.34 (5.56)$ & $1158.81 (103.49)$ & $1178.14 (13.13)$ & & $707.18 (254.76)$ & $47.81 (2.53)$ & $726.56 (240.11)$ & $690.00 (265.66)$ & & $22.72 (0.80)$ & $27.23 (1.36)$ & $23.30 (1.48)$ & $21.56 (0.96)$ & & $471.99 (137.26)$ & $534.00 (160.71)$ & $439.98 (9.61)$ & $730.93 (166.76)$\\
Sparse $5$ & $910.63 (28.97)$ & $913.24 (3.53)$ & $966.24 (3.45)$ & $985.03 (5.57)$ & & $909.67 (75.78)$ & $751.98 (215.42)$ & $710.89 (253.17)$ & $39.28 (0.00)$ & & $28.22 (3.82)$ & $23.41 (0.67)$ & $22.05 (0.53)$ & $22.41 (0.55)$ & & $371.49 (56.22)$ & $737.90 (90.51)$ & $307.31 (10.45)$ & $664.39 (274.71)$\\
Sparse $50$ & $357.51 (214.37)$ & $170.64 (188.91)$ & $910.93 (19.72)$ & $901.54 (25.12)$ & & $999.65 (3.65)$ & $413.24 (217.82)$ & $365.07 (265.90)$ & $358.56 (260.68)$ & & $21.15 (0.65)$ & $23.22 (0.26)$ & $22.18 (0.75)$ & $26.03 (2.57)$ & & $286.90 (12.19)$ & $312.98 (21.99)$ & $409.52 (49.06)$ & $977.40 (24.51)$\\
Sparse $100$ & $488.37 (179.55)$ & $-61.31 (5.82)$ & $896.47 (39.15)$ & $899.15 (22.66)$ & & $1010.32 (4.35)$ & $413.31 (243.48)$ & $219.10 (131.83)$ & $102.83 (36.52)$ & & $20.28 (1.79)$ & $21.58 (0.46)$ & $27.40 (4.19)$ & $22.11 (0.21)$ & & $508.62 (191.08)$ & $309.26 (10.43)$ & $263.82 (15.56)$ & $731.75 (176.06)$\\
Sparse $200$ & $291.94 (196.13)$ & $-12.34 (24.74)$ & $905.87 (17.07)$ & $870.28 (52.66)$ & & $1019.50 (1.30)$ & $755.29 (215.79)$ & $366.28 (267.00)$ & $406.36 (251.71)$ & & $27.66 (3.93)$ & $41.14 (17.35)$ & $23.24 (0.37)$ & $24.80 (2.03)$ & & $293.90 (25.71)$ & $327.03 (13.54)$ & $389.88 (119.13)$ & $401.46 (246.66)$\\
Sparse XL $100$ & $862.70 (179.55)$ & $547.97 (5.82)$ & $961.81 (39.15)$ & $987.17 (22.66)$ & & $1013.17 (4.35)$ & $712.36 (243.48)$ & $69.06 (131.83)$ & $785.60 (36.52)$ & & $23.12 (1.79)$ & $23.46 (0.46)$ & $22.96 (4.19)$ & $22.57 (0.21)$ & & $340.11 (191.08)$ & $265.49 (10.43)$ & $260.02 (15.56)$ & $962.91 (176.06)$\\
Sparse XL $200$ & $518.74 (196.13)$ & $652.31 (24.74)$ & $974.26 (17.07)$ & $994.67 (52.66)$ & & $1019.21 (1.30)$ & $763.63 (215.79)$ & $77.31 (267.00)$ & $217.94 (251.71)$ & & $23.94 (3.93)$ & $39.78 (17.35)$ & $23.54 (0.37)$ & $28.39 (2.03)$ & & $249.29 (25.71)$ & $315.90 (13.54)$ & $271.30 (119.13)$ & $299.13 (246.66)$\\

Lag $1$ & $769.84 (7.67)$ & $515.87 (5.56)$ & $1002.32 (103.49)$ & $1077.82 (13.13)$ & & $250.85 (254.76)$ & $266.31 (2.53)$ & $404.23 (240.11)$ & $63.95 (265.66)$ & & $25.47 (0.80)$ & $28.36 (1.36)$ & $25.96 (1.48)$ & $26.09 (0.96)$ & & $657.45 (137.26)$ & $533.82 (160.71)$ & $434.24 (9.61)$ & $1076.71 (166.76)$\\
Lag $5$ & $444.94 (28.97)$ & $364.64 (3.53)$ & $962.40 (3.45)$ & $971.26 (5.57)$ & & $999.21 (75.78)$ & $266.06 (215.42)$ & $414.70 (253.17)$ & $1023.31 (0.00)$ & & $24.54 (3.82)$ & $31.22 (0.67)$ & $22.78 (0.53)$ & $27.26 (0.55)$ & & $277.99 (56.22)$ & $377.74 (90.51)$ & $391.49 (10.45)$ & $1013.72 (274.71)$\\
Lag $50$ & $260.69 (214.37)$ & $-50.01 (188.91)$ & $909.06 (19.72)$ & $937.04 (25.12)$ & & $1014.64 (3.65)$ & $85.18 (217.82)$ & $694.96 (265.90)$ & $383.68 (260.68)$ & & $20.16 (0.65)$ & $22.67 (0.26)$ & $26.55 (0.75)$ & $25.34 (2.57)$ & & $258.22 (12.19)$ & $337.57 (21.99)$ & $499.60 (49.06)$ & $929.80 (24.51)$\\
Lag $100$ & $414.25 (179.55)$ & $226.27 (5.82)$ & $904.20 (39.15)$ & $929.69 (22.66)$ & & $1008.45 (4.35)$ & $478.59 (243.48)$ & $377.17 (131.83)$ & $694.69 (36.52)$ & & $22.38 (1.79)$ & $23.64 (0.46)$ & $22.38 (4.19)$ & $24.72 (0.21)$ & & $276.62 (191.08)$ & $287.94 (10.43)$ & $204.96 (15.56)$ & $501.91 (176.06)$\\
Lag $200$ & $510.90 (196.13)$ & $111.36 (24.74)$ & $927.35 (17.07)$ & $896.37 (52.66)$ & & $1007.96 (1.30)$ & $763.23 (215.79)$ & $61.20 (267.00)$ & $95.84 (251.71)$ & & $23.88 (3.93)$ & $23.95 (17.35)$ & $33.38 (0.37)$ & $33.18 (2.03)$ & & $303.89 (25.71)$ & $310.16 (13.54)$ & $481.03 (119.13)$ & $1021.91 (246.66)$\\
Lag XL $100$ & $919.13 (179.55)$ & $603.48 (5.82)$ & $980.67 (39.15)$ & $982.36 (22.66)$ & & $1006.33 (4.35)$ & $800.86 (243.48)$ & $694.84 (131.83)$ & $401.55 (36.52)$ & & $26.16 (1.79)$ & $22.59 (0.46)$ & $22.03 (4.19)$ & $30.44 (0.21)$ & & $290.60 (191.08)$ & $282.74 (10.43)$ & $463.22 (15.56)$ & $757.86 (176.06)$\\
Lag XL $200$ & $787.95 (196.13)$ & $585.44 (24.74)$ & $993.04 (17.07)$ & $977.40 (52.66)$ & & $979.42 (1.30)$ & $440.34 (215.79)$ & $115.25 (267.00)$ & $361.25 (251.71)$ & & $42.00 (3.93)$ & $47.40 (17.35)$ & $31.36 (0.37)$ & $24.24 (2.03)$ & & $175.67 (25.71)$ & $313.19 (13.54)$ & $364.66 (119.13)$ & $1006.25 (246.66)$\\
\cmidrule{2-20}
\hagpucb (CTDE) & \multicolumn{4}{c}{$988.44(3.39)$}  & & \multicolumn{4}{c}{$213.50(22.26)$}  & & \multicolumn{4}{c}{$31.95(1.3)$} & & \multicolumn{4}{c}{$397.68(14.42)$}\\
\bottomrule
\end{tabular}}
\end{table*}
\end{landscape}

\end{document}